\acrodef{3d}[3D]{three-dimensional}
\acrodef{bs}[BS]{base station}
\acrodef{bp}[BP]{belief propagation}
\acrodef{d}[D]{dimensional}
\acrodef{ue}[UE]{user equipment}
\acrodef{los}[LOS]{line-of-sight}
\acrodef{aoa}[AoA]{angle-of-arrival}
\acrodef{aod}[AoD]{angle-of-departure}
\acrodef{cae}[CAE]{cumulative absolute error}
\acrodef{toa}[ToA]{time-of-arrival}
\acrodef{tdoa}[TDoA]{time-difference-of-arrival}
\acrodef{rfs}[RFS]{random finite set}
\acrodef{ris}[RIS]{reconfigurable intelligent surface}
\acrodef{tx}[Tx]{transmitter}
\acrodef{rx}[Rx]{receiver}
\acrodef{crb}[CRB]{Cram\'er-Rao lower bounds}
\acrodef{rss}[RSS]{received signal strength}
\acrodef{los}[LOS]{line-of-sight}
\acrodef{nlos}[NLOS]{non line-of-sight}
\acrodef{dft}[DFT]{discrete Fourier transform}
\acrodef{fft}[FFT]{fast Fourier transform}
\acrodef{fov}[FOV]{field-of-view}
\acrodef{fim}[FIM]{Fisher information matrix}
\acrodef{upa}[UPA]{uniform planar array}
\acrodef{peb}[PEB]{position error bound}
\acrodef{snr}[SNR]{signal-to-noise ratio}
\acrodef{sre}[SRE]{smart radio environment}
\acrodef{mimo}[MIMO]{multiple-input  multiple-output}
\acrodef{mtt}[MTT]{multiple-target tracking}
\acrodef{rfid}[RFID]{radio-frequency identification}
\acrodef{siso}[SISO]{single-input single-output}
\acrodef{miso}[MISO]{multiple-input single-output}
\acrodef{ici}[ICI]{inter-carrier interference}
\acrodef{iid}[iid]{independent and identically distributed}
\acrodef{ppp}[PPP]{Poisson point process}
\acrodef{phd}[PHD]{probability hypothesis density}
\acrodef{cphd}[CPHD]{cardinalized probability hypothesis density}
\acrodef{mb}[MB]{multi-Bernoulli}
\acrodef{mbm}[MBM]{multi-Bernoulli mixture}
\acrodef{pmb}[PMB]{Poisson multi-Bernoulli}
\acrodef{pmbm}[PMBM]{Poisson multi-Bernoulli mixture}
\acrodef{lmb}[LMB]{labeled multi-Bernoulli}
\acrodef{glmb}[$\delta$-GLMB]{$\delta$-generalized labeled multi-Bernoulli}
\acrodef{ml}[ML]{maximum likelihood}
\acrodef{kld}[KLD]{Kullback-Leibler divergence}
\acrodef{cdf}[CDF]{cumulative distribution function}
\acrodef{ofdm}[OFDM]{orthogonal frequency-division multiplexing}
\acrodef{pmf}[PMF]{probability mass function}
\acrodef{pdf}[PDF]{probability density function}
\acrodef{qos}[QoS]{Quality of Service}
\acrodef{sp}[SP]{scattering point}
\acrodef{ula}[ULA]{uniform linear array}
\acrodef{va}[VA]{virtual anchor}
\acrodef{slam}[SLAM]{simultaneous localization and mapping}
\acrodef{slat}[SLAT]{simultaneous localization and tracking}
\acrodef{gospa}[GOSPA]{generalized optimal subpattern assignment}
\acrodef{rmse}[RMSE]{root mean square error}
\newcommand{\appropto}{\mathrel{\vcenter{
  \offinterlineskip\halign{\hfil$##$\cr
    \propto\cr\noalign{\kern2pt}\sim\cr\noalign{\kern-2pt}}}}}
\newcommand{\beginsupplement}{%
        \setcounter{table}{0}
        \renewcommand{\thetable}{S\arabic{table}}%
        \setcounter{figure}{0}
        \renewcommand{\thefigure}{S\arabic{figure}}%
       \setcounter{equation}{0} \def\theequation{S\arabic{equation}}
     }
\newtheorem{lemma}{Lemma}
\newtheorem{prop}{Proposition}
\newtheorem{rem}{Remark}
\newtheorem{define}{Definition}
\newtheorem{theorem}{Theorem}
\newtheorem{corollary}{Corollary}
\newtheorem{example}{Example}
\renewcommand*\nompreamble{\begin{multicols}{3}}
\renewcommand*\nompostamble{\end{multicols}}
\newcolumntype{P}[1]{>{\centering\arraybackslash}m{#1}}
\newcolumntype{Q}[1]{>{\arraybackslash}m{#1}}
\pgfplotsset{compat=1.18}
\begin{document}
\title{Set-Type Belief Propagation with Applications to Poisson Multi-Bernoulli SLAM}
\author{
    \IEEEauthorblockN{Hyowon Kim, \IEEEmembership{Member, IEEE}, {\'{A}}ngel F. Garc{\'{i}}a-Fern{\'{a}}ndez, 
    Yu Ge, \IEEEmembership{Student Member, IEEE},\\
    Yuxuan Xia, \IEEEmembership{Member, IEEE},
    Lennart Svensson, \IEEEmembership{Senior Member, IEEE},
    and Henk Wymeersch, 
    \IEEEmembership{Fellow, IEEE}
    }

    \thanks{H. Kim is with the Department of Electronics Engineering, Chungnam National University, 34134 Daejeon, South Korea (email: hyowon.kim@cnu.ac.kr).}
    \thanks{{\'{A}}. F. Garc{\'{i}}a-Fern{\'{a}}ndez is with the Department of Electrical Engineering and Electronics, University of Liverpool, Liverpool L69 3GJ, United Kingdom, and also with the ARIES Research Centre, Universidad Antonio de Nebrija,  Madrid, Spain (e-mail: angel.garcia-fernandez@liverpool.ac.uk).}    
    \thanks{Y. Ge, L. Svensson, and H. Wymeersch are with the Department of Electrical Engineering, Chalmers University of Technology, 412 58 Gothenburg, Sweden (email: 
    \{yuge; lennart.svensson; and henkw\}@chalmers.se).}
    \thanks{Y. Xia is with the Department of Electrical Engineering, Link\"{o}ping University, 581 83 Link\"{o}ping, Sweden (email: yuxuan.xia@liu.se).}
    
}

\maketitle
\begin{abstract}
    \Ac{bp} is a useful probabilistic inference algorithm for efficiently computing approximate marginal probability densities of random variables.
    However, in its standard form, \ac{bp} is only applicable to the vector-type random variables with a fixed and known number of vector elements, while certain applications rely on 
    \acp{rfs} with an unknown number of vector elements. 
    In this paper, we develop \ac{bp} rules for factor graphs defined on sequences of 
    \acp{rfs} where each 
    \ac{rfs} has an unknown number of elements, with the intention of 
    deriving novel
    inference methods for \acp{rfs}.
    Furthermore, we show that vector-type BP is a special case of set-type BP, where each \ac{rfs} follows the Bernoulli process.
    To demonstrate the validity of developed set-type \ac{bp}, we apply it to the \ac{pmb} filter
    for \ac{slam}, which naturally 
    leads to a set-type \ac{bp} \ac{pmb}-\ac{slam} method, which is analogous to a vector type \ac{slam} method, subject to minor modifications.
\end{abstract}

\begin{IEEEkeywords}
Belief propagation, multi-target tracking, Poisson multi-Bernoulli filter, random finite sets, simultaneous localization and mapping.
\end{IEEEkeywords}
\acresetall
\IEEEpeerreviewmaketitle

\section{Introduction}
    \Ac{bp} is a powerful statistical inference algorithm, applicable to different fields including signal processing, robotics, autonomous vehicles, image processing, and artificial intelligence.
    In particular, \ac{bp} is attractive for computing probability densities in Bayesian networks~\cite{Frank_SPA_TIT2001,Hans_SPA_SPM2004,Yedidia2005BetheFree,Henk_IRD_2007}, where one can efficiently compute the marginal probability densities of random variables of interest given the observed data.
    \Ac{bp}  
    has been actively applied in localization~\cite{Ihler_CL_2005,Henk_CL_Proc2009}, mapping~\cite{Callum_BPMapping_IROS2022}, \ac{mtt}~\cite{Florian_MTT_TSP2017,Florian_BP-MTT_Proc2018}, \ac{slam}~\cite{HenkGlobecom2018,HyowonAsilomar2018,Erik_BPSLAM_TWC2019,Erik_AOABPSLAM_ICC2019,Erik_BPSLAM_2022}, and \ac{slat}~\cite{Florian_NCO_TSIP2015}.
    Mapping and \ac{mtt} respectively involve mapping landmarks and tracking targets, based on the noisy sensor measurements.
    Simultaneously estimating the unknown sensor state as well as landmarks and targets leads to \ac{slam} and \ac{slat}, respectively.
    The main challenge in this research stems from the data association uncertainty between the unknown number of targets and imperfect measurements by missed detections
    and clutter. 
    To effectively
    address this challenge, several approaches have been developed as follows. 
     Classical \ac{slam} methods~\cite{FastSLAM,GraphSLAM,Durrant2006SLAM1,Durrant2006SLAM2,Randall_EKFSLAM} consider the data associations outside the Bayesian filter, while the filter operates on vector random variables. 
     With the introduction of 
     \acp{rfs} in~\cite{mahler_book_2007,Mahler_book2014}, a theoretically sound tool for modeling the unknown number of targets and for handling the data association between targets and measurements became available.
    By modeling targets with vector-type random variables, the marginal densities for targets are efficiently computed by \ac{bp}, relying on several ad-hoc modifications~\cite{Erik_BPSLAM_TWC2019,Erik_AOABPSLAM_ICC2019,Erik_BPSLAM_2022}.
    Between them, the \ac{rfs}- and \ac{bp}-based methods have rigorously handled the main challenge of \ac{mtt} and \ac{slam} by starting with the formulation of joint posterior density of targets and data association.
    There are several \ac{rfs}-based methods for \ac{mtt} and \ac{slam}. 
    The \ac{phd} filter \cite{mahler_AES_2003_PHD,mahler_book_2007,Mahler_book2014,Hyowon_TWC2020} propagates the first-order statistical moment of the RFS, which is its intensity. The \ac{cphd} filter propagates the intensity and the cardinality distribution \cite{mahler_book_2007,Mahler_book2014,Angle_CPHD_TSP2015}.
    A more accurate solution can be obtained with
    RFS filters based on multi-object conjugate priors, which are inherently closed under prediction and update~\cite{Jason_PMB_TAES2015,Angel_PMBM_TAES2018,Hyowon_MPMB_TVT2022,Markus_PMB_TIV2019}.
    The \ac{pmbm}~\cite{Angel_PMBM_TAES2018,Hyowon_MPMB_TVT2022} adopts the Poisson birth model and \ac{mbm}.
    For standard multi-object models with Poisson birth~\cite{mahler_book_2007}, the Bayesian optimal solutions to \ac{mtt} and \ac{slam} are given by the \ac{pmbm} conjugate prior~\cite{Jason_PMB_TAES2015,Hyowon_MPMB_TVT2022}.
    With the \ac{mb} birth model instead of Poisson, the conjugate prior is an \ac{mbm}~\cite{Angel_PMBM_TAES2018}. The \ac{mbm} filter corresponds to a \ac{pmbm} filter where the Poisson intensity equals to zero, and the birth Bernoulli components are added into the \ac{mbm} in the prediction step. If we expand the number of global hypotheses in the \ac{mbm} filter such that each Bernoulli has deterministic target existence, we obtain the MBM$_{01}$ filter~\cite{Angel_PMBM_TAES2018}. Both MBM and MBM$_{01}$ filters can be labeled without changing the filtering recursion. The MBM$_{01}$ filter recursion is analogous to the \ac{glmb} filtering recursion~\cite{Diluka_GLMB-SLAM_ICCAIS2018,Diluka_GLMB-SLAM_Sensor2019,Deusch2015,DeuschRD:2015,vo2013LMB}.
    Some \ac{rfs}-based methods rely on  \ac{bp} to compute marginal association probabilities~\cite{Jason_PMB_TAES2015}, which converts the \ac{pmbm}~\cite{Angel_PMBM_TAES2018,Hyowon_MPMB_TVT2022} to a \ac{pmb}~\cite{Jason_PMB_TAES2015,Yu_EK-PMBM_JSAC2021,Markus_PMB_TIV2019,Hyowon_MPMB_TVT2022} after every update, by marginalizing out the association variables.

    In addition to \ac{rfs}-based methods, vector-type \ac{bp} methods have also been applied to tracking and mapping problems. 
    In \cite{Florian_MTT_TSP2017,Florian_BP-MTT_Proc2018}, the \ac{mtt} problem is tackled by running vector-type \ac{bp} on the factor graph representation of the joint distribution of target states and data association variables.
    Here targets are modeled by augmented vectors including the binary target existence indicators, and message passing between targets and data association variables is adopted from~\cite{Florian_BP-MTT_Proc2018}. 
    The \ac{slam} problems with radio signal propagation are formulated in \cite{HenkGlobecom2018,HyowonAsilomar2018,Erik_BPSLAM_TWC2019,Erik_AOABPSLAM_ICC2019,Erik_BPSLAM_2022}.
    The authors of \cite{HenkGlobecom2018,HyowonAsilomar2018} formulate the factor graph and introduce an efficient message passing scheduling, among the unknown sensor variables consisting of position, orientation, and clock bias, and multiple-types of targets are further studied in \cite{HyowonAsilomar2018}.
    With the augmented target vectors and the factorized joint distribution of targets and data association variables from \cite{Florian_BP-MTT_Proc2018}, the message passing methods for joint \ac{slam} and data association are developed \cite{Erik_BPSLAM_TWC2019,Erik_AOABPSLAM_ICC2019,Erik_BPSLAM_2022}. 
    These vector-type \ac{bp}-based approaches \cite{Erik_BPSLAM_TWC2019,Erik_AOABPSLAM_ICC2019,Erik_BPSLAM_2022} can be explained as the track-oriented \ac{pmb} filter~\cite{Hyowon_MPMB_TVT2022} and yield competitive performance.
    To handle undetected targets and newly detected targets, the auxiliary \ac{phd} is adopted in~\cite{Erik_BPSLAM_TWC2019,Erik_AOABPSLAM_ICC2019,Erik_BPSLAM_2022}, which was initially considered in~\cite{Paul_TrackIniPHD_2011} for vector-type \ac{mtt} and in \cite{Jason_TrackIniPMB_2012} for set-type \ac{mtt}.
    The propagation of the auxiliary PHD filter is explicitly introduced in~\cite{Erik_AOABPSLAM_ICC2019}, but external to the factor graph representation.
    Therefore, undetected targets and their connections to newly detected targets are not represented in the formulated factor graph.
    The modeling of undetected targets in vector-based methods is treated in~\cite{Stefano_Undetected_TAES2014} but the distinction between eventually and never detected targets is required.
    Moreover, it is not explicitly revealed how to apply \ac{bp} in the multiple hypotheses tracking formalism.
    Hence, while \ac{bp}-based methods have benefits in algorithm implementation, they require certain ad-hoc modifications.
    
    
    In this paper, we aim to bridge the gap between \ac{rfs} theory and \ac{bp}, by developing a novel \ac{bp} algorithm, running on factor graphs defined on the sequence of \acp{rfs}. Like conventional \ac{bp}, this opens the door to automated inference over factor graphs, once the \ac{rfs} density is factorized. This approach can avoid the need for heuristics and approximations.
    Related work has been done in \cite{Yuxuan_BPTrajectory_2022}, where \ac{bp} is applied to only the update step, without a systematic treatment to the entire filtering recursion, except for the modeling of undetected targets and newly detected targets.
    The goal of this paper is thus to formalize set-type \ac{bp} and the corresponding factor graphs from the \ac{rfs} densities. From the newly proposed set-type \ac{bp}, we derive the \ac{pmb} filter using the developed set-type \ac{bp}.
    The contributions of this paper are summarized as follows:
\begin{itemize}
    \item \textbf{The specification of set-type \ac{bp}:} We derive set-type \ac{bp} and demonstrate that vector-type \ac{bp} is a special case of set-type \ac{bp}.
    We also show that as in vector-type \ac{bp}, the interior stationary points of the constrained Bethe free energy are set-type \ac{bp} fixed points.
    \item \textbf{The introduction of novel factors for set-type BP:} We devise a partition and merging factor, which partitions a single set into multiple sets and merges multiple sets into a single set, useful for handling sets with unknown cardinalities.
    We also propose a conversion factor for sets augmented with auxiliary vectors such as unique marks.
    \item \textbf{Derivation of \ac{pmb} and \ac{mb} filters with set-type BP for the related problems of mapping, \ac{mtt}, \ac{slam}, \ac{slat}:}
    With the developed set-type \ac{bp}, we revisit the \ac{pmb}- and \ac{mb}-\ac{slam} filters. We first introduce auxiliary variables to factorize  their joint \ac{slam} and data association distribution,
    formulating a factor graph from the factorized density, and running set-type \ac{bp} on the factor graph.
    This work can also lead to set-type \ac{bp} \ac{pmb}-mapping, \ac{mtt}, and \ac{slat} filters.
    The resulting method bears a close resemblance to the PMB-SLAM filter~\cite{Hyowon_MPMB_TVT2022} that computes the updated sensor state density by ignoring the sensor state information generated from newly detected targets and selecting the most likely association determined by the Murty’s algorithm~\cite{murthy1968algorithm}.
    
    \item \textbf{Relation to vector-type BP-SLAM:} We clearly show the connections between the proposed set-type \ac{bp} \ac{pmb}-\ac{slam} and vector-type \ac{bp}-\ac{slam} filters~\cite{Erik_BPSLAM_TWC2019,Erik_AOABPSLAM_ICC2019,Erik_BPSLAM_2022}.
    The methods are algorithmically equivalent, except for minor details described in Section \ref{sec:Connections}, even though the proposed filter is derived directly from the \ac{rfs} extension and developed set-type \ac{bp} framework.
    However, 
    vector-type \ac{bp}-\ac{slam} filters~\cite{Erik_BPSLAM_TWC2019,Erik_AOABPSLAM_ICC2019,Erik_BPSLAM_2022} requires certain heuristics as part of the algorithm development, which are avoided in the proposed set-type BP PMB-SLAM filter.
    The simulation results show that the proposed set-type \ac{bp} \ac{pmb}-\ac{slam} filter outperforms the vector-type \ac{bp}-\ac{slam} filter~\cite{Erik_BPSLAM_TWC2019,Erik_AOABPSLAM_ICC2019,Erik_BPSLAM_2022}, in scenarios with informative \ac{ppp} birth.
\end{itemize}
    This paper is organized as follows. Section~\ref{sec:Background} provides the background of vector-type \ac{bp} and \acp{rfs}.
    In Section~\ref{sec:FG_SetBP}, the set-type \ac{bp} rules and set-type factor nodes are proposed.
    Proposed set-type \ac{bp} is applied to the \ac{pmb} filter, and the connections between set-type and vector-type \ac{bp}-\ac{slam} filters are analyzed in Section~\ref{sec:SetBP-SLAM}. The numerical results and discussions are reported in Section~\ref{sec:NumericalResults}, and conclusions are drawn in Section~\ref{sec:Conclusions}.


\allowdisplaybreaks
\section{Background} \label{sec:Background}
    In this section, we review factor graphs and belief propagation, and we recall the RFS approaches.

\emph{Notation:} 
    Scalars are denoted by italic font, vectors and matrices are respectively indicated by bold lowercase and uppercase letters, and sets are displayed in calligraphic font, e.g., $x$, $\mathbf{x}$, $\mathbf{X}$, and $\mathcal{X}$.
    The set of finite subsets of a space $\mathbb{R}$ is denoted by $\mathcal{F}(\mathbb{R})$.
    The vector consisting of a sequence of vectors $\mathbf{x}^i$ is denoted by $\underline{\mathbf{x}}$, and the sequence of multiple sets $\mathcal{X}$ is denoted by $\underline{\mathcal{X}}$.
    
\subsection{Vector-Type Factor Graph and Belief Propagation}
\subsubsection{Joint Density Factorization and Factor Graph}
    
    Let $\mathbf{x}^i \in \mathbb{R}^{n_{\mathbf{x}}}$ denote a single state vector and $\underline{\mathbf{x}}=[(\mathbf{x}^1)^\top,\dots,(\mathbf{x}^N)^\top]^\top$ denote augment single state vectors representing $N$ states.
    Then, we denote a joint 
    probability
    density of the hidden variables $\underline{\mathbf{x}}$
    by $f(\underline{\mathbf{x}})$,
    which can be factorized as~\cite{Frank_SPA_TIT2001,Henk_IRD_2007} 
\begin{align}
    f(\underline{\mathbf{x}}) \propto \prod_{a} f_a(\underline{\mathbf{x}}^{a}),
    \label{eq:vectorFac}
\end{align}
    where $f_a(\underline{\mathbf{x}}^{a})$ denotes a 
    nonnegative function,
    and $\underline{\mathbf{x}}^{a}$ denotes the argument vector of the function $f_a(\cdot)$. 

    The factorized functions in~\eqref{eq:vectorFac}
    and corresponding argument vectors can be represented by a factor graph.
    The factor graph with the general graphical model~\cite{Frank_SPA_TIT2001} consists of nodes for the different factors and variables, illustrated by squares for the factors, $f_a(\cdot)$, and circles for the variables, $\mathbf{x}^i$, respectively, with edge connections between the factors and their argument variables.
    
    
\begin{example}
    Suppose we have a 
    probability
    density $f(\underline{\mathbf{x}})$, such that $\underline{\mathbf{x}} = [(\mathbf{x}^1)^\top,(\mathbf{x}^2)^\top]^\top$, $\underline{\mathbf{x}}^A = \mathbf{x}^1$, $\underline{\mathbf{x}}^B=\mathbf{x}^2$, $\underline{\mathbf{x}}^C=[(\mathbf{x}^1)^\top,(\mathbf{x}^2)^\top]^\top$, which can be factorized as
\begin{align}
    f(\underline{\mathbf{x}})
    \propto f_A(\mathbf{x}^1)f_B(\mathbf{x}^2)f_C(\mathbf{x}^1,\mathbf{x}^2).
    \label{eq:vector-factorized}
\end{align}
    The corresponding factor graph is illustrated in Fig.~\ref{Fig:VecFG}.
\end{example}

\begin{figure}[h!]
\begin{centering}
	\includegraphics[width=1\columnwidth]{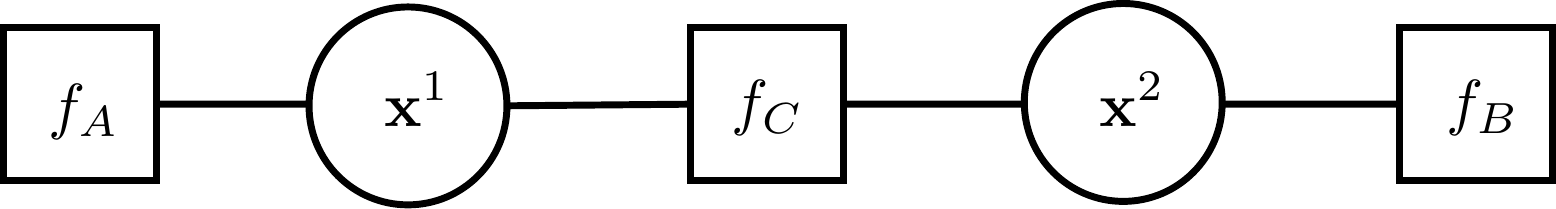}
	\caption{Factor graph representation of the factorized vector density~\eqref{eq:vector-factorized}.}
	\label{Fig:VecFG}
\end{centering}
\end{figure}

\subsubsection{Belief Propagation}

    \Ac{bp} is an efficient approach for estimating the marginal densities of the variables from a joint probability density function~\cite{Frank_SPA_TIT2001}.
    By running \ac{bp} on the factor graph, we compute the messages passed between factors and variables for all links.
    We denote the propagated messages from factor $a$ to variable $i$ and from variable $i$ to factor $a$ by $\mathtt{m}_{a \to i}(\mathbf{x}^i)$ and $\mathtt{n}_{i \to a}(\mathbf{x}^i)$, respectively.
    The messages are updated with the following rules:
\begin{align}
    \mathtt{m}_{a \to i}(\mathbf{x}^i) &= \int f_a(\underline{\mathbf{x}}^a) \prod_{j\in \mathcal{N}(a) \setminus\{i\}}  \mathtt{n}_{j \to a}(\mathbf{x}^j) \mathrm{d} \mathbf{x}^{\sim i}, \\
    \mathtt{n}_{i \to a}(\mathbf{x}^i)&=\prod_{b \in \mathcal{M}(i)\setminus \{a\}} \mathtt{m}_{b \to i}(\mathbf{x}^i),
\end{align}
    where $\mathcal{N}(a)$ denotes the set of indices $i$ of neighboring vectors linked to the factor $f_a(\underline{\mathbf{x}}^{a})$, such that $i \in \mathcal{N}(a)$ if and only if $\mathbf{x}^i$ is an argument of $f_a(\cdot)$, $\mathcal{M}(i)$ denotes the set of indices $a$ of neighboring factors linked to the variable $\mathbf{x}^i$, such that $a \in \mathcal{M}(i)$ if and only if $\mathbf{x}^i$ is an argument of $f_a(\cdot)$, and $\int \dots \mathrm{d}\mathbf{x}^{\sim  i}$ 
    denotes integration with respect to all vectors $\mathbf{x}^{j}$ except $\mathbf{x}^{i}$.
    The messages can be used to compute beliefs that approximate the marginalized posterior densities. 
    The beliefs at variable $i$ and factor $a$ are denoted $\mathtt{b}(\mathbf{x}^i)$ and $\mathtt{b}(\underline{\mathbf{x}}^a)$, respectively, and are updated using the following rules:
\begin{align}
    \mathtt{b}(\mathbf{x}^i) & \propto \prod_{a \in \mathcal{M}(i)} \mathtt{m}_{a \to i}(\mathbf{x}^i),\label{eq:vecvBel}\\
    \mathtt{b}(\underline{\mathbf{x}}^a) & \propto f_a(\underline{\mathbf{x}}^a)\prod_{i \in \mathcal{N}(a)} \mathtt{n}_{i \to a}(\mathbf{x}^i).\label{eq:vecfBel} 
\end{align}

\subsection{Random Finite Sets} \label{sec:RFSdefs}

\subsubsection{Set-Variables, Density, and Integral}
    Let us denote an \ac{rfs} by 
    $\mathcal{X}=\{\mathbf{x}^1,\dots,\mathbf{x}^{n}\}\in \mathcal{F}(\mathbb{R}^{n_\mathbf{x}})$, where both vector $\mathbf{x}^i$, $i\in \{1,\dots,n\}$ and cardinality ${n}=\lvert \mathcal{X} \rvert $ are random.
    We define a set-density 
    $f({\mathcal{X}})$
    as~\cite{Jason_PMB_TAES2015}
\begin{align}
    f({\mathcal{X}}) =
    p(n)\sum_\pi f^n({\mathbf{x}}^{\pi(1)},\dots,{\mathbf{x}}^{\pi(n)}),
    \label{eq:setDensity}
\end{align}
    in which $p(n)=\mathrm{Pr}(\lvert {\mathcal{X}} \rvert = n)$ denotes the probability mass function of the set cardinality, $\pi$ denotes a possible permutation of the set $\mathcal{N} = \{1,\dots,n\}$ with $\pi(i) \in \mathcal{N}$, and $f^n(\cdot)$ is the joint probability density function of the vector with $n$ elements, evaluated for permutation $\pi$.
    Given the set-valued function $g(\cdot)$, we define the set integral as~\cite[eq. (3.11)]{Mahler_book2014}
\begin{align}
    \int g({\mathcal{X}}) \delta {\mathcal{X}} = g(\emptyset) + \sum_{n=1}^\infty \dfrac{1}{n !} 
    \int g(\{{\mathbf{x}}^1,\dots, {\mathbf{x}}^n\}) \mathrm{d}{\mathbf{x}}^1 \dots \mathrm{d}{\mathbf{x}}^n. \label{eq:SetIntegral}
\end{align}

\subsubsection{Poisson, Bernoulli, and PMB Densities}\label{sec:PMB_PMBM}
We will follow the definitions from~\cite{mahler_book_2007}. 
Suppose we have a set $\mathcal{X}^\mathrm{U}$ that follows the Poisson process. The density is given by~\cite{Jason_PMB_TAES2015}
\begin{align} \label{eq:PoissonDensity}
    f^{\mathrm{PPP}}(\mathcal{X}^\mathrm{U}) = e^{-\int \lambda(\mathbf{x})\mathrm{d}\mathbf{x}}\prod_{\mathbf{x}\in\mathcal{X}^\mathrm{U}}\lambda(\mathbf{x}),
\end{align}    
    where $\lambda(\mathbf{x})$ denotes the Poisson intensity function.
    A Bernoulli density  $f^{\mathrm{B}}(\mathcal{X})$ is given by
\begin{align} \label{eq:Bernoulli}
    f^{\mathrm{B}}(\mathcal{X}) =
    \begin{cases}
    1-r,& \mathcal{X}=\emptyset\\
    r\,f(\mathbf{x}), & \mathcal{X}=\{\mathbf{x}\}\\
    0, & \lvert \mathcal{X} \rvert > 1,
    \end{cases}
\end{align}
    where $f(\mathbf{x})$ and $r\in [0,1]$ denote the spatial density and the existence probability, respectively.
    An \ac{mb} density with $n$ Bernoulli components is given by 
\begin{align} \label{eq:MBdensity}
    f^{\mathrm{MB}}(\mathcal{X}^\mathrm{D})
    &= \sum_{\uplus_{i=1}^{n}\mathcal{X}^{i}=\mathcal{X}^\mathrm{D}}\,\,\,\prod_{i=1}^{n} f^{i}(\mathcal{X}^{i}),
\end{align}
    where $f^{i}(\mathcal{X}^{i})$ is a Bernoulli density, and $\uplus$ stands for disjoint set union~\cite[pp.~24]{armstrong2013basic}.

    We are now ready to introduce a \ac{pmb} density, defined as follows.
    Suppose we have two independent \acp{rfs} $\mathcal{X}^{\mathrm{U}}$ and $\mathcal{X}^{\mathrm{D}}$ such that $\mathcal{X} = \mathcal{X}^{\mathrm{U}} \uplus \mathcal{X}^{\mathrm{D}}$, where $\mathcal{X}^{\mathrm{D}}$ 
    follows a \ac{mb} process and $\mathcal{X}^{\mathrm{U}}$ 
    follows a \ac{ppp}.
    Using the convolution formula for independent RFSs~\cite{mahler_book_2007}, a \ac{pmb} density $f(\mathcal{X})$ is
  
\begin{align}\label{eq:PMBdensity}
    f^{\mathrm{PMB}}(\mathcal{X}) \propto \sum_{\uplus_{i=1}^{n}\mathcal{X}^{i}\uplus \mathcal{X}^\mathrm{U} =\mathcal{X}} \,\,\prod_{\mathbf{x}\in \mathcal{X}^\mathrm{U}}\lambda(\mathbf{x}) \prod_{i=1}^{n}f^{i}(\mathcal{X}^i).
\end{align}

\subsubsection{Auxiliary Variables}\label{sec:AuxV}
    The derivation of set-type BP \ac{pmb} filters will require us to introduce auxiliary variables to remove the summation in~\eqref{eq:PMBdensity}. 
    In particular, we introduce
    $u \in \mathbb{U}$ in the \ac{pmb} density~\eqref{eq:PMBdensity},  
    where $\mathbb{U}=\{0,1,\dots,n\}$ ~\cite{Angel_Trajectory_TSP2020,Yuxuan_BPTrajectory_2022}. 
    We thus extend the single state space, such that $({u},\mathbf{x}) \in \mathbb{U} \times \mathbb{R}^{n_{\mathbf{x}}}$, and denote a set of target states with auxiliary variables by $\tilde{\mathcal{X}}
    \in \mathcal{F}(\mathbb{U} \times \mathbb{R}^{n_{\mathbf{x}}})$.
    The set with the auxiliary variable ${u}=0$ follows a \ac{ppp} and indicates that the targets have not been detected, denoted by $\tilde{\mathcal{X}}^\mathrm{U}=\{(u,\mathbf{x}) \in \tilde{\mathcal{X}}: u=0 \}$. Similarly, the set with ${u}=i$ follows a Bernoulli process and indicates that the single target has previously been detected, denoted by $\tilde{\mathcal{X}}^i=\{(u,\mathbf{x}) \in \tilde{\mathcal{X}}: u=i \}$.
    For the set of targets with auxiliary variables $\tilde{\mathcal{X}}$, the \ac{pmb} density is~\cite[Definition~1]{Angel_Trajectory_TSP2020}
\begin{align}
    \tilde{f}^{\mathrm{PMB}}(\tilde{\mathcal{X}})&=\tilde{f}^{\mathrm{U}}(\tilde{\mathcal{X}}^\mathrm{U})\prod_{i=1}^{n}\tilde{f}^{i}(\tilde{\mathcal{X}}^i).
    \label{eq:PMB_Aux}
\end{align}
    Here  
    $\tilde{f}^{\mathrm{U}}(\tilde{\mathcal{X}}^\mathrm{U})$ and $\tilde{f}^{i}(\tilde{\mathcal{X}}^i)$ are given by
\begin{align}
    \tilde{f}^{\mathrm{U}}(\tilde{\mathcal{X}}^\mathrm{U})&
    = \exp\left(-\int \lambda(\mathbf{x})\mathrm{d}\mathbf{x}\right) \prod_{(u,\mathbf{x})\in \tilde{\mathcal{X}}^\mathrm{U}}\delta_0[u]\lambda(\mathbf{x}),
    \label{eq:PMB_Aux_Poi}
    \\
    \tilde{f}^{i}(\tilde{\mathcal{X}}^i)&=
    \begin{cases}
        1-r^{i}, & \tilde{\mathcal{X}}^i=\emptyset\\
        r^{i} f^{i}(\mathbf{x})\delta_{i}[u], & \tilde{\mathcal{X}}^i=(u,\mathbf{x})\\
        0, & \text{otherwise}
    \end{cases},
    \label{eq:PMB_Aux_Ber}
\end{align}
    where $\delta_i[\cdot]$ denotes the Kronecker delta function.
    For notational simplicity, $\tilde{\cdot}$ on sets with auxiliary variables will be omitted, when possible.

\section{Factor Graph and Belief Propagation for Random Finite Sets}\label{sec:FG_SetBP}

    We describe the proposed set-type \ac{bp} update rules and special factors for \acp{rfs}. We reveal that set-type \ac{bp} is a generalization of standard vector-type \ac{bp} since a vector can be represented as a set with a single element $p(n=1)=1$ of~\eqref{eq:setDensity}. 
    
\subsection{Factor Graphs and BP over a Sequence of RFSs}
\label{sec:SetFG}

    
    Suppose we have $n$ \acp{rfs} ${\mathcal{X}}^1,\dots,{\mathcal{X}}^n$, with the joint density $f({\mathcal{X}}^1,\dots,{\mathcal{X}}^n)$.
    In the general formulation of set-type \ac{bp}, ${\mathcal{X}}^1,\dots,{\mathcal{X}}^n$ may or may not contain auxiliary variables, and the number of \acp{rfs} in the sequence, $n$, is known.

    
\begin{define} [Factorization of Set-Density and Factor Graph] \label{def:SetFG}
    Let us denote by $f_a(\cdot)$ the set-factor $a$, which is 
    a nonnegative function; 
    by $\mathcal{N}(a)$ the set of neighboring set-variable indices linked to the set-factor $a$; by $\mathcal{X}^i$ the set-variable $i$; by $\mathcal{M}(i)$ the set of neighboring set-factor indices linked to the set-variable $\mathcal{X}^i$; and by $\underline{\mathcal{X}}^a$ the arguments of the set-factor $a$, represented by the sequence of all \acp{rfs} $\mathcal{X}^i$ for $i\in \mathcal{N}(a)$.
    Suppose the joint density $f({\mathcal{X}}^1,\dots,{\mathcal{X}}^n)$ is factorized as follows:
\begin{align}
    f({\mathcal{X}}^1,\dots,{\mathcal{X}}^n) \propto 
    \prod_{a} f_a(\underline{\mathcal{X}}^a).
    \label{eq:SetFactor}
\end{align}
    The factorized density can then be represented by a factor graph, consisting of the set-variables $\mathcal{X}^i$ and set-factors $f_a(\underline{\mathcal{X}}^a)$, which can be represented by circles and squares, and edge connections between $\mathcal{X}^i$ and $f_a(\underline{\mathcal{X}}^a)$.
    It should be noted that $f_a(\underline{\mathcal{X}}^a)$ has adequate units for each cardinality of $\underline{\mathcal{X}}^a$ such that we can integrate \eqref{eq:SetFactor} using set integrals
    over ${\mathcal{X}}^1,\dots,{\mathcal{X}}^n$~\cite[Sec.~3.2.4]{Mahler_book2014}.
\end{define}

\begin{example}\label{ex:set-fact}
    Consider a set-density $f(\mathcal{X}^1,\mathcal{X}^2)$, such that $\underline{\mathcal{X}}^{A}=\mathcal{X}^{1}$, $\underline{\mathcal{X}}^{B}=\mathcal{X}^{2}$, and $\underline{\mathcal{X}}^{C}=(\mathcal{X}^{1}, \mathcal{X}^{2})$, which can be factorized as
\begin{align}
    f({\mathcal{X}}^1,{\mathcal{X}}^2) \propto 
    f_A(\mathcal{X}^{1})f_B(\mathcal{X}^{2})f_C(\mathcal{X}^{1}, \mathcal{X}^{2}).
    \label{eq:exSetfac}
\end{align}
    Using the set-variables and set-factors, the factor graph corresponding to~\eqref{eq:exSetfac} is 
    shown in Fig.~\ref{Fig:SetFG}.
\end{example}    
\begin{figure}[t!]
\begin{centering}
	\includegraphics[width=1\columnwidth]{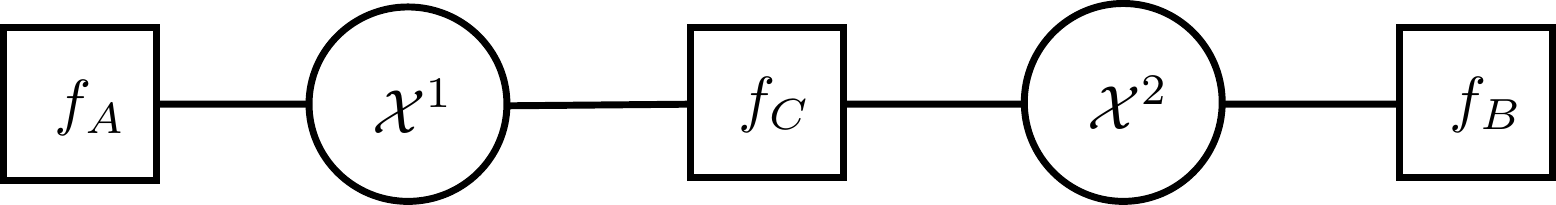}
	\caption{Factor graph representation of set-density of~\eqref{eq:exSetfac}.}
	\label{Fig:SetFG}
\end{centering}
\end{figure}
    
\begin{define}[Set-Type \ac{bp} Update Rules]

    The set-messages from the set-factor $a$ to the set-variable $i$ are denoted by 
    $\mathtt{m}_{a \to i}(\mathcal{X}^i)$, while those  from the set-variable $i$ to the set-factor $a$ are denoted by 
    $\mathtt{n}_{i \to a}(\mathcal{X}^i)$.
    The beliefs at the set-variable $i$ and set-factor $a$ are denoted by $\mathtt{b}(\mathcal{X}^i)$ and $\mathtt{b}(\underline{\mathcal{X}}^a)$, respectively.
    The set-messages are updated with the following rules:
\begin{align}
    \mathtt{m}_{a \to i}(\mathcal{X}^i)
    & = \int 
    f_a(\underline{\mathcal{X}}^a)
    \prod_{j \in \mathcal{N}(a) \setminus \{i\}} \mathtt{n}_{j \to a}(\mathcal{X}^j)
    \delta \mathcal{X}^{\sim i},\label{eq:setM_fv}\\
    \mathtt{n}_{i 
    \to a} (\mathcal{X}^i) & = \prod_{b \in \mathcal{M}(i) \setminus \{a\}} \mathtt{m}_{b \to i}(\mathcal{X}^i),\label{eq:setM_vf}
\end{align}
    where $\setminus$ denotes the set difference, and $\int \dots \delta \mathcal{X}^{\sim  i}$ 
    denotes integration with respect to all sets $\mathcal{X}^{j}$ except $\mathcal{X}^{i}$, i.e., with respect to all $\mathcal{X}^{j}$ for which $j \in \mathcal{N}(a) \setminus \{i\}$.
    The beliefs at the set-variable $i$ and set-factor $a$ are updated with the following rules:
\begin{align}
    \mathtt{b}(\mathcal{X}^i) &\propto \prod_{a \in \mathcal{M}(i)} \mathtt{m}_{a \to i}(\mathcal{X}^i)\label{eq:setVbp}\\
    \mathtt{b}(\underline{\mathcal{X}}^a)  &\propto 
    f_a (\underline{\mathcal{X}}^a)
    \prod_{i \in \mathcal{N}(a)} \prod_{b \in \mathcal{M}(i) \setminus \{a\}} \mathtt{m}_{b \to i}(\mathcal{X}^{j}).\label{eq:setFbp}
\end{align}
    
\end{define}
    The optimality of the set-type \ac{bp} update rules in Definition~\ref{def:SetFG} is described by Theorem~\ref{theo:DerSetBP} and~Corollary~\ref{coro:setBelief_nocycle}, provided next.
\begin{theorem}
    \label{theo:DerSetBP}
    The interior stationary points of the constrained Bethe free energy 
    are set-type \ac{bp} fixed points with positive set-beliefs and vice versa.
\end{theorem}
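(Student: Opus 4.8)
The plan is to mirror the variational characterisation of \ac{bp} fixed points of Yedidia, Freeman and Weiss~\cite{Yedidia2005BetheFree}, replacing every vector integral by the set integral~\eqref{eq:SetIntegral} and every ordinary density by a set-density in the sense of~\eqref{eq:setDensity}. First I would define the \emph{set-type Bethe free energy} attached to the factorisation~\eqref{eq:SetFactor},
\begin{align}
    F_{\mathrm{B}}(\{\mathtt{b}\}) = \sum_a \int \mathtt{b}(\underline{\mathcal{X}}^a)\,\ln\frac{\mathtt{b}(\underline{\mathcal{X}}^a)}{f_a(\underline{\mathcal{X}}^a)}\,\delta\underline{\mathcal{X}}^a + \sum_i \bigl(1-\lvert\mathcal{M}(i)\rvert\bigr)\int \mathtt{b}(\mathcal{X}^i)\,\ln \mathtt{b}(\mathcal{X}^i)\,\delta\mathcal{X}^i ,
    \label{eq:BetheSet}
\end{align}
which is the usual Bethe approximation to the \ac{kld} between a trial joint set-density assembled from the set-beliefs $\mathtt{b}(\underline{\mathcal{X}}^a)$, $\mathtt{b}(\mathcal{X}^i)$ and the true product~\eqref{eq:SetFactor}. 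The constrained minimisation is over collections of set-beliefs that (i) are normalised, $\int \mathtt{b}(\mathcal{X}^i)\,\delta\mathcal{X}^i=1$ and $\int \mathtt{b}(\underline{\mathcal{X}}^a)\,\delta\underline{\mathcal{X}}^a=1$, and (ii) are locally consistent, $\int \mathtt{b}(\underline{\mathcal{X}}^a)\,\delta\mathcal{X}^{\sim i}=\mathtt{b}(\mathcal{X}^i)$ for all $a$ and all $i\in\mathcal{N}(a)$, the marginalisation being the set integral over the set-variables of $\underline{\mathcal{X}}^a$ other than $\mathcal{X}^i$. An \emph{interior} point is one whose set-beliefs are strictly positive on every cardinality component, so that the logarithms in~\eqref{eq:BetheSet} and the division by $f_a$ are well defined.

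Second, I would attach Lagrange multipliers --- scalars $\gamma_i$, $\gamma_a$ to the normalisation constraints and, for each consistency constraint, a \emph{set-function} $\lambda_{a\to i}(\mathcal{X}^i)$ --- and form the Lagrangian $L = F_{\mathrm{B}} + \sum_i \gamma_i\bigl(\int\mathtt{b}(\mathcal{X}^i)\delta\mathcal{X}^i-1\bigr) + \sum_a \gamma_a\bigl(\int\mathtt{b}(\underline{\mathcal{X}}^a)\delta\underline{\mathcal{X}}^a-1\bigr) + \sum_a\sum_{i\in\mathcal{N}(a)}\int\lambda_{a\to i}(\mathcal{X}^i)\bigl(\mathtt{b}(\mathcal{X}^i)-\int\mathtt{b}(\underline{\mathcal{X}}^a)\delta\mathcal{X}^{\sim i}\bigr)\delta\mathcal{X}^i$. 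Since a set-density is nothing but a sequence of permutation-symmetric ordinary densities indexed by cardinality and the set integral is the corresponding weighted sum of ordinary integrals, the functional derivatives $\partial L/\partial\mathtt{b}(\mathcal{X}^i)$ and $\partial L/\partial\mathtt{b}(\underline{\mathcal{X}}^a)$ can be taken cardinality-by-cardinality by ordinary calculus of variations; setting them to zero at an interior point gives
\begin{align}
    \mathtt{b}(\mathcal{X}^i) &\propto \exp\!\Bigl(\tfrac{1}{\lvert\mathcal{M}(i)\rvert-1}\textstyle\sum_{a\in\mathcal{M}(i)}\lambda_{a\to i}(\mathcal{X}^i)\Bigr), \label{eq:statV}\\
    \mathtt{b}(\underline{\mathcal{X}}^a) &\propto f_a(\underline{\mathcal{X}}^a)\,\exp\!\Bigl(\textstyle\sum_{i\in\mathcal{N}(a)}\lambda_{a\to i}(\mathcal{X}^i)\Bigr). \label{eq:statF}
\end{align}

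Third, I would identify the multipliers with the set-messages. Setting $\mathtt{n}_{i\to a}(\mathcal{X}^i):=\exp\lambda_{a\to i}(\mathcal{X}^i)$, the factor-belief~\eqref{eq:statF} already matches~\eqref{eq:setFbp}; substituting~\eqref{eq:statF} back and collapsing the exponent in~\eqref{eq:statV} reduces the variable-belief to a product of incoming messages, matching~\eqref{eq:setVbp} once one also sets $\mathtt{m}_{a\to i}(\mathcal{X}^i)$ so that $\mathtt{n}_{i\to a}=\prod_{b\in\mathcal{M}(i)\setminus\{a\}}\mathtt{m}_{b\to i}$, i.e.\ exactly~\eqref{eq:setM_vf}. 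Feeding~\eqref{eq:statF} into the marginal-consistency constraint $\int\mathtt{b}(\underline{\mathcal{X}}^a)\delta\mathcal{X}^{\sim i}=\mathtt{b}(\mathcal{X}^i)$ and pulling the $\mathcal{X}^i$-dependent factor $\mathtt{n}_{i\to a}$ out of the set integral turns the left-hand side into $\mathtt{n}_{i\to a}(\mathcal{X}^i)\int f_a(\underline{\mathcal{X}}^a)\prod_{j\in\mathcal{N}(a)\setminus\{i\}}\mathtt{n}_{j\to a}(\mathcal{X}^j)\,\delta\mathcal{X}^{\sim i}$, so consistency holds precisely when the remaining integral equals $\mathtt{m}_{a\to i}(\mathcal{X}^i)$ up to a constant --- which is the set-type factor-to-variable rule~\eqref{eq:setM_fv}. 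This shows every interior stationary point is a positive set-type \ac{bp} fixed point; the converse follows by running the argument backwards: given a positive \ac{bp} fixed point, define $\lambda_{a\to i}=\ln\mathtt{n}_{i\to a}$ and the beliefs via~\eqref{eq:setVbp}--\eqref{eq:setFbp}, verify the normalisation and consistency constraints from~\eqref{eq:setM_fv}--\eqref{eq:setM_vf}, and check~\eqref{eq:statV}--\eqref{eq:statF}.

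The step I expect to be the main obstacle is the second one: making the calculus of variations rigorous on the space of set-densities. One must argue that stationarity of~\eqref{eq:BetheSet} under the set-integral constraints is equivalent to simultaneous stationarity of the symmetric density component for every cardinality $n$, that the $1/n!$ weights in~\eqref{eq:SetIntegral} cancel between the free-energy and the constraint terms so as not to perturb the resulting Euler--Lagrange equations, and that ``interior'' must be read as strict positivity of \emph{every} cardinality component --- including the empty-set mass $\mathtt{b}(\emptyset)$ --- so that all the logarithms appearing above are finite. Once this bookkeeping is settled, the remaining manipulations are formally identical to the vector-type derivation of~\cite{Yedidia2005BetheFree}, consistent with the observation in Section~\ref{sec:FG_SetBP} that vector-type \ac{bp} is the special case $p(n=1)=1$ of set-type \ac{bp}.
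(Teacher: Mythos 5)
Your proposal follows essentially the same route as the paper's Appendix~A proof: the same Bethe free energy (your single KLD-form expression expands to the paper's $U_{\mathrm{Bethe}}-H_{\mathrm{Bethe}}$), the same normalization and marginalization constraints with Lagrange multipliers $\gamma_a$, $\gamma_i$, $\lambda_{a,i}(\mathcal{X}^i)$, the same stationarity conditions, the same identification $\lambda_{a,i}=\ln\mathtt{n}_{i\to a}$, and the same reverse substitution for the converse, so the argument is correct. The only point the paper handles that you do not is the dead-end case $\lvert\mathcal{M}(i)\rvert=1$, where your factor $1/(\lvert\mathcal{M}(i)\rvert-1)$ is undefined; the paper simply excludes such variables from the Lagrangian since they contribute nothing to the Bethe free energy.
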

\begin{proof}
    See Appendix~\ref{app:Optimal_SetBP}.
\end{proof}

\begin{corollary}
    \label{coro:setBelief_nocycle}
    The set-beliefs obtained by running set-type \ac{bp} on a factor graph that has no cycles,  represent the exact marginal probability densities. 
\end{corollary}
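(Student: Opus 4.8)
The plan is to mirror the classical argument that vector-type BP is exact on trees (cf.~\cite{Frank_SPA_TIT2001}), but carried out with set integrals and set-densities throughout. First I would reduce to the case of a connected, cycle-free factor graph, i.e.\ a tree; if the graph is a forest, the joint density factorizes over connected components and it suffices to treat each component separately. Fix a set-variable $\mathcal{X}^i$ whose marginal belief we want. Using Theorem~\ref{theo:DerSetBP} we know the set-type BP messages have a fixed point with positive set-beliefs, so it is enough to show that this fixed point yields the exact marginal; alternatively, and more directly for a tree, I would argue that on a tree the message-passing schedule converges after finitely many iterations (a leaves-to-root-to-leaves sweep), so the fixed point is reached exactly, and then identify it with the marginal.

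The key structural fact is that removing the set-variable node $\mathcal{X}^i$ from a tree disconnects the graph into subtrees, one hanging off each neighboring set-factor $a \in \mathcal{M}(i)$. Accordingly the factorization \eqref{eq:SetFactor} splits as $f \propto \prod_{a\in\mathcal{M}(i)} F_a(\mathcal{X}^i, \underline{\mathcal{Y}}^a)$, where $F_a$ collects $f_a$ together with all factors in the subtree rooted at $a$, and $\underline{\mathcal{Y}}^a$ denotes the set-variables in that subtree (the subtrees share no variables other than $\mathcal{X}^i$). The main step is then to prove, by induction on the depth of the subtree, that the set-message $\mathtt{m}_{a\to i}(\mathcal{X}^i)$ equals (up to normalization) the set integral $\int F_a(\mathcal{X}^i,\underline{\mathcal{Y}}^a)\,\delta\underline{\mathcal{Y}}^a$, i.e.\ the marginalization of the subtree's contribution onto $\mathcal{X}^i$. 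The base case is a leaf factor, where \eqref{eq:setM_fv} reduces to a single set integral of $f_a$; the inductive step plugs the induction hypothesis for the messages $\mathtt{n}_{j\to a}$ (which are products of sub-subtree messages by \eqref{eq:setM_vf}) into \eqref{eq:setM_fv} and uses that the set integral over disjoint groups of set-variables factorizes — the set analogue of Fubini — so that $\int f_a \prod_j \mathtt{n}_{j\to a}\,\delta\mathcal{X}^{\sim i} = \int F_a\,\delta\underline{\mathcal{Y}}^a$. Multiplying these messages as in \eqref{eq:setVbp} and using the subtree factorization of $f$ then gives $\mathtt{b}(\mathcal{X}^i) \propto \int f(\mathcal{X}^1,\dots,\mathcal{X}^n)\,\delta\mathcal{X}^{\sim i}$, which is the exact marginal. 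The factor-belief claim \eqref{eq:setFbp} follows by the same bookkeeping, grouping the subtrees hanging off $\underline{\mathcal{X}}^a$.

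The main obstacle is making the ``set-integral Fubini'' step rigorous: one has to verify that for disjoint collections of set-variables the iterated set integral in \eqref{eq:SetIntegral} can be split as a product of set integrals, and that the per-cardinality $1/n!$ normalization and the permutation sums in \eqref{eq:setDensity} are handled consistently — in particular that the units remarked upon in Definition~\ref{def:SetFG} make each intermediate set-message a bona fide set-density (or at least a nonnegative set-function) so that the next set integral is well-defined. I would isolate this as a short lemma: if $\underline{\mathcal{Y}}$ and $\underline{\mathcal{Z}}$ are disjoint sequences of RFSs and $g,h$ are set-functions, then $\int g(\underline{\mathcal{Y}})h(\underline{\mathcal{Z}})\,\delta\underline{\mathcal{Y}}\,\delta\underline{\mathcal{Z}} = \big(\int g\,\delta\underline{\mathcal{Y}}\big)\big(\int h\,\delta\underline{\mathcal{Z}}\big)$, which follows directly from \eqref{eq:SetIntegral} and the ordinary Fubini theorem applied cardinality by cardinality. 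Everything else is the standard tree-BP induction transcribed into set notation, so once this lemma is in place the corollary is immediate.
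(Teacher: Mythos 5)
Your proposal is correct and follows essentially the same route as the paper's proof, which also argues that on a cycle-free graph the leaf messages equal their factors and that recursive application of \eqref{eq:setM_fv}--\eqref{eq:setM_vf} (the paper calls this the chain rule) makes $\mathtt{b}(\mathcal{X}^i)$ the set integral of the product of all factors, i.e., $\int f(\mathcal{X}^1,\dots,\mathcal{X}^n)\,\delta\mathcal{X}^{\sim i}$. You simply spell out more carefully the subtree decomposition and the factorization of set integrals over disjoint groups of set-variables, steps the paper leaves implicit.
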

\begin{proof}
   See Appendix~\ref{app:setBelief_nocycle}.
\end{proof}

\begin{example}
    Consider the factor graph in Fig.~\ref{Fig:SetFG}, representing the factorized density in Example~\ref{ex:set-fact}.
    Using set-type BP, the beliefs at set-variable 1 and factor $C$ are obtained as follows: $\mathtt{m}_{A \rightarrow 1}(\mathcal{X}^{1}) = f_A(\mathcal{X}^{1})$, 
    $\mathtt{m}_{B \rightarrow 2}(\mathcal{X}^{2}) = f_B(\mathcal{X}^{2})$,
    $\mathtt{n}_{1 \rightarrow C}(\mathcal{X}^{1}) = \mathtt{m}_{A \rightarrow 1}(\mathcal{X}^{1})$,
    $\mathtt{n}_{2 \rightarrow C}(\mathcal{X}^{2}) = \mathtt{m}_{B \rightarrow 2}(\mathcal{X}^{2})$,
\begin{subequations}
\begin{align}
    \mathtt{m}_{C \rightarrow 1}(\mathcal{X}^{1}) &= \int \mathtt{n}_{2 \rightarrow C}(\mathcal{X}^{2})
    f_C(\mathcal{X}^{1} , \mathcal{X}^{2}) \delta \mathcal{X}^{2},\label{eq:Mes_Xi3-X1}\\
    \mathtt{b}(\mathcal{X}^{1}) & \propto \mathtt{m}_{A \rightarrow 1}(\mathcal{X}^{1})\mathtt{m}_{C \rightarrow 1}(\mathcal{X}^{1})\label{eq:Bel_X1},\\
    \mathtt{b}(\mathcal{X}^C)&\propto f_C(\mathcal{X}^1,\mathcal{X}^2)
    \mathtt{m}_{A \rightarrow 1}(\mathcal{X}^1)\mathtt{m}_{B \rightarrow 2}(\mathcal{X}^2).
\end{align}
\end{subequations}
\end{example}

\begin{rem}[Vector-Type BP is a Special Case of Set-Type BP]
     Note that the vector-type \ac{bp} message passing rules and beliefs can be obtained from the set-type \ac{bp} expressions when considering sets whose cardinality is 1 with probability 1, i.e., $p(\lvert \mathcal{X}^i\rvert=1)=1,~\forall~i$.\footnote{The number of elements of the sets is set deterministically to 1. 
     For example, suppose we have $\mathcal{X}^1=\{\mathbf{x}^1\},\mathcal{X}^2=\{\mathbf{x}^2\},\mathcal{X}^3=\{\mathbf{x}^3\}$ in the \ac{rfs} representation and $\mathbf{x}^1,\mathbf{x}^2,\mathbf{x}^3$ in the vector representation. Then, both representations are equivalent.}
     Note that this property implies that we can directly apply \ac{bp} to joint set and vector densities using the set- and vector-integrals for set- and vector-variables, respectively. 
     A vector-type factor graph~\eqref{eq:vectorFac} can be written as a set-type factor graph in~\eqref{eq:SetFactor} with the property that each factor requires sets with cardinality 1.
     Then, for factors that only consider sets with cardinality 1, the set integral is equivalent to a vector integral, and the set-type BP message~\eqref{eq:setVbp}-\eqref{eq:setFbp} becomes equivalent to those in~\eqref{eq:vecvBel}-\eqref{eq:vecfBel}.
\end{rem}

\begin{figure}[t!]
\begin{centering}
    \includegraphics[width=.7\columnwidth]{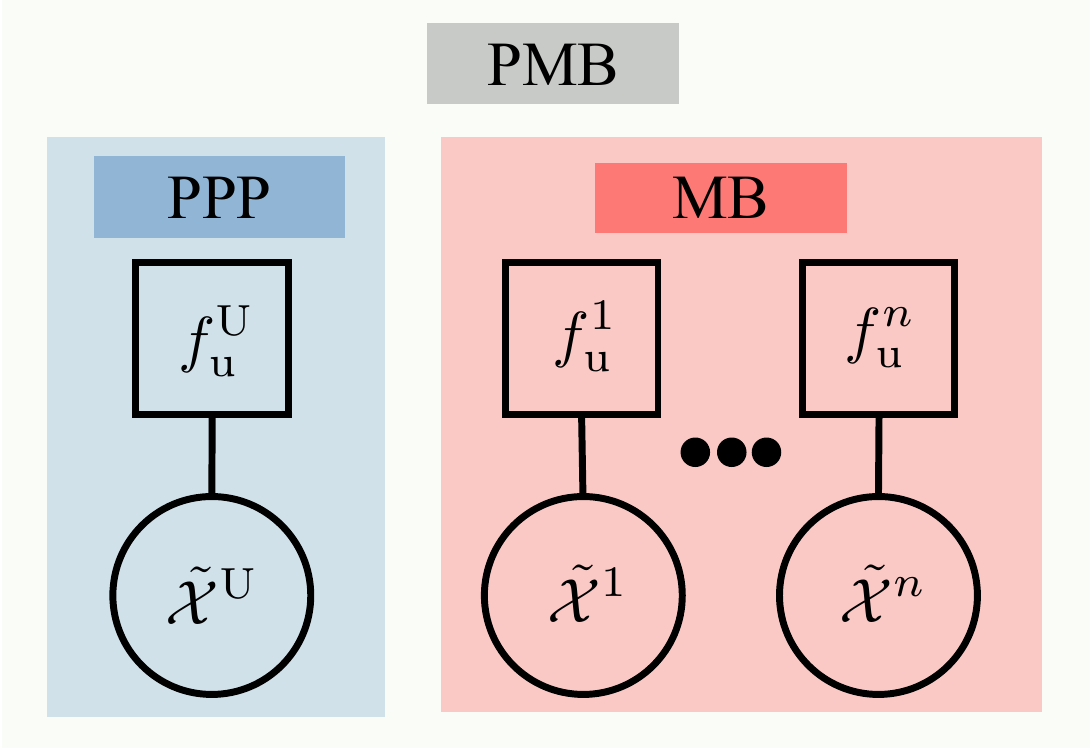}
	\caption{Factor graph of a PMB density with auxiliary variables, see~\eqref{eq:PMB_Aux}, the product of a PPP and $n$ Bernoulli densities.}
	\label{Fig:FG_PMB}
\end{centering}
\end{figure}

\subsection{Examples and Special Factors for Set Densities}\label{sec:SetSpecial}


\subsubsection{Factor Graph of a PMB Density}

We have so far explained how set-type BP can be applied to a joint density over a sequence of RFSs. We now proceed to explain how to obtain this type of density from a PMB density \eqref{eq:PMBdensity}.

A set-density defined in a single-target space that is the disjoint union of different sub-spaces can be used to define a density over a sequence of sets~\cite[Eq.~3.52]{Mahler_book2014}. This type of single-target space was obtained when we introduced auxiliary variables to the \ac{pmb} in~\eqref{eq:PMBdensity}, resulting in \eqref{eq:PMB_Aux}. Therefore, applying this result to the \ac{pmb} density
of the form~\eqref{eq:PMB_Aux} yields
\begin{align}    
    \tilde{f}^{\mathrm{PMB}}(\tilde{\mathcal{X}}^\mathrm{U},\tilde{\mathcal{X}}^1,\dots,\tilde{\mathcal{X}}^n)=\tilde{f}^{\mathrm{PMB}}(\tilde{\mathcal{X}}^\mathrm{U} \uplus \tilde{\mathcal{X}}^1 \uplus \dots \uplus \tilde{\mathcal{X}}^n).
\end{align}
We can now use this joint density over a sequence of RFSs to apply set-type BP.
    The factor graph of a PMB density \eqref{eq:PMB_Aux} with auxiliary variables is then shown in Fig.~\ref{Fig:FG_PMB}. Since the density is fully factorized, it appears as a collection of disjoint factors. We can also directly obtain the factor graph of a PPP and an MB density with auxiliary variables by removing the required factors and variables in Fig.~\ref{Fig:FG_PMB}.



 
\subsubsection{Partitioning and Merging Factor}
    Unions of \acp{rfs} are common in the literature~\cite[Sec.~3.5.3]{Mahler_book2014}. To represent unions of \acp{rfs} in a factor graph, we introduce what we refer to as partitioning and merging factors, defined as follows.
\begin{define}[Partitioning and Merging Factor] \label{def:PartitionFactor}


    We define a partitioning and merging set-factor as
\begin{align}
    f_a(\underline{\mathcal{X}}^a) 
    &= \delta_{\uplus_{i \in \mathcal{N}(a) \setminus \{j\}} \mathcal{X}^{i}}(\mathcal{X}^j),
\end{align}
    where $\delta_\mathcal{X}(\cdot)$ denotes a set Dirac delta centered at set $\mathcal{X}$, defined in~\cite[Sec.~11.3.4.3]{mahler_book_2007}
    and $\mathcal{X}^j$ is the union $\mathcal{X}^j =\uplus_{i \in \mathcal{N}(a) \setminus \{j\}} \mathcal{X}^{i}$. 
    This factor partitions a single set $\mathcal{X}^j$ into $\lvert \mathcal{N}(a) \rvert-1$ subsets, i.e., $\mathcal{X}^i$ for $i \in \mathcal{N}(a) \setminus \{j\}$, and merges $\lvert \mathcal{N}(a) \rvert-1$ sets, i.e., $\mathcal{X}^i$ for $i \in \mathcal{N}(a) \setminus \{j\}$, into a single set $\mathcal{X}^j$.
\end{define}

\begin{figure}[t!]
\begin{centering}
	\includegraphics[width=1\columnwidth]{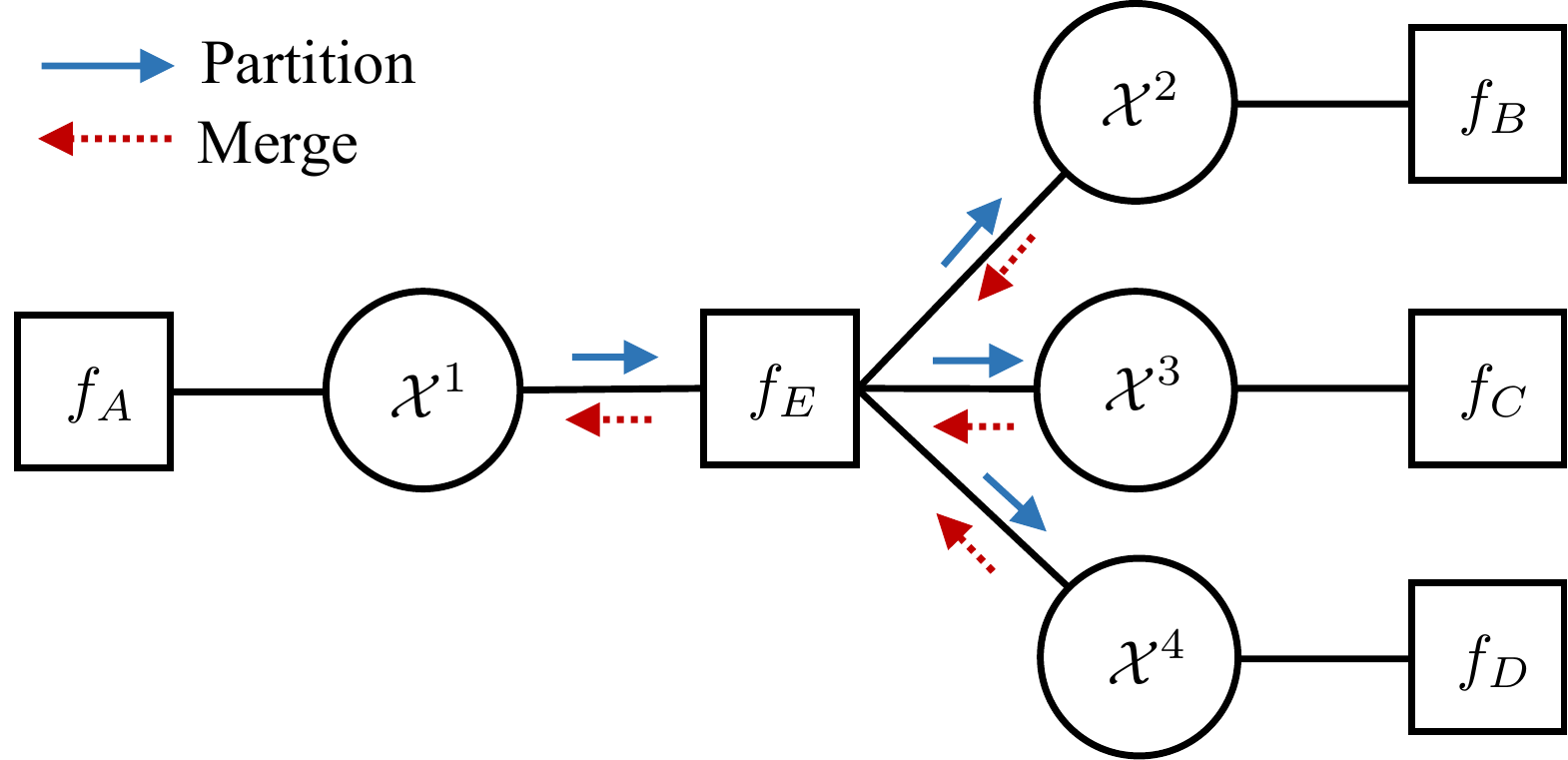}
	\caption{Factor graph representation with a partitioning and merging factor: (left-right) one incoming message is partitioned into multiple messages; and (right-left) multiple messages are merged into one message.}
	\label{Fig:SpeFactor}
\end{centering}
\end{figure}
    It is useful to understand how this factor affects the set-messages. 
    Suppose we have incoming messages $\mathtt{n}_{j \rightarrow a}(\mathcal{X}^{j})$ and $\mathtt{n}_{q \rightarrow a}(\mathcal{X}^q)=1$ for $q \in \mathcal{N}(a) \setminus \{j,i\}$.
    By following the set-type BP update rules, the outgoing messages from the set-factor $a$ to set-variable $i$ for $i \in \mathcal{N}(a) \setminus \{j\}$ are
\begin{align}
    \mathtt{m}_{a \to i}(\mathcal{X}^i) 
    &= 
    \int 
    \mathtt{n}_{j \to a}(\mathcal{X}^j)
    \delta_{\uplus_{q \in \mathcal{N}(a) \setminus \{j\}} \mathcal{X}^{q}} (\mathcal{X}^j)
    \delta \mathcal{X}^{\sim i}
    \\
    &= \int \mathtt{n}_{j \to a}(\uplus_{q \in \mathcal{N}(a) \setminus \{j\}} \mathcal{X}^q)
    \delta \mathcal{X}^{\sim (i,j)} \label{eq:SMFactor_ForOut}\\
    & =  \int \mathtt{n}_{j \to a}(\mathcal{X}^i \uplus \mathcal{X})
    \delta \mathcal{X} \label{eq:SMFactor_ForOut2},
\end{align}
    where $\int \delta \mathcal{X}^{\sim(i,j)}$ indicates the integration with respect to all sets $\mathcal{X}^q$ except $\mathcal{X}^i$ and $\mathcal{X}^j$, and $\mathcal{X}$ is a dummy variable that is used to integrate over all possible sets.
    It indicates that $\mathcal{X}=\uplus_{q\in\mathcal{N}(a)\setminus \{j,i\}} \mathcal{X}^q$, and
    the single set $\mathcal{X}^j$ with the incoming message $\mathtt{n}_{j \to a}(\mathcal{X}^j)$ is partitioned into $\lvert \mathcal{N}(a) \rvert-1$ subsets $\mathcal{X}^i$ with the outgoing messages $\mathtt{m}_{a \rightarrow i}(\mathcal{X}^i)$, for $i \in \mathcal{N}(a) \setminus \{j\}$.

    Conversely,
    suppose we have incoming messages $\mathtt{n}_{i \to a}(\mathcal{X}^i)$ for $i \in \mathcal{N}(a) \setminus \{j\}$, then the outgoing message from the set-factor $a$ to the set-variable $j$ is
\begin{align}
    &\mathtt{m}_{a \rightarrow j}(\mathcal{X}^j) \notag \\
    &= 
    \int 
    \prod_{i \in \mathcal{N}(a) \setminus \{j\}} \mathtt{n}_{i \to a}(\mathcal{X}^i)
    \delta_{\uplus_{i \in \mathcal{N}(a) \setminus \{j\}} \mathcal{X}^{i}} (\mathcal{X}^j)
    \delta \mathcal{X}^{\sim j}\label{eq:SMFactor_BackConv}
    \\
    &= \int 
    \sum_{\uplus_{i \in \mathcal{N}(a) \setminus \{j\}} \mathcal{W}^{i}=\mathcal{X}^j} 
    \prod_{i \in \mathcal{N}(a) \setminus \{j\}} \mathtt{n}_{i \to a}(\mathcal{X}^{i})
    \delta_{\mathcal{W}^i}(\mathcal{X}^{i})
    \delta \mathcal{X}^{\sim j} \\
    & = 
    \sum_{\uplus_{i \in \mathcal{N}(a) \setminus \{j\}} \mathcal{W}^{i}=\mathcal{X}^j}
    \prod_{i \in \mathcal{N}(a) \setminus \{j\}} 
    \mathtt{n}_{i \to a}(\mathcal{W}^{i}),
    \label{eq:SMFactor_BackOut}
\end{align}
    where in step \eqref{eq:SMFactor_BackConv}, we have used that $\delta_{\uplus_{i \in \mathcal{N}(a) \setminus \{j\}} \mathcal{X}^i} (\mathcal{X}^j)= \sum_{\uplus_{i \in \mathcal{N}(a) \setminus \{j\}} \mathcal{W}^{i}=\mathcal{X}^j} \prod_{i\in \mathcal{N}(a)\setminus \{j\}} \delta_{\mathcal{W}^{i}}(\mathcal{X}^{i}) $ by the convolution formula.
    Then, \eqref{eq:SMFactor_BackOut} indicates that the sets $\mathcal{X}^i$ with the incoming messages $\mathtt{n}_{i \to a}(\mathcal{X}^i)$ for $i \in \mathcal{N}(a)\setminus \{j\}$ are merged into the single set $\mathcal{X}^j$ with the outgoing message $\mathtt{m}_{a \to j}(\mathcal{X}^{j})$.
    
    

\begin{example}
    Given the factor graph shown in Fig.~\ref{Fig:SpeFactor} such that $\mathcal{X}^{1} = \mathcal{X}^{2} \uplus \mathcal{X}^{3} \uplus \mathcal{X}^{4}$, the argument at the set-factor $E$ is represented as the sequence of sets  as
    $\underline{\mathcal{X}}^E=(\mathcal{X}^{1} , \mathcal{X}^{2} , \mathcal{X}^{3} , \mathcal{X}^{4})$
    by Definition~\ref{def:SetFG}. 
    By Definition~\ref{def:PartitionFactor}, the partitioning and merging factor is given by $f_E(\underline{\mathcal{X}}^E) 
    = \delta_{\mathcal{X}^{2} \uplus \mathcal{X}^{3} \uplus \mathcal{X}^{4}}(\mathcal{X}^1)$.  
    Suppose we have incoming messages $\mathtt{n}_{1 \rightarrow E}(\mathcal{X}^1)$, $\mathtt{n}_{i \rightarrow E}(\mathcal{X}^{i})=1$, for $i \in \{2,3,4\}$. 
    From~\eqref{eq:SMFactor_ForOut}, the partitioned message $\mathtt{m}_{E \rightarrow i}(\mathcal{X}^i)$ is computed as
\begin{align}
    \mathtt{m}_{E \rightarrow i}(\mathcal{X}^i) 
    &= \int \mathtt{n}_{1 \rightarrow E}(\mathcal{X}^{2} \uplus \mathcal{X}^{3} \uplus \mathcal{X}^{4})
    \delta \mathcal{X}^{\sim(i,1)},
\end{align}
    for $i=2,3,4$.
    Conversely, suppose we have incoming messages $\mathtt{n}_{j \rightarrow E}(\mathcal{X}^j)$ for $j \in \{2,3,4\}$, then the outgoing message from the set-factor $E$ to set-variable 1 is computed as
\begin{align}
    &\mathtt{m}_{E \rightarrow 1}(\mathcal{X}^1) \notag \\
    & = 
    \sum_{\mathcal{W}^{2} \uplus \mathcal{W}^{3} \uplus \mathcal{W}^{4}=\mathcal{X}^1}
    \mathtt{n}_{2 \rightarrow E}(\mathcal{W}^{2})\
    \mathtt{n}_{3 \rightarrow E}(\mathcal{W}^{3})
    \mathtt{n}_{4 \rightarrow E}(\mathcal{W}^{4}).
\end{align}
    That is, the outgoing message is the convolution of the three incoming messages, representing the  union of three independent \acp{rfs}.
\end{example}

\begin{prop}[PPP Partitioning and Merging] \label{prop:Poisson}
    Suppose we have a set factor $f_a(\underline{\mathcal{X}}^a)$ and an \ac{rfs} $\mathcal{X}^j=\uplus_{i\in \mathcal{N}(a) \setminus \{j\}}\mathcal{X}^i$.
    Let the \acp{rfs} $\mathcal{X}^i$ for $i \in \mathcal{N}(a)$ follow a Poisson process.
    From~\eqref{eq:SMFactor_ForOut2}, the partitioning messages from $f_a(\underline{\mathcal{X}}^a)$ to $\mathcal{X}^i$ with $i \in \mathcal{N}(a) \setminus \{j\}$ are
\begin{align}
    \mathtt{m}_{a \rightarrow i}(\mathcal{X}^i) 
    \propto f^{\mathrm{PPP}}(\mathcal{X}^{i}),
\end{align}
    From~\eqref{eq:SMFactor_BackOut}, the merging messages from $f_a(\underline{\mathcal{X}}^a)$ to $\mathcal{X}^j$ is
\begin{align}
    \mathtt{m}_{a \rightarrow j}(\mathcal{X}^j) 
    & = 
    \sum_{\uplus_{i \in \mathcal{N}(a) \setminus \{j\}} \mathcal{W}^{i}=\mathcal{X}^j}
    \prod_{i \in \mathcal{N}(a) \setminus \{j\}} 
    f^\mathrm{PPP}(\mathcal{W}^{i}), \label{eq:PPPMerg}
\end{align}
    which follows a \ac{ppp}.
\end{prop}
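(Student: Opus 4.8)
The plan is to establish the two claimed identities by direct substitution of the Poisson density~\eqref{eq:PoissonDensity} into the single-line message expressions~\eqref{eq:SMFactor_ForOut2} and~\eqref{eq:SMFactor_BackOut} that were already derived for the partitioning and merging factor, followed by two elementary simplifications: (i) the set-integral identity $\int \prod_{\mathbf{x} \in \mathcal{X}} \lambda(\mathbf{x})\,\delta\mathcal{X} = \exp\!\big(\int \lambda(\mathbf{x})\,\mathrm{d}\mathbf{x}\big)$, obtained by expanding the left-hand side through the definition~\eqref{eq:SetIntegral} as $\sum_{m \ge 0} \frac{1}{m!}\big(\int \lambda(\mathbf{x})\,\mathrm{d}\mathbf{x}\big)^m$; and (ii) the combinatorial identity that expanding a product over the elements of a set of the sum of intensities reproduces the sum over disjoint partitions of the set of the product of intensities. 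These are precisely the thinning and superposition properties of Poisson processes rewritten in set-integral notation.

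For the partitioning direction, I would take the incoming message $\mathtt{n}_{j \to a}(\mathcal{X}^j)$ to be a Poisson density with intensity $\lambda^j$, i.e., $\mathtt{n}_{j \to a}(\mathcal{X}^j) = e^{-\int \lambda^j(\mathbf{x})\,\mathrm{d}\mathbf{x}} \prod_{\mathbf{x}\in\mathcal{X}^j}\lambda^j(\mathbf{x})$, and substitute it into~\eqref{eq:SMFactor_ForOut2}. Because the Poisson density factorizes over the elements of $\mathcal{X}^i \uplus \mathcal{X}$, the terms depending on $\mathcal{X}^i$ factor out of the set integral over the dummy set $\mathcal{X}$, leaving $\mathtt{m}_{a\to i}(\mathcal{X}^i) = e^{-\int \lambda^j}\big(\prod_{\mathbf{x}\in\mathcal{X}^i}\lambda^j(\mathbf{x})\big)\int \prod_{\mathbf{x}\in\mathcal{X}}\lambda^j(\mathbf{x})\,\delta\mathcal{X}$. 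By identity (i) the remaining set integral equals $e^{\int\lambda^j}$, cancelling the leading exponential, so $\mathtt{m}_{a\to i}(\mathcal{X}^i) = \prod_{\mathbf{x}\in\mathcal{X}^i}\lambda^j(\mathbf{x}) \propto f^{\mathrm{PPP}}(\mathcal{X}^i)$ with intensity $\lambda^j$ (the two differ only by the constant $e^{-\int\lambda^j}$, which is immaterial since BP messages are defined up to proportionality).

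For the merging direction, equation~\eqref{eq:PPPMerg} already follows from~\eqref{eq:SMFactor_BackOut} by inserting $\mathtt{n}_{i\to a}(\mathcal{W}^i) = f^{\mathrm{PPP}}(\mathcal{W}^i)$; what remains is to show that the resulting sum is itself a Poisson density. Writing each factor as $f^{\mathrm{PPP}}(\mathcal{W}^i) = e^{-\int\lambda^i(\mathbf{x})\,\mathrm{d}\mathbf{x}}\prod_{\mathbf{x}\in\mathcal{W}^i}\lambda^i(\mathbf{x})$, the normalization constants combine into the single $\mathcal{X}^j$-independent factor $\exp\!\big(-\sum_{i}\int\lambda^i\big) = \exp\!\big(-\int\sum_i\lambda^i\big)$, and it suffices to evaluate $\sum_{\uplus_i \mathcal{W}^i = \mathcal{X}^j}\prod_i\prod_{\mathbf{x}\in\mathcal{W}^i}\lambda^i(\mathbf{x})$. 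Since selecting a disjoint partition $\uplus_{i\in\mathcal{N}(a)\setminus\{j\}}\mathcal{W}^i = \mathcal{X}^j$ is the same as independently assigning each element $\mathbf{x}\in\mathcal{X}^j$ to one index $i$, this sum equals $\prod_{\mathbf{x}\in\mathcal{X}^j}\sum_{i\in\mathcal{N}(a)\setminus\{j\}}\lambda^i(\mathbf{x})$ by identity (ii). Hence $\mathtt{m}_{a\to j}(\mathcal{X}^j) = e^{-\int\lambda(\mathbf{x})\,\mathrm{d}\mathbf{x}}\prod_{\mathbf{x}\in\mathcal{X}^j}\lambda(\mathbf{x}) = f^{\mathrm{PPP}}(\mathcal{X}^j)$ with intensity $\lambda(\mathbf{x}) = \sum_{i\in\mathcal{N}(a)\setminus\{j\}}\lambda^i(\mathbf{x})$, matching~\eqref{eq:PoissonDensity}.

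None of the steps is long; the only point that deserves a careful line is identity (ii), the element-wise expansion $\sum_{\uplus_i \mathcal{W}^i = \mathcal{X}^j}\prod_i\prod_{\mathbf{x}\in\mathcal{W}^i}\lambda^i(\mathbf{x}) = \prod_{\mathbf{x}\in\mathcal{X}^j}\sum_i\lambda^i(\mathbf{x})$, which I would prove by induction on $\lvert\mathcal{X}^j\rvert$ (peeling off one element and using that it lands in exactly one block) or simply cite as the known Poisson-superposition identity; together with the bookkeeping of set integrals in identity (i), this is essentially the whole argument.
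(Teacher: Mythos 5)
Your proposal is correct and follows essentially the same route as the paper: for partitioning, the paper likewise substitutes the Poisson density into the message formula and factors the integral so that each dummy-set integral of a normalized PPP density evaluates to one (your identity (i) is just this bookkeeping done with the unnormalized product and an explicit $e^{\int\lambda}$ cancellation), and for merging both arguments rest on the superposition property of Poisson processes. The only difference is that the paper stops at asserting that the convolution of PPP messages represents a union of independent Poisson RFSs and hence is a PPP, whereas you carry out the element-assignment expansion $\sum_{\uplus_i \mathcal{W}^i = \mathcal{X}^j}\prod_i\prod_{\mathbf{x}\in\mathcal{W}^i}\lambda^i(\mathbf{x}) = \prod_{\mathbf{x}\in\mathcal{X}^j}\sum_i\lambda^i(\mathbf{x})$ explicitly, which makes your write-up slightly more self-contained on that point.
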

\begin{proof}
    See Appendix~\ref{app:ProofPoi}.
\end{proof}

\subsubsection{Auxiliary Variable Shifting Factor}
    We introduce a factor to change the auxiliary variables. 

\begin{define} [Auxiliary Variable Shifting Function] \label{def:AuxShift}
    Given an arbitrary integer $L$ and a single-target space with auxiliary variables, such that $({u},\mathbf{x})\in \mathbb{U} \times \mathbb{R}^{n_{\mathbf{x}}}$, we define the function
\begin{align}
    \mathtt{h}_L(u,\mathbf{x}) = (u+L,\mathbf{x}),
\end{align}
    which shifts the auxiliary variables units by $L$. The function $\mathtt{h}_L(\cdot)$ can be extended to a set $\tilde{\mathcal{X}} \in \mathcal{F}(\mathbb{U}\times \mathbb{R}^{n_\mathbf{x}})$ such that
\begin{align}
    \mathtt{h}_L(\tilde{\mathcal{X}}) =
    \uplus_{(u,\mathbf{x})\in \tilde{\mathcal{X}}} \{ (u+L,\mathbf{x}) \}.
    \label{eq:ConvfuncAux}
\end{align}
\end{define}
\begin{define} [Conversion Factor for Auxiliary Variables] \label{def:ConvertFactor}
    We have two sets $\tilde{\mathcal{X}}^i=\{({u}^1,\mathbf{x}^1),\dots,({u}^n,\mathbf{x}^n)\}$ and  $\tilde{\mathcal{X}}^j=\{({u}^1+L,\mathbf{x}^1),\dots,({u}^n+L,\mathbf{x}^n)\}$, where $L$ is an integer.
    By Definition~\ref{def:AuxShift}, 
    we define the conversion factor for auxiliary variables 
    as
\begin{align}
    f_a(\tilde{\mathcal{X}}^i,\tilde{\mathcal{X}}^j) = \delta_{\mathtt{h}_{-L}(\tilde{\mathcal{X}}^j)}(\tilde{\mathcal{X}^i}),
\end{align}
    where $\mathtt{h}_{-L}(\cdot)$ is given by \eqref{eq:ConvfuncAux}.
\end{define}
    It is useful to understand how this factor affects the set-messages. 
    Suppose we have an incoming message $\mathtt{n}_{i \rightarrow a}(\tilde{\mathcal{X}}^i)$.
    Then, the outgoing message from $f_a(\tilde{\mathcal{X}}^i,\tilde{\mathcal{X}}^j)$ to $\tilde{\mathcal{X}}^j$ is
\begin{align}
    \mathtt{m}_{a \rightarrow j}(\tilde{\mathcal{X}^j}) 
    &= \int \mathtt{n}_{i \rightarrow a}(\tilde{\mathcal{X}}^i)
    \delta_{\mathtt{h}_{-L}(\tilde{\mathcal{X}}^j)}(\tilde{\mathcal{X}}^i)
    \delta \tilde{\mathcal{X}}^i \\
    &
    = \mathtt{n}_{i \rightarrow a}(\mathtt{h}_{-L}(\tilde{\mathcal{X}}^j))=\mathtt{n}_{i \rightarrow a}(\tilde{\mathcal{X}}^i).
\end{align}
    That is, the outgoing message has the same form as the incoming message with the difference that the auxiliary variables of the incoming message have been converted by the auxiliary variable shifting functions.

\section{Application of Set-Type Belief Propagation}\label{sec:SetBP-SLAM}
    The aim of this section is to propose an application of the developed set-type \ac{bp} update rules and special factors for \acp{rfs}.
    In particular, we derive set-type \ac{bp} \ac{pmb} and set-type \ac{mb} filters for \ac{slam}, where the targets and measurements are modeled by \acp{rfs}.

\subsection{Problem Formulation}
\label{sec:JointSLAMProb}

\subsubsection{Objective}
    Our objective is to compute the marginal densities $f(\mathbf{s}_{k}|\mathcal{Z}_{1:k})$ and $f(\tilde{\mathcal{X}_{k}}|\mathcal{Z}_{1:k})$ at discrete time $k$, where the random vector $\mathbf{s}_k$ denotes a sensor state, the \ac{rfs} $\tilde{\mathcal{X}_{k}}$ denotes the set of target states with  auxiliary variables, modeled by a \ac{pmb}.
    To compute the marginal densities, we adopt the sequential Bayesian framework consisting of prediction and update steps, which will be detailed in Section~\ref{sec:JDF-Pred} and~\ref{sec:JDF-UP}.
\subsubsection{Multi-Target Dynamics}
    Each target $\mathbf{x}_{k-1}\in \tilde{\mathcal{X}}_{k-1}$ at time $k-1$ survives with  probability $p_\mathrm{S}(\mathbf{x}_{k-1})$ or dies with probability $1-p_\mathrm{S}(\mathbf{x}_{k-1})$. 
    The surviving targets evolve with a transition density $f(\cdot|\mathbf{x}_{k-1})$ but may be static (in which case they are landmarks)
    or mobile (in this case they are targets, which is the terminology we will adopt here).
    The set of targets at time step $k$, $\tilde{\mathcal{X}}_k$, is the union of surviving and evolving targets and new targets, where target birth follows a \ac{ppp} with the intensity $\lambda_\mathrm{B}(\cdot)$.
    The sensor may have an unknown state (in the case of \ac{slat} and \ac{slam}) or a known state (in the case of \ac{mtt} and mapping).
\subsubsection{Measurements}
    The targets ${\mathcal{X}}_k$ are observed at the sensor state $\mathbf{s}_k$, and the observations are denoted using a measurement set $\mathcal{Z}_k$.
    Each target $\mathbf{x}_{k}\in \tilde{\mathcal{X}}_{k}$ is detected with probability $p_\mathrm{D}(\mathbf{s}_k,\mathbf{x}_k)$, and if detected, it generates a single measurement with 
    the single target measurement likelihood function $g(\cdot|\mathbf{s}_k,\mathbf{x}_k)$.
    The measurement $\mathcal{Z}_k$ is the union of target measurements and \ac{ppp}
    with the intensity $c(\cdot)$.

\begin{figure*}
\begin{centering}
	\includegraphics[width=2\columnwidth]{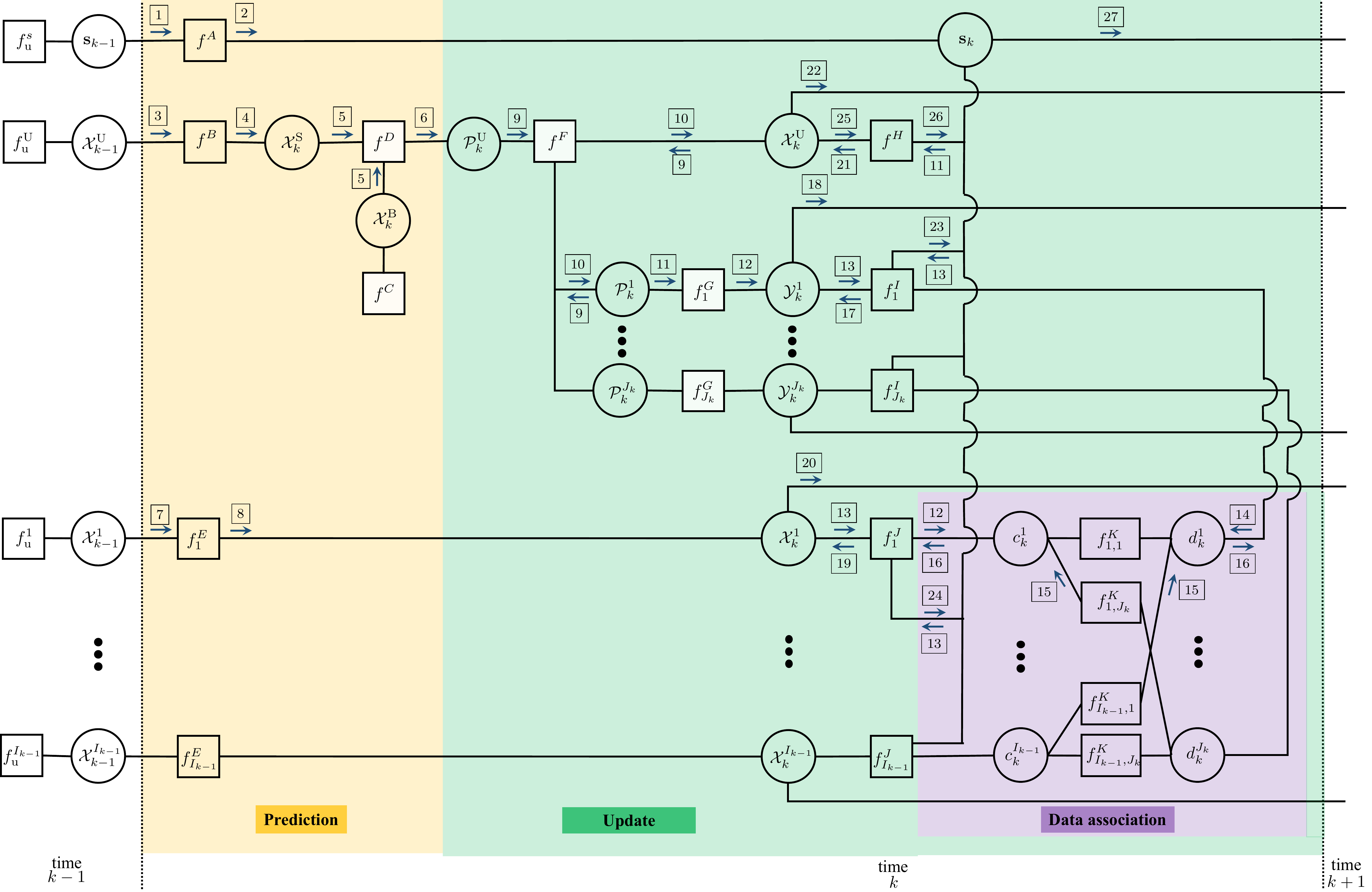}
    \caption{Concatenated factor graph of the joint densities of 
    \eqref{eq:P_factorizedDensity} and \eqref{eq:factorizedDensity}.
    The special factors for set densities, discussed in Section~\ref{sec:SetSpecial}, are represented by white squares.  The abbreviated notations for the factor nodes are represented by dropping the arguments and $\tilde{(\cdot)}$ on sets, as follows: 
    $f_\mathsf{u}^s \triangleq \mathtt{b}(\mathbf{s}_{k-1})$,
    $f_\mathsf{u}^\mathrm{U} \triangleq \mathtt{b}(\mathcal{X}_{k-1}^\mathrm{U})$,
    $f_\mathsf{u}^i \triangleq \mathtt{b}(\mathcal{X}_{k-1}^i)$,
    $f^A \triangleq f^A(\mathbf{s}_k,\mathbf{s}_{k-1})$,
    $f^B \triangleq f^B({\mathcal{X}}_{k}^\mathrm{S},{\mathcal{X}}_{k-1}^\mathrm{U})$,
    $f^C \triangleq f^C(\mathcal{X}_k^\mathrm{B})$,
    $f^D \triangleq f^D({\mathcal{X}}_{k}^\mathrm{S},{\mathcal{X}}_{k}^\mathrm{B},{\mathcal{P}}_{k}^\mathrm{U})$,
    $f_i^E  \triangleq f_i^E(\mathcal{X}_{k}^i,\mathcal{X}_{k-1}^i)$,
    $f^F \triangleq f^F( {\mathcal{X}}_{k}^\mathrm{U}, {\mathcal{P}}_k^1,\cdots ,{\mathcal{P}}_k^{J_k})$,
    $f_j^G  \triangleq f_j^G({\mathcal{P}}_k^j,{\mathcal{Y}}_k^j)$,
    $f^H \triangleq f^H(\mathbf{s}_k,\mathcal{X}_k^\mathrm{U})$,
    $f_j^I \triangleq f_j^I(\mathbf{s}_k,\mathcal{Y}_k^j,d_k^j)$,
    $f_i^J \triangleq f_i^J(\mathbf{s}_k,\mathcal{X}_k^i,c_k^i)$, and
    $f_{i,j}^K \triangleq f_{i,j}^K(c_{k}^i,d_{k}^j)$.
    }
	\label{Fig:FG}
\end{centering}
\end{figure*}

\subsection{Prediction with Joint Density and Factor Graph}
\label{sec:JDF-Pred}
    Without loss of generality, we consider two time steps, $k-1$ and $k$, as part of an iterative Bayesian filter. 
    For the factor graph formulation, 
    the joint density for all variables (including auxiliary variables) in the prediction is factorized as
\begin{subequations}
\begin{align}
    &f(\mathbf{s}_{k-1:k},\tilde{\mathcal{X}}_{k-1}^\mathrm{U},\tilde{\mathcal{X}}_{k}^\mathrm{S},\tilde{\mathcal{X}}_{k}^\mathrm{B},\tilde{\mathcal{P}}_{k}^\mathrm{U},\tilde{\mathcal{X}}_{k-1:k}^{1:I_{k-1}})\notag\\
  & \propto f_{\mathtt{u}}(\mathbf{s}_{k-1})f_{\mathtt{u}}^{\mathrm{U}}(\tilde{\mathcal{X}}_{k-1}^{\mathrm{U}})\prod_{i\in\mathcal{I}_{k-1}}f_{\mathtt{u}}^{i}(\tilde{\mathcal{X}}_{k-1}^{i}) \label{eq:P_factorized1}\\
 & \,\, \times f(\mathbf{s}_{k}|\mathbf{s}_{k-1}) 
 f^\mathrm{P}(\tilde{\mathcal{X}}_{k}^{\mathrm{S}}|\tilde{\mathcal{X}}_{k-1}^{\mathrm{U}}) f(\tilde{\mathcal{X}}_{k}^{i}|\tilde{\mathcal{X}}_{k-1}^{i})\label{eq:P_factorized2}\\
 & \,\, \times f^{\mathrm{U}}(\tilde{\mathcal{X}}_{k}^{\mathrm{B}})\delta_{\tilde{\mathcal{X}}_{k}^{\mathrm{S}}\uplus\tilde{\mathcal{X}}_{k}^{\mathrm{B}}}(\tilde{\mathcal{P}}_{k}^{\mathrm{U}}),
 \label{eq:P_factorized3}
\end{align}
\label{eq:P_factorizedDensity}%
\end{subequations}
where $\tilde{\mathcal{X}}_{k-1:k}^{1:I_{k-1}}$ represents the sequence of sets $\tilde{\mathcal{X}}^{i}$ for $i=1,\dots,I_{k-1}$ at time $k$ and $k-1$, and $I_{k-1}$ is the number of Bernoullis at time $k-1$.
For completeness, the meaning of each line in~\eqref{eq:P_factorizedDensity} is described as follows.
\begin{itemize}
    \item \emph{Posterior at time $k-1$ \eqref{eq:P_factorized1}}: This line describes 
    the posterior at time step $k-1$, which is assumed to be $f_{\mathrm{u}}(\mathbf{s}_{k-1},\mathcal{\tilde{X}}_{k-1})=f_{\mathrm{u}}(\mathbf{s}_{k-1}) f_{\mathrm{u}}(\mathcal{\tilde{X}}_{k-1})$, where $f_{\mathrm{u}}(\mathbf{s}_{k-1})$ is the sensor state posterior, and $f_{\mathrm{u}}(\tilde{\mathcal{X}}_{k-1}^{\mathrm{U}})$ is the target set posterior. Here  $\tilde{\mathcal{X}}_{k-1}$ follows a \ac{pmb}, 
    endowed with auxiliary variables (see Section~\ref{sec:AuxV}).
    Due to the auxiliary variables, the \ac{pmb} posterior at time $k-1$, $f_{\mathrm{u}}(\mathcal{\tilde{X}}_{k-1})$, can be factorized as the product of $f_{\mathrm{u}}(\tilde{\mathcal{X}}_{k-1}^{\mathrm{U}})$ and $f_{\mathrm{u}}(\tilde{\mathcal{X}}_{k-1}^{i})$ for $i\in\mathcal{I}_{k-1}$, where $\mathcal{I}_{k-1}=\{1,\dots,I_{k-1}\}$.
    Here $\tilde{\mathcal{X}}_{k-1}^{\mathrm{U}}$ is the set of undetected targets (modeled as a PPP), $\tilde{\mathcal{X}}_{k-1}^{i}$ is the set of detected target $i$ (each modeled as a Bernoulli).
    The subsets $\tilde{\mathcal{X}}_{k-1}^\mathrm{U}$ and $\tilde{\mathcal{X}}_{k-1}^i $ of $\tilde{\mathcal{X}}_{k-1}^{\mathrm{U}}$ are defined as
\begin{align}
    \tilde{\mathcal{X}}_{k-1}^\mathrm{U}& =  \{(u,\mathbf{x}) \in \tilde{\mathcal{X}}_{k-1}: u = 0\} ,\\
    \tilde{\mathcal{X}}_{k-1}^i &= \{(u,\mathbf{x}) \in \tilde{\mathcal{X}}_{k-1}: u = i\},
    ~i\in \mathcal{I}_{k-1}.
\end{align}
    The set densities for targets are given by the form of~\eqref{eq:PMB_Aux}--\eqref{eq:PMB_Aux_Ber}.

\item \emph{Transition densities \eqref{eq:P_factorized2}}: 
    This line describes the transition densities, where $f(\mathbf{s}_{k}|\mathbf{s}_{k-1})$, $f(\tilde{\mathcal{X}}_{k}^{i}|\tilde{\mathcal{X}}_{k-1}^{i})$, and $f^\mathrm{P}(\tilde{\mathcal{X}}_{k}^{\mathrm{S}}|\tilde{\mathcal{X}}_{k-1}^{\mathrm{U}})$ correspond to the sensor state, previously detected targets, and undetected targets.
    Here $\tilde{\mathcal{X}}_{k}^{\mathrm{S}}$ denotes a surviving set from $\tilde{\mathcal{X}}_{k-1}^{\mathrm{U}}$, modeled as a \ac{ppp} with auxiliary variable $u=0$.
    The transition density for Bernoullis with auxiliary variables is 
\begin{align}
    &f(\tilde{\mathcal{X}}_k|\tilde{\mathcal{Y}}_k) \label{eq:Tran_Ber}\\
    &=
    \begin{cases}
        p_\mathrm{S}(\mathbf{y}_k)f(\mathbf{x}_k|\mathbf{y}_k)\delta_{u_k'}[u_k], & \tilde{\mathcal{X}}_k=\{(u_k,\mathbf{x}_k)\},\\
        & \tilde{\mathcal{Y}_k}=\{(u_k',\mathbf{y}_k)\}\\
        1-p_\mathrm{S}(\mathbf{y}_k), & \tilde{\mathcal{X}}_k=\emptyset, \\
        & \tilde{\mathcal{Y}}_k=\{(u_k',\mathbf{y}_k)\}\\
        1, & \tilde{\mathcal{X}}_k=\emptyset,\tilde{\mathcal{Y}}_k=\emptyset\\
        0, & \text{otherwise}
    \end{cases},\notag
\end{align}
    and the transition density for the \ac{ppp} is 
\begin{align}
    &f^\mathrm{P}(\tilde{\mathcal{X}}_k|\tilde{\mathcal{Y}}_k) 
    \\
    &= 
    \begin{cases}
        \sum\limits_{\uplus_{i=1}^n \tilde{\mathcal{X}}_k^i=\tilde{\mathcal{X}}_k}\prod\limits_{i=1}^n f(\tilde{\mathcal{X}}_k^i |\{\tilde{\mathbf{y}}_k^i\}), & \tilde{\mathcal{Y}}_k=\{\tilde{\mathbf{y}}_k^1,\dots,\tilde{\mathbf{y}}_k^n\} \\
        1, & \tilde{\mathcal{X}}_k=\emptyset, \tilde{\mathcal{Y}}_k=\emptyset\\
        0, & \text{otherwise}
    \end{cases}. \notag
\end{align}
    where $f(\tilde{\mathcal{X}}_k^i |\{\tilde{\mathbf{y}}_k^i\})$ follows~\eqref{eq:Tran_Ber} and $\tilde{\mathbf{y}}_k^i=(0,\mathbf{y}_k^i)$.
\item \emph{Undetected targets \eqref{eq:P_factorized3}}:
    This last line describes the set-density of newborn targets at time $k$, represented by $f(\tilde{\mathcal{X}}_{k}^{\mathrm{B}})$, where $\tilde{\mathcal{X}}_{k}^{\mathrm{B}}$ denotes a set of newborn targets that follows a PPP, and the set-factor $\delta_{\tilde{\mathcal{X}}_{k}^{\mathrm{S}}\uplus\tilde{\mathcal{X}}_{k}^{\mathrm{B}}}(\tilde{\mathcal{P}}_{k}^{\mathrm{U}})$ merges the set of surviving targets $\tilde{\mathcal{X}}_{k}^{\mathrm{S}}$ and the set of newborn targets $\tilde{\mathcal{X}}_{k}^{\mathrm{B}}$ into the set $\tilde{\mathcal{P}}_{k}^{\mathrm{U}}$ (see also Definition~\ref{def:PartitionFactor}).
    Both $\tilde{\mathcal{X}}_{k}^{\mathrm{B}}$ and $\tilde{\mathcal{P}}_{k}^{\mathrm{U}}$ have auxiliary variable $u=0$.
    
\end{itemize}

    Using the factorized density in~\eqref{eq:P_factorizedDensity}, we depict the factor graph for prediction, as shown in  Fig.~\ref{Fig:FG} (yellow area).
    Here we use the following notations for the factor nodes: 
\begin{subequations}
\begin{align}
    &f^A(\mathbf{s}_k,\mathbf{s}_{k-1}) \triangleq f(\mathbf{s}_k|\mathbf{s}_{k-1}), \\
    &f^B(\tilde{\mathcal{X}}_{k}^\mathrm{S},\tilde{\mathcal{X}}_{k-1}^\mathrm{U}) 
    \triangleq 
    f(\tilde{\mathcal{X}}_{k}^\mathrm{S}|\tilde{\mathcal{X}}_{k-1}^\mathrm{U}),\\
    &f^C(\tilde{\mathcal{X}}_k^\mathrm{B}) \triangleq f^\mathrm{U}(\tilde{\mathcal{X}}_k^\mathrm{B})\\
    &f^D(\tilde{\mathcal{X}}_{k}^\mathrm{S}, \tilde{\mathcal{X}}_{k}^\mathrm{B},\tilde{\mathcal{P}}_{k}^\mathrm{U}) 
    \triangleq 
    \delta_{\tilde{\mathcal{X}}_{k}^\mathrm{S} \uplus \tilde{\mathcal{X}}_{k}^\mathrm{B}}(\tilde{\mathcal{P}}_{k}^\mathrm{U}),\\
    &f_i^E(\tilde{\mathcal{X}}_{k}^i,\tilde{\mathcal{X}}_{k-1}^i)  \triangleq 
    f(\tilde{\mathcal{X}}_{k}^i|\tilde{\mathcal{X}}_{k-1}^i).
\end{align}
\end{subequations}
    We compute the messages on the prediction factor graph in Fig.~\ref{Fig:FG}, which has no cycle, by running developed set-type \ac{bp}.\footnote{This step is a straightforward application of the message passing rules introduced in Section \ref{sec:FG_SetBP}.
    For completeness, the messages are detailed in Appendix~\ref{SM:Messages} of the supplementary material.}
    Note that the messages \fbox{\footnotesize{2}}, \fbox{\footnotesize{6}}, and \fbox{\footnotesize{8}} are equivalent to the marginal densities in prediction of  $\mathbf{s}_k$, $\tilde{\mathcal{P}}_k^\mathrm{U}$, and $\tilde{\mathcal{X}}_k^{1:I_{k-1}}$, corresponding to prediction of the standard \ac{pmb} filter~\cite{Jason_PMB_TAES2015}, represented as $f_\mathtt{p}(\mathbf{s}_k)$, $f_\mathtt{p}^{\mathrm{U}}(\tilde{\mathcal{P}}_k^\mathrm{U})$, and $f_\mathtt{p}^i(\tilde{\mathcal{X}}_k^i)$ for $i\in\mathcal{I}_{k-1}$.

\subsection{Update with Joint Density and its Factor Graph}
\label{sec:JDF-UP}
    Without any loss of generality, we consider the update step at time $k$ after the prediction step.
    The joint density for all variables in the update is
\begin{subequations}
\begin{align}
    &f(\mathbf{s}_{k},\tilde{\mathcal{P}}_{k}^\mathrm{U},\tilde{\mathcal{P}}_{k}^{1:J_k},\tilde{\mathcal{X}}_{k}^\mathrm{U},\tilde{\mathcal{Y}}_{k}^{1:J_k},\tilde{\mathcal{X}}_{k}^{1:I_{k-1}},\mathbf{c}_{k},\mathbf{d}_{k}|\mathcal{Z}_{k})\notag \\
    & \propto f_{\mathtt{p}}(\mathbf{s}_{k})f_{\mathtt{p}}^{\mathrm{U}}(\tilde{\mathcal{X}}_{k}^{\mathrm{U}})\prod_{i\in\mathcal{I}_{k-1}}f_{\mathtt{p}}^{i}(\tilde{\mathcal{X}}_{k}^{i}) \label{eq:factorized1}\\
    & \times \delta_{\tilde{\mathcal{X}}_{k}^{\mathrm{U}}\uplus_{j=1}^{J_{k}}\tilde{\mathcal{P}}_{k}^{j}}(\tilde{\mathcal{P}}_{k}^{\mathrm{U}})
    \prod_{j\in\mathcal{J}_{k}} \psi(c_{k}^{i},d_{k}^{j}) \delta_{\mathtt{h}_{-(I_{k-1}+j)}(\tilde{\mathcal{Y}}_{k}^{j})}(\tilde{\mathcal{P}}_{k}^{j})
 \label{eq:factorized3}\\
    & \times \tilde{l}(\mathbf{z}_{k}^{j}|\mathbf{s}_{k},\tilde{\mathcal{Y}}_{k}^{j},d_{k}^{j})t(\mathcal{Z}_{k}^{i}|\mathbf{s}_{k},\tilde{\mathcal{X}}_{k}^{i},c_{k}^{i}) [1-p_\mathrm{D}(\mathbf{s}_k,\cdot)]^{\tilde{\mathcal{X}}_{k}^{\mathrm{U}}}. \label{eq:factorized4}
\end{align}
\label{eq:factorizedDensity}%
\end{subequations}
    The proof of~\eqref{eq:factorizedDensity} is an extension of Appendix~E in~\cite{Hyowon_MPMB_TVT2022} and~\cite{Angel_PMBM_TAES2018}, found in Appendix~\ref{SM:JointUp}.
    For completeness, we describe the meaning of each line in \eqref{eq:factorizedDensity} as follows.
\begin{itemize}
    \item \emph{Prediction at time $k-1$~\eqref{eq:factorized1}}: This line describes the predicted densities at time step $k$, discussed at the end of Section~\ref{sec:JDF-Pred}.
    \item \emph{Consistency constraints~\eqref{eq:factorized3}}: This line ensures consistency among the introduced hidden variables.
    A set of measurements $\mathcal{Z}_{k}$ is provided at time $k$ with index set $\mathcal{J}_{k}=\{1,\dots,J_k\}$. 
    We introduce data association variables $\mathbf{c}_{k}\in\mathbb{N}^{I_{k-1}}$, $c_{k}^{i}\in\{0,\dots,J_{k}\}$, and $\mathbf{d}_{k}\in\mathbb{N}^{J_{k-1}}$, $d_{k}^{j}\in\{0,\dots,I_{k-1}\}$, where, in target-oriented data association, $c_{k}^{i}=j$ implies that target $i$ is associated with measurement $j$, and $c_{k}^{i}=0$ implies that target $i$ is not detected.
    In measurement-oriented data association, $d_{k}^{j}=i>0$ implies that measurement $j$ is associated with target $i$, and $d_{k}^{j}=0$ implies that measurement $j$ originates from either a target that has never been detected or clutter. 
    In particular, we introduced $\Psi(\mathbf{c}_{k},\mathbf{d}_{k})=\prod_{i=1}^{I_{k-1}}\prod_{j=1}^{J_{k}}\psi(c_{k}^{i},d_{k}^{j})$, which ensures mutual consistency between $\mathbf{c}_{k}$ and $\mathbf{d}_{k}$, with $\psi(c_{k}^{i},d_{k}^{j})=0$ when $c_{k}^{i}=j,d_{k}^{j}\neq i~\text{or}~c_{k}^{i}\neq j,d_{k}^{j}=i$, and 1 otherwise. 
    The factor $\delta_{\tilde{\mathcal{X}}_{k}^{\mathrm{U}}\uplus_{j=1}^{J_{k}}\tilde{\mathcal{P}}_{k}^{j}}(\tilde{\mathcal{P}}_{k}^{\mathrm{U}})$ partitions the set $\tilde{\mathcal{P}}_{k}^{\mathrm{U}}$ into $J_k +1 $ sets,
    such that $\tilde{\mathcal{P}}_{k}^{\mathrm{U}}= \tilde{\mathcal{X}}_{k}^{\mathrm{U}} \uplus_{j=1}^{J_k} \tilde{\mathcal{P}}_{k}^{j}$, where $\tilde{\mathcal{X}}_{k}^{\mathrm{U}}$ is the set of targets that remain undetected (as indicated by $[1-p_\mathrm{D}(\mathbf{s}_k,\cdot)]^{\tilde{\mathcal{X}}_{k}^{\mathrm{U}}}$ in the likelihood part), and $\tilde{\mathcal{P}}_{k}^{j}$ for $j\in\mathcal{J}_k$ represent surviving undetected or newborn targets that are first detected at time $k$. The factor $\delta_{\mathtt{h}_{-(I_{k-1}+j)}(\tilde{\mathcal{Y}}_{k}^{j})}(\tilde{\mathcal{P}}_{k}^{j})$ (see Definition~\ref{def:ConvertFactor}) converts the auxiliary variable of $\tilde{\mathcal{P}}_{k}^{j}$ from 0 to $I_{k-1}+j$, 
    turning the set $\tilde{\mathcal{P}}_{k}^{j}$ into a set $\tilde{\mathcal{Y}}_{k}^{j}$ corresponding to a newly detected target (or clutter) arising from measurement $j$. 
    The subsets $\tilde{\mathcal{X}}_k^\mathrm{U}$, $\tilde{\mathcal{X}}_k^i$, and $\tilde{\mathcal{Y}}_k^j$ are defined as
\begin{align}
    \tilde{\mathcal{X}}_k^\mathrm{U}& =  \{(u,\mathbf{x}) \in \tilde{\mathcal{X}}_k: u = 0\},\\
	\tilde{\mathcal{X}}_k^i &= \{(u,\mathbf{x}) \in \tilde{\mathcal{X}}_k: u = i\},
    ~i\in \mathcal{I}_{k-1},\\
    \tilde{\mathcal{Y}}_k^j &= \{(u,\mathbf{x}) \in \tilde{\mathcal{X}}_k: u = I_{k-1} + j\},
    ~j \in \mathcal{J}_k.
\end{align}
    \item \emph{Set-likelihoods~\eqref{eq:factorized4}}: This last line describes the likelihood, conditioned on the data association $\mathbf{c}_{k}$
(or equivalently $\mathbf{d}_{k}$).
The factor $\tilde{l}(\mathbf{z}_{k}^{j}|\mathbf{s}_{k},\tilde{\mathcal{Y}}_{k}^{j},d_{k}^{j})$
considers potential new targets or clutter, while $t(\mathcal{Z}_{k}^{i}|\mathbf{s}_{k},\tilde{\mathcal{X}}_{k}^{i},c_{k}^{i})$
considers detections and missed detections of the previously detected targets. 
    Both likelihoods are defined as follows. We consider the likelihood when the $j$-th measurement $\mathbf{z}_k^j$ is associated with a newly detected target or with clutter conditioned on measurement-oriented data association variable $d_k^j$: 
$\tilde{l}(\mathbf{z}_k^j|\mathbf{s}_k,\tilde{\mathcal{Y}_k}^{j},d_k^j)$ is given by 
\begin{align}
    &\tilde{l}(\mathbf{z}_k^j|\mathbf{s}_k,\tilde{\mathcal{Y}_k}^{j},d_k^j) \label{eq:likelihoodPoi} \\
    &=
    \begin{cases} 
        p_\mathrm{D}(\mathbf{s}_k,\mathbf{x}_k)g(\mathbf{z}_k^j|\mathbf{s}_k,\mathbf{x}_k)\delta_0[u_k], & \tilde{\mathcal{Y}_k}^{j} = \{(u_k,\mathbf{x}_k)\},\\
        &d_k^j=0 \\
        c(\mathbf{z}_k), & \tilde{\mathcal{Y}_k}^{j} = \emptyset,~d_k^j=0\\
        1, & \tilde{\mathcal{Y}_k}^{j}=\emptyset ,~d_k^j \neq 0\\
        0, & \mathrm{otherwise}.
    \end{cases}\notag 
\end{align}
Here $p_\mathrm{D}(\cdot)$ accounts for the fact that a target may be detected, while $c(\mathbf{z}_k)$ represents the clutter intensity (when a measurement is generated by clutter and not a newly detected target). The function $g(\mathbf{z}_k^j|\mathbf{s}_k,\mathbf{x}_k)$ is a classical likelihood function. 
Then, the likelihood $t(\mathcal{Z}_k^i|\mathbf{s}_k,\tilde{\mathcal{X}}_k^i,c_k^i)$ considers the case when a set  $\mathcal{Z}_k^i$ is associated to the $i$-th previously detected target conditioned on target-oriented data association variable $c_k^i$:
\begin{align}
    &t(\mathcal{Z}_k^i|\mathbf{s}_k,\tilde{\mathcal{X}}_k^i,c_k^i) \label{eq:likelihoodBer} \\
    & =
    \begin{cases}
        p_\mathrm{D}(\mathbf{s}_k,\mathbf{x}_k^i)g(\mathbf{z}_k^j|\mathbf{s}_k,\mathbf{x}_k^i)\delta_i[u_k], & \mathcal{Z}_k^i=\{\mathbf{z}_k^j \},~c_k^i=j,\\
        & \tilde{\mathcal{X}}_k^i=\{(u_k,\mathbf{x}_k^i)\}\\
        (1-\mathtt{p}_\mathrm{D}(\mathbf{s}_k,\mathbf{x}_k^i))\delta_i[u_k], & \mathcal{Z}_k^i = \emptyset,~c_k^i = 0,\\
        &\tilde{\mathcal{X}}_k^i=\{(u_k,\mathbf{x}_k^i)\}\\
        1, & \mathcal{Z}_k^i = \emptyset,~c_k^i =0,\\
        & \tilde{\mathcal{X}}_k^i=\emptyset\\
        0, & \mathrm{otherwise}.
    \end{cases}\notag
\end{align}

The factor $[1-p_\mathrm{D}(\mathbf{s}_k,\cdot)]^{\tilde{\mathcal{X}}_{k}^{\mathrm{U}}}$ describes the set of targets that remain undetected.
\end{itemize}

    Using the factorized density in~\eqref{eq:factorizedDensity}, we depict the factor graph for update, as shown in  Fig.~\ref{Fig:FG} (green area).
    Here we use the following notations for the factor nodes:
\begin{subequations}
\begin{align}
    &f^F( \tilde{\mathcal{X}}_{k}^\mathrm{U}, \tilde{\mathcal{P}}_k^1,\cdots ,\tilde{\mathcal{P}}_k^{J_k})  \triangleq \delta_{ \tilde{\mathcal{X}}_{k}^\mathrm{U} \uplus_{j=1}^{J_k} \tilde{\mathcal{P}}_k^j }(\tilde{\mathcal{P}}_{k}^\mathrm{U}),\\
    &f_j^G(\tilde{\mathcal{P}}_k^j,\tilde{\mathcal{Y}}_k^j)  \triangleq \delta_{\mathtt{h}_{-(I_{k-1}+j)}(\tilde{\mathcal{Y}}_k^j)}(\tilde{\mathcal{P}}_k^j), \\
    &f^H(\mathbf{s}_k,\tilde{\mathcal{X}}_k^\mathrm{U}) \triangleq [1-p_\mathrm{D}(\mathbf{s}_k,\cdot)]^{\tilde{\mathcal{X}}_k^\mathrm{U} },\\
    &f_j^I(\mathbf{s}_k,\tilde{\mathcal{Y}}_k^j,d_k^j)  \triangleq \tilde{l}(\mathbf{z}_k^j|\mathbf{s}_k,\tilde{\mathcal{Y}}_k^j,d_k^j),\\
    &f_i^J(\mathbf{s}_k,\tilde{\mathcal{X}}_k^i,c_k^i)  \triangleq t(\mathcal{Z}_k^i|\mathbf{s}_k,\tilde{\mathcal{X}}_k^i,c_k^i),\\ 
    &f_{i,j}^K(c_{k}^i,d_{k}^j)  \triangleq \psi(c_{k}^i,d_{k}^j).
\end{align}
\end{subequations}
We compute the messages and beliefs on the update factor graph in Fig.~\ref{Fig:FG} by running developed set-type \ac{bp}, detailed in Appendix~\ref{SM:Messages} of the supplementary material.
    Messages are not sent backward in time. The iterative message is only performed for the data association step, whereas for other steps a single message passing is performed.
As a final result, the message \fbox{\footnotesize{27}} represents the posterior of the sensor state at the end of time step $k$, \fbox{\footnotesize{22}} represents the posterior PPP of undetected targets, \fbox{\footnotesize{18}} represents the posteriors of the Bernoullis of each newly detected target, and \fbox{\footnotesize{20}} represents the posteriors of the Bernoullis of each previously detected target. These posteriors will be used when creating the factor graph connecting time step $k$ with time step $k+1$. 
Note that the number of detected targets grows over time. 

\subsection{Connecting the Factor Graphs in Prediction and Update}

    So far, we have discussed the prediction and update factor graphs independently. We now proceed to explain how these factor graphs are connected. The factor graph in the prediction~(see, yellow part in Fig.~\ref{Fig:FG}) includes variable nodes $\mathbf{s}_k,\tilde{\mathcal{P}}_k^\mathrm{U},\tilde{\mathcal{X}}_k^{1},\dots,\tilde{\mathcal{X}}_k^{I_{k-1}}$
while the factor graph in the update (see, green and purple parts in Fig.~\ref{Fig:FG})
contains variables $\mathbf{s}_k,\tilde{\mathcal{X}}_k^\mathrm{U},\tilde{\mathcal{P}}_k^{1},\dots,\tilde{\mathcal{P}}_k^{J_k},\tilde{\mathcal{X}}_k^{1},\dots,\tilde{\mathcal{X}}_k^{I_{k-1}}$.
We note that the variables $\mathbf{s}_k,\tilde{\mathcal{X}}_k^{1},\dots,\tilde{\mathcal{X}}_k^{I_{k-1}}$ are the same. To connect the other variable nodes in the prediction and update factor graphs, we can use a merging and partitioning factor~(see, Definition~\ref{def:PartitionFactor}) as well as a conversion factor for auxiliary variables~(see, Definition~\ref{def:ConvertFactor})
for newly detected Bernoullis. 
    That is, we first adopt the factor $f^F(\tilde{\mathcal{P}}_k^\mathrm{U},\tilde{\mathcal{X}}_k^\mathrm{U},\tilde{\mathcal{P}}_k^{1},\dots,\tilde{\mathcal{P}}_k^{J_k})$,
    which partitions the set $\tilde{\mathcal{P}}_k^\mathrm{U}$ into subsets $\tilde{\mathcal{X}}_k^\mathrm{U},\tilde{\mathcal{Y}}_k^{1},\dots,\tilde{\mathcal{Y}}_k^{J_k}$ (all with auxiliary variable 0). 
    Then, to each of the newly detected Bernoullis, we apply the conversion factor for auxiliary variables $f_j^G(\tilde{\mathcal{P}}_k^j,\tilde{\mathcal{Y}}_k^{j})$.
    Considering the set-variables that are linked to the set-factors $f^F(\cdot)$ and $f_j^G(\cdot)$, see Fig.~\ref{Fig:FG},
    we know that the set-variable $\tilde{\mathcal{Y}}_k^{j}$ and ${\mathcal{X}}_k^\mathrm{U}$ follow a Poisson process with $u = I_{k-1}+j$ and $u=0$, respectively.
    Finally, the factor graphs for prediction and update are connected by the factors~($f^A(\cdot)$, $f^D(\cdot)$, and $f_i^E(\cdot)$) and their linked variables.

\begin{rem}[Factor Graphs for Smoothing at the Previous Time Step]
    Fig.~\ref{Fig:FG} shows the concatenation of factor graphs for the prediction and update steps at time $k-1$ and $k$. 
    It is possible to derive a joint prediction and update factor graph (instead of its concatenation). 
    This factor graph would require the use of \acp{pmb} for sets of trajectories between time $k-1$ and $k$~\cite{Angel_Trajectory_TSP2020,Angel_MTTT_TAES2020}.
    Furthermore, this factor graph would enable us to infer the state at time $k-1$ of a Bernoulli created at time $k$, via smoothing. 
\end{rem}

    \begin{rem}[Special Cases and Generalizations]
    The factorization~(\eqref{eq:P_factorizedDensity} and~\eqref{eq:factorizedDensity}) and the corresponding factor graph in Fig.~\ref{Fig:FG} can be specialized and generalized to cover a variety of applications. Some special cases include: 
    \begin{itemize}
        \item Mapping and SLAM are obtained when the targets have no mobility, which is obtained when the transition density is $f(\mathbf{x}_k|\mathbf{x}_{k-1})=\delta(\mathbf{x}_k-\mathbf{x}_{k-1})$.
        \item Mapping and MTT are obtained when then sensor state is known at all times, so that $f_{\mathtt{u}}(\mathbf{s}_{k-1})$ and $f(\mathbf{s}_{k}|\mathbf{s}_{k-1})$ are removed and $\bm{s}_{k-1}$ and $\bm{s}_{k}$ no longer explicitly appear as vertices in the factor graph (instead they are absorbed as parameters in the likelihood functions).        
    \end{itemize}
    \end{rem}

    \begin{rem}[Set-Type BP MB Filter]
        To obtain a set-type \ac{bp} MB filter (whether labeled or not),
        we use the same modeling assumptions as in the set-type \ac{bp} \ac{pmb} filter except that the birth model is \ac{mb} instead of Poisson. Then, we set the intensity of \ac{ppp} to zero and add the Bernoulli components of the birth process in the prediction step~\cite{Angel_MBM_Fusion2019}.  
        The corresponding factor graph is equivalent to the one in Fig.~\ref{Fig:FG} but removing all the \ac{ppp} variables ($\mathcal{X}_{k-1}^\mathrm{U},\mathcal{X}_{k}^\mathrm{S},\mathcal{X}_{k}^\mathrm{B},\mathcal{P}_{k}^\mathrm{U},\mathcal{P}_{k}^1,\dots,\mathcal{P}_{k}^{J_k},\mathcal{X}_{k}^\mathrm{U}$), their connected factors ($f_\mathsf{u}^\mathrm{U},f^B,f^C,f^D,f^F,f_1^G,\dots,f_{J_k}^G$), 
        and newly detected targets ($\mathcal{Y}_k^1,\dots,\mathcal{Y}_k^{J_k}$), 
        and also adding the new birth Bernoulli variables and factors in the prediction step.
    \end{rem}


\begin{rem}
[On the Use of Set-Type BP with Other RFS Filters]
It should be noted that set-type BP can be used to derive \ac{rfs} filters based on computing marginal distributions for targets, such as \ac{pmb} and \ac{mb} filters, for general measurement and clutter models, including coexisting point and extended targets \cite{10130623}.
It is not suitable to derive \ac{phd} and \ac{cphd} filters as these filters do not calculate marginals but approximate the posterior as a \ac{ppp} or as an independent and identically distributed cluster process~\cite{Mahler_book2014}. Set-type BP is also not suitable for implementing
filters based on conjugate priors such as \ac{pmbm} and \ac{glmb} filters.
\end{rem}

    
\subsection{Approximate KLD Minimization of Set-Type BP PMB for SLAM}

    Assuming that the prior is a \ac{pmb}, the prediction of the set-type \ac{bp} is represented in closed-form. As a direct application of the \ac{pmbm} update~\cite[Sec.~IV-B]{Hyowon_MPMB_TVT2022}, the updated density is $\tilde{f}_\mathsf{u}^\mathrm{PMBM}(\tilde{\mathcal{X}}_k|\mathbf{s}_k)f_\mathsf{u}(\mathbf{s}_k)$, where $\tilde{f}_\mathsf{u}^\mathrm{PMBM}(\tilde{\mathcal{X}}_k|\mathbf{s}_k)$ corresponds to the \ac{pmbm}
    with the auxiliary variables~\cite[Sec.~III-A]{Angel_MTTT_TAES2020}, given the sensor state $\mathbf{s}_k$, where $\tilde{\mathcal{X}}_k = \tilde{\mathcal{X}}_k^\mathrm{U} \uplus \tilde{\mathcal{X}}_k^{1:I_{k-1}} \uplus \tilde{\mathcal{Y}}_k^{1:J_k}$~(see, \eqref{eq:factorizedDensity}).
\begin{lemma}
    The marginals of $\mathbf{s}_k$, $\tilde{\mathcal{X}}_k^\mathrm{U}$, $\tilde{\mathcal{X}}_k^i$, $\tilde{\mathcal{Y}}_k^j$ of~\eqref{eq:factorizedDensity} represent a \ac{pmb} that is an optimal approximation of the corresponding \ac{pmbm} posterior in the sense that it minimizes the \ac{kld}~$D(\tilde{f}_\mathsf{u}^\mathrm{PMBM}(\tilde{\mathcal{X}}_k|\mathbf{s}_k)f_\mathsf{u}(\mathbf{s}_k)||\tilde{q}_\mathtt{u}^\mathrm{PMB}(\tilde{\mathcal{X}})q_\mathtt{u}(\mathbf{s}_k))$.
    The marginal densities of $\mathbf{s}_k$, $\tilde{\mathcal{X}}_k^\mathrm{U}$, $\tilde{\mathcal{X}}_k^i$, $\tilde{\mathcal{Y}}_k^j$ are given by
\begin{align}
    \tilde{q}_\mathtt{u}^\mathrm{PMB}(\tilde{\mathcal{X}}) &= \tilde{q}_\mathtt{u}^{\mathrm{U}}(\tilde{\mathcal{X}}^\mathrm{U})\prod_{i=1}^{n}\tilde{q}_\mathtt{u}^{i}(\tilde{\mathcal{X}}^i), \\ 
    q_\mathtt{u}(\mathbf{s}_k) &= f_\mathtt{u}(\mathbf{s}_k), \\
    q_\mathtt{u}(\tilde{\mathcal{X}}_k^\mathrm{U}) &= \int f_\mathtt{u}(\tilde{\mathcal{X}}_k^\mathrm{U}|\mathbf{s}_k) f_\mathtt{u}(\mathbf{s}_k) \mathrm{d}\mathbf{s}_k,\\
    q_\mathtt{u}(\tilde{\mathcal{X}}_k^i) &= \int f_\mathtt{u}(\tilde{\mathcal{X}}_k^i|\mathbf{s}_k) f_\mathtt{u}(\mathbf{s}_k) \mathrm{d}\mathbf{s}_k,\\
    q_\mathtt{u}(\tilde{\mathcal{Y}}_k^j) &= \int f_\mathtt{u}(\tilde{\mathcal{Y}}_k^j|\mathbf{s}_k) f_\mathtt{u}(\mathbf{s}_k) \mathrm{d}\mathbf{s}_k.
\end{align}
    Then, upon convergence, the set-type \ac{bp} applied to \eqref{eq:factorizedDensity} approximates these marginal densities by producing the interior points of the constrained Bethe free energy on the factor graph, see Theorem~\ref{theo:DerSetBP}.
\end{lemma}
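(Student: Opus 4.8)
The plan is to prove the statement in three stages: (i) identify the exact block-marginals of the joint posterior in \eqref{eq:factorizedDensity} and show that, up to an intensity-matching projection of the PPP block, their product is a PMB times the sensor posterior; (ii) verify by a Gibbs-type decomposition that this product is the KLD-optimal member of the family of products of a PMB and an arbitrary sensor density; and (iii) invoke Theorem~\ref{theo:DerSetBP} and Corollary~\ref{coro:setBelief_nocycle} for the set-type BP statement.

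For (i), I would first recall, as an extension of the PMBM update in~\cite[App.~E]{Hyowon_MPMB_TVT2022} and~\cite{Angel_PMBM_TAES2018} (see Appendix~\ref{SM:JointUp}), that summing \eqref{eq:factorizedDensity} over the data-association variables $\mathbf{c}_k,\mathbf{d}_k$ and set-integrating out the intermediate variables $\tilde{\mathcal{P}}_k^\mathrm{U},\tilde{\mathcal{P}}_k^{1:J_k}$ recovers $\tilde{f}_\mathsf{u}^\mathrm{PMBM}(\tilde{\mathcal{X}}_k|\mathbf{s}_k)f_\mathsf{u}(\mathbf{s}_k)$, i.e., the joint PMBM posterior in the auxiliary-variable representation of~\cite[Sec.~III-A]{Angel_MTTT_TAES2020}. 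I would then take the remaining set-integrals block by block. The set-variable $\tilde{\mathcal{X}}_k^i$ enters \eqref{eq:factorizedDensity} only through the Bernoulli likelihood $t(\mathcal{Z}_k^i|\mathbf{s}_k,\tilde{\mathcal{X}}_k^i,c_k^i)$, which for every $c_k^i$ forces $|\tilde{\mathcal{X}}_k^i|\le 1$; summing over $c_k^i$ and marginalizing the rest therefore gives a finite mixture of Bernoulli densities, which is again a Bernoulli density (elementary: the cardinality stays supported on $\{0,1\}$ and the conditional spatial density is the corresponding weighted average). The same argument, applied to $\tilde{\mathcal{Y}}_k^j$ through \eqref{eq:likelihoodPoi}, shows that its marginal is Bernoulli. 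For $\tilde{\mathcal{X}}_k^\mathrm{U}$ (auxiliary variable $u=0$), the factors $f^F$ and $f^H$ split and thin the predicted PPP; since the PPP component is common to all global hypotheses of the PMBM, its marginal conditioned on $\mathbf{s}_k$ is a PPP, and after integrating $\mathbf{s}_k$ it is a mixture of PPPs, which we replace by the intensity-matched PPP. Collecting the blocks and using the convolution identity \eqref{eq:PMBdensity}, the product of the PPP component and the $I_{k-1}+J_k$ Bernoulli components is a PMB $\tilde{q}_\mathtt{u}^\mathrm{PMB}(\tilde{\mathcal{X}})$, and $q_\mathtt{u}(\mathbf{s}_k)=f_\mathtt{u}(\mathbf{s}_k)$ is the sensor marginal, which reproduces the displayed expressions $q_\mathtt{u}(\tilde{\mathcal{X}}_k^\mathrm{U})=\int f_\mathtt{u}(\tilde{\mathcal{X}}_k^\mathrm{U}|\mathbf{s}_k)f_\mathtt{u}(\mathbf{s}_k)\,\mathrm{d}\mathbf{s}_k$, and likewise for $\tilde{\mathcal{X}}_k^i$ and $\tilde{\mathcal{Y}}_k^j$.

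For (ii), let $p$ denote the joint posterior and let any candidate be $q(\mathbf{s}_k)\,q^\mathrm{U}(\tilde{\mathcal{X}}^\mathrm{U})\prod_{i}q^i(\tilde{\mathcal{X}}^i)$ with $q(\mathbf{s}_k)$ arbitrary, $q^\mathrm{U}$ a PPP and each $q^i$ a Bernoulli. Because $\{\mathbf{s}_k,\tilde{\mathcal{X}}_k^\mathrm{U},\tilde{\mathcal{X}}_k^{1:I_{k-1}},\tilde{\mathcal{Y}}_k^{1:J_k}\}$ partition the sensor space and the augmented single-target space into disjoint blocks, expanding $D(p\,\|\,q)=-H(p)-\sum_{\ell}\mathbb{E}_{p}[\log q^\ell]$ shows that each cross-entropy term depends on $q$ only through the corresponding block-marginal $p^\ell$ of $p$, so the optimization decouples over blocks. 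For each block, Gibbs' inequality gives that $-\mathbb{E}_{p^\ell}[\log q^\ell]$ is minimized over the admissible family at $q^\ell=p^\ell$ whenever $p^\ell$ lies in that family, which by step (i) holds exactly for the sensor and Bernoulli blocks and holds for the PPP block after the intensity-matching projection (itself the KLD-optimal PPP approximation of $p^\mathrm{U}$). Hence the optimal PMB has exactly the marginals of step (i). Finally, for (iii): Theorem~\ref{theo:DerSetBP} identifies the set-type BP fixed points with the interior stationary points of the constrained Bethe free energy of the factor graph of \eqref{eq:factorizedDensity}, and Corollary~\ref{coro:setBelief_nocycle} makes the beliefs exact on the tree-structured parts of that graph, while the cycles created by the factors $f_{i,j}^K=\psi(c_k^i,d_k^j)$ (and the sensor node shared by all likelihood factors) mean that on the data-association subgraph BP returns the Bethe approximation; upon convergence the beliefs on $\mathbf{s}_k,\tilde{\mathcal{X}}_k^\mathrm{U},\tilde{\mathcal{X}}_k^i,\tilde{\mathcal{Y}}_k^j$ therefore approximate $q_\mathtt{u}(\mathbf{s}_k),q_\mathtt{u}(\tilde{\mathcal{X}}_k^\mathrm{U}),q_\mathtt{u}(\tilde{\mathcal{X}}_k^i),q_\mathtt{u}(\tilde{\mathcal{Y}}_k^j)$.

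The main obstacle is the $\tilde{\mathcal{X}}_k^\mathrm{U}$ block: since $[1-p_\mathrm{D}(\mathbf{s}_k,\cdot)]^{\tilde{\mathcal{X}}_k^\mathrm{U}}$ couples the undetected-target PPP to the sensor state, the exact marginal after integrating $\mathbf{s}_k$ is a mixture of PPPs rather than a PPP, so one must argue carefully that replacing it by its intensity-matched PPP is precisely the KLD-optimal choice within the PMB family (this is where the ``approximate'' in the subsection title enters), and one must track the units of the set factors so that the ``mixture of Bernoullis is a Bernoulli'' reduction and the convolution identity \eqref{eq:PMBdensity} are applied with the correct normalizations. Everything else reduces to bookkeeping over the factor-graph structure already established in Section~\ref{sec:SetBP-SLAM}.
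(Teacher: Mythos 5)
Your proof is correct and follows essentially the same route as the paper: the paper's proof consists only of citing Proposition~1 of~\cite{Angel_Trajectory_TSP2020} for the KLD part (whose underlying argument is exactly your block-decoupling of the cross-entropy over the disjoint auxiliary-variable subspaces plus Gibbs' inequality, extended to include the sensor state) and Theorem~\ref{theo:DerSetBP} for the Bethe-free-energy part, so you are reconstructing the content of that citation rather than taking a different path. Your explicit treatment of the $\tilde{\mathcal{X}}_k^\mathrm{U}$ block --- observing that integrating out $\mathbf{s}_k$ produces a mixture of PPPs whose KLD-optimal PPP replacement is the intensity-matched one --- supplies a genuine detail that the paper leaves implicit behind the word ``approximate'' in the subsection title.
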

\begin{proof}
    The proof of the \ac{kld} minimization is an extension of Proposition~1 of~\cite{Angel_Trajectory_TSP2020}, showing the optimality of the solution. The Bethe free minimization is a result of Theorem~\ref{theo:DerSetBP}.
\end{proof}

\subsection{Exploring the Relationships: Set-Type and Vector-Type BP PMB-SLAM Filters}
\label{sec:Connections}
    
    We now reveal connections between the proposed set-type \ac{bp} filter to the vector-type \ac{bp} filters~\cite{Erik_AOABPSLAM_ICC2019}.
    The proposed set-type BP PMB-SLAM filter turned out to be algorithmically identical to the vector-type BP-SLAM filters~\cite{Erik_AOABPSLAM_ICC2019}, except for minor details.\footnote{Note that the vector-type BP-SLAM filter in~\cite{Erik_BPSLAM_TWC2019} mentions the possibility of using an external PHD filter, but it is not explained in detail, so we focus on the comparison with~\cite{Erik_AOABPSLAM_ICC2019}, as it has an explicit description of the external PHD filter.}
    We note that the connection between vector-type and RFS-based method was partially discussed from the perspective of the expression of target densities and data association in~\cite[Sec.~XIII-A]{Florian_BP-MTT_Proc2018} and \cite[Sec.~IV-E]{Hyowon_MPMB_TVT2022}.
    

\subsubsection{Previously Detected Targets}
    Bernoulli densities in the proposed set-type BP-SLAM filter can be parameterized using a state $\ensuremath{\mathbf{y}=[\mathbf{x}^{\top},\epsilon]^{\top}}$, where $\ensuremath{\epsilon}$ is the existence variable such that $\ensuremath{\epsilon}=1$ if the target exists and $\ensuremath{\epsilon}=0$ if the target does not exist. A Bernoulli density in vector representation is $f\left(\mathbf{x},\epsilon\right)=f\left(\epsilon\right)f\left(\mathbf{x}|\epsilon\right)$ where $f\left(\mathbf{x},\epsilon=1\right)=f\left(\left\{ \mathbf{x}\right\} \right)$, and $f\left(\mathbf{x},\epsilon=0\right)=f\left(\emptyset\right)$.
    The vector representation has the theoretical drawback of requiring a dummy density $f\left(\mathbf{x}|\epsilon=0\right)$ which does not exist, though this does not affect practical results. The same procedure can be followed to define equivalent conditional densities (factors) that consider sets of at maximum one element.
    With this in mind, we can write the part of the factor graph in Fig.~\ref{Fig:FG} which has Bernoulli densities using the existence variable representation.
    It is also straightforward to express the parts that involve variables that have vector densities (such as $\mathbf{s}_{k}$). 
    
\subsubsection{Undetected Targets}
    
    As the undetected targets are PPP distributed, this part of the factor graph, including the factors $f^{B}$ (for prediction) and $f^{H}$ (for update) cannot be written with existence variables. 
    Therefore, the analogous factor graph with existence variables has to drop this source of information.
    One solution to compensate for this loss of information in vector-type BP is to use an external PHD filter to model undetected features, as in~\cite{Erik_AOABPSLAM_ICC2019}. The prediction step of the external PHD filter is designed to match the solution obtained by the (prediction) factor graph in Fig.~\ref{Fig:FG}.
    
\subsubsection{Newly Detected Targets}
    In the update, the external PHD filter can be used to describe the density of the newly detected Bernoullis.
    In order to link this PHD filter with the factor graph, reference~\cite{Erik_AOABPSLAM_ICC2019} requires direct modeling of the density of newly detected features. This density corresponds to the predicted intensity of the PHD filter multiplied by the probability of detection, normalized to integrate to one in~\cite[eq.~(16)]{Erik_AOABPSLAM_ICC2019}.

\subsubsection{Detection Probability}
    Vector-type \ac{bp} \ac{slam} requires that the detection probability for undetected targets does not depend on the sensor state, such that $\ensuremath{p_{\mathrm{D}}(\mathbf{s}_{k},\mathbf{x}_{k})=p_{\mathrm{D}}(\mathbf{x}_{k})}$, as in~\cite{Erik_AOABPSLAM_ICC2019}. The reason why this is required is that if $p_{\mathrm{D}}(\mathbf{s}_{k},\mathbf{x}_{k})$ depends on $\mathbf{s}_{k}$ and $\mathbf{x}_{k}$, the detection probability should be part of the factor that connects each newly detected feature and the sensor state, as in Fig.~\ref{Fig:FG}. This could be fixed in vector-type BP by defining the density of undetected features (as the normalized PHD intensity) and then considering $p_{\mathrm{D}}(\mathbf{s}_{k},\mathbf{x}_{k})$ in the factor, as in Fig.~\ref{Fig:FG}.

\subsubsection{Sensor State Update}
    It is also relevant to mention that, while the factor graphs in~\cite{Erik_BPSLAM_TWC2019,Erik_AOABPSLAM_ICC2019} consider messages from the sensor state to the newly detected features, the belief update equation of the sensor state does not have these messages, see~\cite[eq.~(37)]{Erik_BPSLAM_TWC2019} and ~\cite[Sec.~III-G]{Erik_AOABPSLAM_ICC2019}.
    The equations with the messages from newly detected features to the sensor state are not considered in~\cite{Erik_BPSLAM_TWC2019,Erik_AOABPSLAM_ICC2019} either. 
    These equations can be corrected to obtain a vector-type BP-SLAM  algorithm similar to the set-type BP-SLAM algorithm presented in this paper.

\subsubsection{Derivations}
    It is also important to realize that this paper provides an alternative derivation of the (slightly modified) vector-type BP-SLAM algorithm in \cite{Erik_AOABPSLAM_ICC2019} obtained from first principles using RFSs. That is, vector-type BP-SLAM is obtained assuming models for single-feature dynamics, single-feature death, sensor dynamics, and measurement models, but no feature births. Instead, it is assumed that, at the current time step, we know the multi-Bernoulli density of previous features and the multi-Bernoulli density of newly detected features. Then, the factor graph is obtained. Afterwards, an external PHD filter can be used in vector-based SLAM to model undetected features.

    Instead, the set-type BP PMB-SLAM derivation works as follows. We require models for feature birth, single-feature dynamics, single-feature death, sensor dynamics, and measurements, similar to Bayesian multi-target tracking algorithms. Then, auxiliary variables are used to obtain the associated factor graph, and then the factor graph is solved using set-BP. The resulting algorithm, with the minor modifications explained above, is similar to vector-BP SLAM with the external PHD filter in~\cite{Erik_AOABPSLAM_ICC2019}.
    The connection is clearer by writing each Bernoulli component in terms of existence variables. This shows that the vector-BP SLAM algorithm with the external PHD filter, with the minor modifications above, can be derived from RFS theory, which is an important result for the literature.
    It is also relevant to note that the set-BP MB-SLAM algorithm that we also propose could also be written with existence variables.

\begin{figure}[t!]
\begin{centering}
	\includegraphics[width=.6\columnwidth]{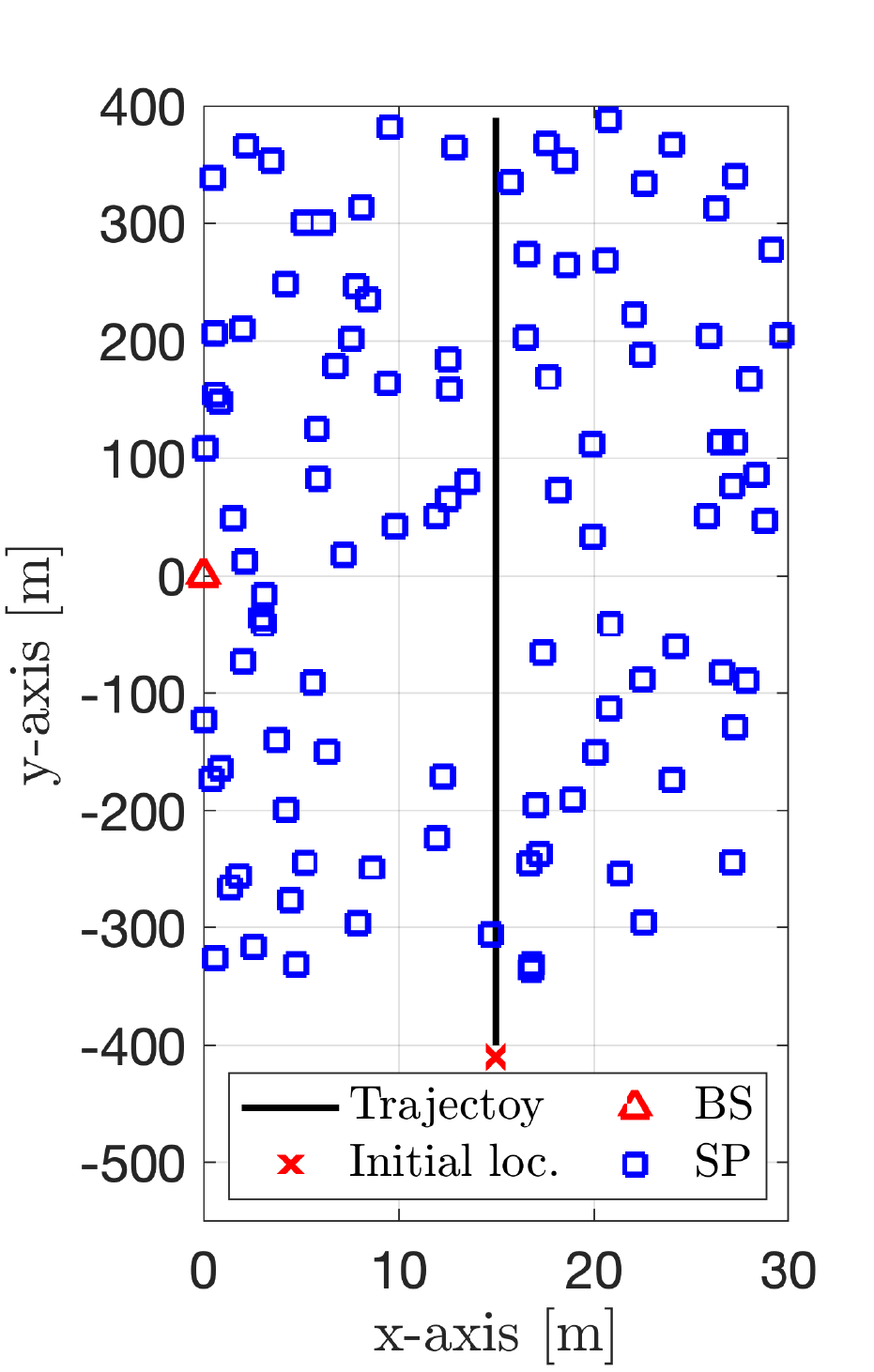}
	\caption{Bistatic radio SLAM scenario, where 176 SPs are uniformly distributed. A single BS transmits the signal, and the sensor receives the two types of measurements: BS-sensor; and BS-SPs-sensor.}
	\label{Fig:Deployment}
\end{centering}
\end{figure}
\section{Numerical Results} \label{sec:NumericalResults}

    In this section, we analyze the proposed set-type \ac{bp} \ac{pmb}-\ac{slam} and \ac{bp} \ac{mb}-\ac{slam} filters, abbreviated as SBP-PMB-SLAM and SBP-MB-SLAM, respectively, in comparison with \ac{bp}-\ac{slam}~\cite{Erik_AOABPSLAM_ICC2019}, abbreviated as VBP-SLAM1.
    Note that the vector-type BP-SLAM filters~\cite{Erik_BPSLAM_TWC2019,Erik_AOABPSLAM_ICC2019} with the minor modifications, abbreviated as VBP-SLAM2, is equivalent to the SBP-PMB-SLAM, as discussed in Sec.~\ref{sec:Connections}.
    We introduce the simulation setup for evaluating the \ac{slam} filters, and subsequently the results are discussed.

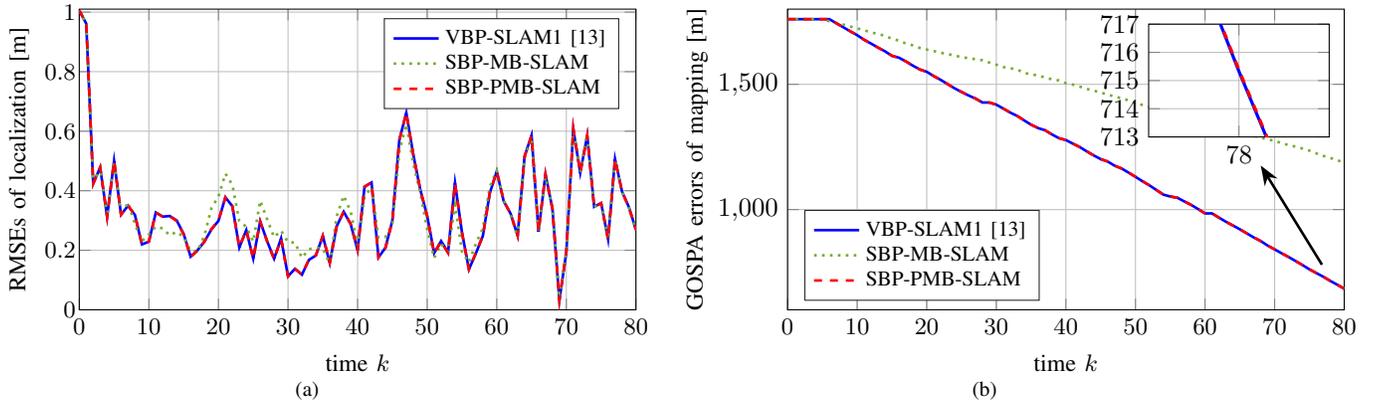
\begin{figure*}[t!]
\begin{center}
\captionsetup[subfigure]{aboveskip=0mm,belowskip=0mm}
\hspace{-10mm}
\begin{subfigure}{0.45\textwidth}
    {
%
%
\definecolor{mycolor1}{rgb}{0.46667,0.67451,0.18824}%

\begin{tikzpicture}

\begin{axis}[%
width=74mm,
height=40mm,
at={(0mm,0mm)},
scale only axis,
xmin=0,
xmax=80,
ymin=0,
ymax=1.01,
yticklabel style = {font=\small,xshift=0.5ex},
xticklabel style = {font=\small,yshift=0ex},
axis background/.style={fill=white},
axis background/.style={fill=white},
xmajorgrids,
ymajorgrids,
legend style={legend pos=north east, legend cell align=left, align=left, draw=white!15!black,style={row sep=-0.05cm}}
]

\addplot [color=blue, line width=1.0pt,  mark options={solid, blue}]
  table[row sep=crcr]{%
0	1.00553531435301\\
1	0.961277566477467\\
2	0.423923885835415\\
3	0.478329489394731\\
4	0.312553605415685\\
5	0.500062465349316\\
6	0.319540387296896\\
7	0.349697417650881\\
8	0.318244635072366\\
9	0.219597613240075\\
10	0.22895741232891\\
11	0.326817653814621\\
12	0.312935926588618\\
13	0.315125148575708\\
14	0.300042510155965\\
15	0.254932058291517\\
16	0.178569577889995\\
17	0.200464163135892\\
18	0.229704753533739\\
19	0.270464478366302\\
20	0.298803528839905\\
21	0.377922532407106\\
22	0.348103420224772\\
23	0.209207914565441\\
24	0.26764023906411\\
25	0.172004721951046\\
26	0.29681879380084\\
27	0.230294037910688\\
28	0.171944575260809\\
29	0.239830004061695\\
30	0.112804643832594\\
31	0.13774278452507\\
32	0.118026803639434\\
33	0.167738891854655\\
34	0.182382531819105\\
35	0.247783398392162\\
36	0.157606537950515\\
37	0.281111317962166\\
38	0.32936593623638\\
39	0.288067639862117\\
40	0.200866656153624\\
41	0.413572279312537\\
42	0.427481589243929\\
43	0.176216032431621\\
44	0.208541995742455\\
45	0.300165220399171\\
46	0.570910634325096\\
47	0.661363173100724\\
48	0.521530767110602\\
49	0.404619936755221\\
50	0.314574035801129\\
51	0.190723833373086\\
52	0.230895021361062\\
53	0.192818887975526\\
54	0.426015617321368\\
55	0.260428132915663\\
56	0.136049939236155\\
57	0.189489420709035\\
58	0.248270433501162\\
59	0.392378846123567\\
60	0.464433778418254\\
61	0.364667277316558\\
62	0.322482705494246\\
63	0.249281887626648\\
64	0.515884556188497\\
65	0.58597689995556\\
66	0.262346142492536\\
67	0.455185680729238\\
68	0.340224203068349\\
69	0.0317129107827933\\
70	0.193799066382464\\
71	0.60648392036811\\
72	0.460233775628275\\
73	0.584591687802313\\
74	0.346048275182309\\
75	0.358624381881545\\
76	0.23780004197784\\
77	0.506026261353326\\
78	0.397777915465984\\
79	0.3445776368089\\
80	0.26941730632728\\
};
\addlegendentry{\footnotesize VBP-SLAM1~\cite{Erik_AOABPSLAM_ICC2019}}

\addplot [color=mycolor1, dotted, line width=1.0pt, mark options={dotted, mycolor1}]
  table[row sep=crcr]{%
0	1.00553531435301\\
1	0.961061845725181\\
2	0.424303398894065\\
3	0.47835326679005\\
4	0.313240648008065\\
5	0.500413673828897\\
6	0.314275025856269\\
7	0.361393026834775\\
8	0.287376870055599\\
9	0.230043107584014\\
10	0.25313542002803\\
11	0.279647319725172\\
12	0.273167270492224\\
13	0.250244856661792\\
14	0.262155500235376\\
15	0.24440390739693\\
16	0.197756038729636\\
17	0.19677335685799\\
18	0.25056922363879\\
19	0.342227710217231\\
20	0.380048468063534\\
21	0.45738047059912\\
22	0.419860513439284\\
23	0.29733782001985\\
24	0.278986480632057\\
25	0.229130341640535\\
26	0.36638157100086\\
27	0.304052530889497\\
28	0.243828466017482\\
29	0.26614304612457\\
30	0.226957363405428\\
31	0.225828586163168\\
32	0.174609369852786\\
33	0.203721863636231\\
34	0.209145573322593\\
35	0.205334908044768\\
36	0.154279821248489\\
37	0.314506187240558\\
38	0.383337493927816\\
39	0.330379167410671\\
40	0.21758765885469\\
41	0.380266864062257\\
42	0.41173416298775\\
43	0.243332619890271\\
44	0.244408398036692\\
45	0.289565885079505\\
46	0.532153829081494\\
47	0.623374706925198\\
48	0.490368332492808\\
49	0.394020991504765\\
50	0.271126008189715\\
51	0.173935758599702\\
52	0.220988291480129\\
53	0.194705650263885\\
54	0.352605371697423\\
55	0.195612568915037\\
56	0.15871822995536\\
57	0.222575561347296\\
58	0.272944425570537\\
59	0.413419986186264\\
60	0.467976653044477\\
61	0.367658179188161\\
62	0.318348355633784\\
63	0.243508997985291\\
64	0.508727036826356\\
65	0.577017232611046\\
66	0.262097130024309\\
67	0.445744811577465\\
68	0.338942132895044\\
69	0.0396847682898542\\
70	0.197157588711055\\
71	0.610687456011515\\
72	0.462860536399896\\
73	0.588771258738897\\
74	0.345466388952141\\
75	0.358149391431518\\
76	0.237747818435077\\
77	0.505626801290799\\
78	0.396341253539036\\
79	0.343163172536987\\
80	0.267228115064162\\
};
\addlegendentry{\footnotesize SBP-MB-SLAM}

\addplot [color=red, dashed, line width=1.0pt, mark options={dash, red}]
  table[row sep=crcr]{%
0	1.00553531435301\\
1	0.961277566477467\\
2	0.423923885835415\\
3	0.478329489394731\\
4	0.312553605415685\\
5	0.500062465349316\\
6	0.320243594218261\\
7	0.350371910199738\\
8	0.318404202708449\\
9	0.219094994323702\\
10	0.229574084181025\\
11	0.326593492057591\\
12	0.312788770226479\\
13	0.314628963209573\\
14	0.2981765680299\\
15	0.253435396104839\\
16	0.176953627526775\\
17	0.198907598879422\\
18	0.229544211988996\\
19	0.271535941081153\\
20	0.299418743928993\\
21	0.377647837129194\\
22	0.348787951240992\\
23	0.210504392410569\\
24	0.267819347631334\\
25	0.172302407559483\\
26	0.296411019095685\\
27	0.22935785132872\\
28	0.169558255212385\\
29	0.237698394503109\\
30	0.111607391365352\\
31	0.136711428725855\\
32	0.118987308544998\\
33	0.16847289358831\\
34	0.183726518933981\\
35	0.248467991600938\\
36	0.158842448296847\\
37	0.280868256619607\\
38	0.329795958786118\\
39	0.288391326246912\\
40	0.201971494004942\\
41	0.413320279855906\\
42	0.428003267345312\\
43	0.1755642304252\\
44	0.207824464020366\\
45	0.298957349004609\\
46	0.57055601987251\\
47	0.661799921046075\\
48	0.522137601204153\\
49	0.40529004555374\\
50	0.315432407880001\\
51	0.189742989257281\\
52	0.228311972499229\\
53	0.191009493965249\\
54	0.425241508915692\\
55	0.259004774900963\\
56	0.136149855353067\\
57	0.191086014302375\\
58	0.248283445350075\\
59	0.393168075414247\\
60	0.465191283150759\\
61	0.365680132937151\\
62	0.32273218203714\\
63	0.249179963527899\\
64	0.515253441383536\\
65	0.58507467365952\\
66	0.263804877455077\\
67	0.454848224648121\\
68	0.340729663642764\\
69	0.0331380974882232\\
70	0.195526691771278\\
71	0.609103102455761\\
72	0.460766214762738\\
73	0.584821469625411\\
74	0.345423329978415\\
75	0.35789313396628\\
76	0.236726436915178\\
77	0.505697715321933\\
78	0.397132214650123\\
79	0.34371656613828\\
80	0.268556727669662\\
};
\addlegendentry{\footnotesize SBP-PMB-SLAM}
\end{axis}

\node[rotate=0,fill=white] (BOC6) at (3.7cm,-.7cm){\small time $k$};
\node[rotate=90] at (-8mm,20mm){\small RMSEs of localization [m]};
\end{tikzpicture}
    \subcaption{}
    \label{Fig:UnInfLoc}
\end{subfigure}
\hspace{6mm}
\begin{subfigure}{0.45\textwidth}
    {
%
%
\definecolor{mycolor1}{rgb}{0.46667,0.67451,0.18824}%

\begin{tikzpicture}

\begin{axis}[%
width=74mm,
height=40mm,
at={(0mm,0mm)},
scale only axis,
xmin=0,
xmax=80,
ymin=600,
ymax=1800,
xticklabel style = {font=\small,yshift=0ex},
axis background/.style={fill=white},
axis background/.style={fill=white},
xmajorgrids,
ymajorgrids,
legend style={legend pos=south west, legend cell align=left, align=left, draw=white!15!black,style={row sep=-0.05cm}}
]

\addplot [color=blue, line width=1.0pt, mark options={solid, blue}]
  table[row sep=crcr]{%
0	1760\\
1	1760\\
2	1760\\
3	1760\\
4	1760\\
5	1760\\
6	1760\\
7	1743.82823318986\\
8	1727.83367715586\\
9	1711.67259557143\\
10	1696.1865472578\\
11	1677.73009870767\\
12	1662.5340844606\\
13	1646.45919485194\\
14	1631.66389576973\\
15	1613.70396221633\\
16	1606.22214175336\\
17	1591.41477117943\\
18	1574.51048661281\\
19	1559.2672857542\\
20	1549.99508471961\\
21	1532.87441407618\\
22	1517.68849865117\\
23	1500.81120008573\\
24	1483.99701443364\\
25	1469.40876277089\\
26	1455.53783912785\\
27	1443.30290814497\\
28	1427.2847550933\\
29	1427.2847550933\\
30	1418.73961443551\\
31	1403.16943601421\\
32	1387.23401046497\\
33	1373.7967847372\\
34	1355.94464415377\\
35	1338.71042682233\\
36	1325.30854001838\\
37	1316.71419160283\\
38	1299.91151531437\\
39	1284.95505863635\\
40	1277.47139513197\\
41	1263.13131388809\\
42	1249.47987958\\
43	1232.34524724366\\
44	1216.1179820315\\
45	1201.43132514449\\
46	1191.80082818639\\
47	1175.18560960332\\
48	1164.06279105431\\
49	1147.64129858923\\
50	1130.71669280104\\
51	1113.22849564605\\
52	1096.72813579785\\
53	1078.38683239812\\
54	1059.82270355648\\
55	1052.07481941358\\
56	1047.40994996445\\
57	1032.93355202788\\
58	1018.68468630519\\
59	1002.07251360262\\
60	984.756619883428\\
61	984.756619883428\\
62	967.815799140841\\
63	951.374245513098\\
64	936.445105915641\\
65	921.430061810625\\
66	904.942884113375\\
67	888.832110254754\\
68	874.381553060751\\
69	856.291712041512\\
70	841.177540625868\\
71	825.569245502176\\
72	810.849111027445\\
73	794.853702875364\\
74	778.62540689506\\
75	761.393317973661\\
76	746.609278508881\\
77	731.675938729291\\
78	715.326729019646\\
79	699.995713878304\\
80	684.387199416106\\
};
\addlegendentry{\footnotesize VBP-SLAM1~\cite{Erik_AOABPSLAM_ICC2019}}

\addplot [color=mycolor1, dotted, line width=1.0pt, mark options={dotted, mycolor1}]
  table[row sep=crcr]{%
0	1760\\
1	1760\\
2	1760\\
3	1760\\
4	1760\\
5	1760\\
6	1751.57768830396\\
7	1744.82879174527\\
8	1738.45104718906\\
9	1730.60910767022\\
10	1723.05274505677\\
11	1716.45880367089\\
12	1706.01314349481\\
13	1702.11599587732\\
14	1692.60319696511\\
15	1682.21800057681\\
16	1674.10613145687\\
17	1662.91762700323\\
18	1655.43837323027\\
19	1645.03895821773\\
20	1638.67039678038\\
21	1632.94800981819\\
22	1625.3142655547\\
23	1621.70919538996\\
24	1613.40849389134\\
25	1609.94737842203\\
26	1603.38804930623\\
27	1600.62602559204\\
28	1596.03817608033\\
29	1586.94506758997\\
30	1578.26344130584\\
31	1568.92258206307\\
32	1564.13993345897\\
33	1558.5639279493\\
34	1546.24156247827\\
35	1538.920338723\\
36	1536.09009585055\\
37	1526.61039563996\\
38	1520.12776045352\\
39	1515.39959071511\\
40	1504.3553545611\\
41	1498.85425333725\\
42	1490.65223619877\\
43	1481.62185580058\\
44	1474.81642951698\\
45	1467.26538846149\\
46	1458.75162200993\\
47	1451.52682805466\\
48	1444.15898070973\\
49	1436.01846885415\\
50	1425.79277569991\\
51	1417.57451270542\\
52	1409.83761759107\\
53	1405.98732502766\\
54	1401.38157595507\\
55	1393.85653914474\\
56	1383.47799574275\\
57	1372.21207319878\\
58	1361.54020643102\\
59	1355.873228509\\
60	1350.31135100917\\
61	1340.32460806143\\
62	1331.08024623636\\
63	1321.1393840789\\
64	1314.78111015122\\
65	1304.93702349699\\
66	1298.50032550775\\
67	1293.84971578598\\
68	1288.12302187983\\
69	1278.80053115849\\
70	1276.0100825471\\
71	1263.9077002929\\
72	1258.23501019058\\
73	1251.06981231503\\
74	1243.69928101378\\
75	1235.32575548666\\
76	1224.33096577622\\
77	1214.3990436558\\
78	1205.8959329708\\
79	1198.14147158701\\
80	1185.33152518742\\
};
\addlegendentry{\footnotesize SBP-MB-SLAM}

\addplot [color=red, dashed, line width=1.0pt, mark options={dash, red}]
  table[row sep=crcr]{%
0	1760\\
1	1760\\
2	1760\\
3	1760\\
4	1760\\
5	1760\\
6	1760\\
7	1743.82842838185\\
8	1727.83410874815\\
9	1711.67494010507\\
10	1696.18702665835\\
11	1677.72590017167\\
12	1662.52623422171\\
13	1646.44880382808\\
14	1631.65515623425\\
15	1613.70677927306\\
16	1606.22503084755\\
17	1591.40200028917\\
18	1574.50900784794\\
19	1559.27929111\\
20	1550.0038683309\\
21	1532.89785900911\\
22	1517.72697119553\\
23	1500.86051102947\\
24	1484.05804331136\\
25	1469.49171115995\\
26	1455.61785563904\\
27	1443.38571923025\\
28	1427.37855140121\\
29	1427.37855140121\\
30	1418.83799146243\\
31	1403.26427636392\\
32	1387.34105890524\\
33	1373.90318272031\\
34	1356.03573269412\\
35	1338.79533755754\\
36	1325.3932033067\\
37	1316.79565092076\\
38	1299.9900099834\\
39	1285.04391030376\\
40	1277.56384102001\\
41	1263.22469692886\\
42	1249.56281167769\\
43	1232.42391892904\\
44	1216.19532807074\\
45	1201.5030580949\\
46	1191.86685127851\\
47	1175.25013001702\\
48	1164.11685763219\\
49	1147.69297413996\\
50	1130.76450620967\\
51	1113.26895438177\\
52	1096.75901974365\\
53	1078.41176365819\\
54	1059.86250854363\\
55	1052.11626385696\\
56	1047.44566142306\\
57	1032.98990315498\\
58	1018.7170846606\\
59	1002.11569777623\\
60	984.791056853641\\
61	984.791056853641\\
62	967.871055206244\\
63	951.429318773339\\
64	936.510486087306\\
65	921.500293691692\\
66	904.994995601012\\
67	888.874213571586\\
68	874.422300394362\\
69	856.320635541354\\
70	841.201111189923\\
71	825.598383052689\\
72	810.872594820298\\
73	794.877000645362\\
74	778.654597919522\\
75	761.433968650088\\
76	746.65915138774\\
77	731.715505648981\\
78	715.381705063313\\
79	700.046028096467\\
80	684.456700676998\\
};
\addlegendentry{\footnotesize SBP-PMB-SLAM}
\end{axis}

\begin{axis}[%
width=24mm,
height=15mm,
at={(48mm,23mm)},
scale only axis,
xmin=77.5,
xmax=78.5,
ymin=713,
ymax=717,
xtick={78},
xticklabels={$78$},
axis background/.style={fill=white},
xmajorgrids,
ymajorgrids
]
\addplot [color=blue, line width=1.2pt, mark options={solid, blue}, forget plot]
  table[row sep=crcr]{%
71	825.569245502176\\
72	810.849111027445\\
73	794.853702875364\\
74	778.62540689506\\
75	761.393317973661\\
76	746.609278508881\\
77	731.675938729291\\
78	715.326729019646\\
79	699.995713878304\\
80	684.387199416106\\
};
\addplot [color=red, dashed, line width=1.2pt, mark options={dash, red}, forget plot]
  table[row sep=crcr]{%
71	825.598383052689\\
72	810.872594820298\\
73	794.877000645362\\
74	778.654597919522\\
75	761.433968650088\\
76	746.65915138774\\
77	731.715505648981\\
78	715.381705063313\\
79	700.046028096467\\
80	684.456700676998\\
};
\end{axis}

\begin{axis}[%
width=74mm,
height=40mm,
at={(0mm,0mm)},
scale only axis,
xmin=0,
xmax=1,
ymin=0,
ymax=1,
axis line style={draw=none},
ticks=none,
axis x line*=bottom,
axis y line*=left
]

\draw [black, line width=1.0pt] (axis cs:0.96,0.1) ellipse [x radius=4, y radius=6];
\draw[-{Stealth}, color=black, line width=1.0pt] (axis cs:0.96,0.15) -- (axis cs:0.85,0.48);
\end{axis}

\node[rotate=0,fill=white] (BOC6) at (3.7cm,-.7cm){\small time $k$};
\node[rotate=90] at (-12mm,20mm){\small GOSPA errors of mapping [m]};
\end{tikzpicture}
    \subcaption{}
    \label{Fig:UnInfMap}
\end{subfigure}
\caption{Uninformative birth case. Comparison of the proposed 
set-type BP PMB-SLAM 
SBP-PMB-SLAM~(i.e., VBP-SLAM2)
with 
VBP-SLAM1~\cite{Erik_AOABPSLAM_ICC2019} and SBP-MB-SLAM:
(a) RMSEs of sensor localization and (b) GOSPA errors of landmark mapping.
}
\label{Fig:UnInf}
\par
\end{center}
\end{figure*}

\begin{figure*}[t!]
\begin{center}
\captionsetup[subfigure]{aboveskip=0mm,belowskip=0mm}
\hspace{-10mm}
\begin{subfigure}{0.45\textwidth}
    {
%
%
\definecolor{mycolor1}{rgb}{0.46667,0.67451,0.18824}%

\begin{tikzpicture}

\begin{axis}[%
width=74mm,
height=40mm,
at={(0mm,0mm)},
scale only axis,
xmin=0,
xmax=80,
ymin=0,
ymax=1,
yticklabel style = {font=\small,xshift=0.5ex},
xticklabel style = {font=\small,yshift=0ex},
axis background/.style={fill=white},
axis background/.style={fill=white},
xmajorgrids,
ymajorgrids,
legend style={legend pos=north west, legend cell align=left, align=left, draw=white!15!black,style={row sep=-0.05cm}}
]

\addplot [color=blue, line width=1.0pt,  mark options={solid, blue}]
  table[row sep=crcr]{%
0	0.981469839530703\\
1	0.966045971889366\\
2	0.425171294994916\\
3	0.482534075600437\\
4	0.311300067849925\\
5	0.502593385993497\\
6	0.271705805742475\\
7	0.269275614893491\\
8	0.118735766948632\\
9	0.129988224327035\\
10	0.273163884423714\\
11	0.272324477394183\\
12	0.216968474438064\\
13	0.198423226665183\\
14	0.153051989119172\\
15	0.102209961572484\\
16	0.0960434571669851\\
17	0.118323351850631\\
18	0.206345654808074\\
19	0.263413026151794\\
20	0.205064295404622\\
21	0.138204963526044\\
22	0.0948102545589977\\
23	0.197856398430636\\
24	0.234937401477526\\
25	0.273576489159725\\
26	0.201716608929077\\
27	0.167154880850377\\
28	0.365048865160184\\
29	0.358391597201572\\
30	0.257642313482724\\
31	0.226748214796236\\
32	0.250916167434391\\
33	0.158806069244227\\
34	0.0717780922496104\\
35	0.0536780248008144\\
36	0.139209825685165\\
37	0.314912611480303\\
38	0.286314366059122\\
39	0.241312772222002\\
40	0.132294199832931\\
41	0.249893994020203\\
42	0.234770958026451\\
43	0.24774873274658\\
44	0.227780302665765\\
45	0.218997460841421\\
46	0.117697274634343\\
47	0.244490613566682\\
48	0.17490031813038\\
49	0.0778366850843074\\
50	0.114513546479122\\
51	0.243740463373389\\
52	0.262765897534805\\
53	0.0993731675554777\\
54	0.11025909676547\\
55	0.147395509088341\\
56	0.167114823738376\\
57	0.249673374937917\\
58	0.122226186966901\\
59	0.168124785890646\\
60	0.123077160568778\\
61	0.136683021049357\\
62	0.100080679347129\\
63	0.135060934995968\\
64	0.259839822627696\\
65	0.266200131119923\\
66	0.189459753884121\\
67	0.196330100422132\\
68	0.179684553006343\\
69	0.112784730816039\\
70	0.121882209456976\\
71	0.155390944953208\\
72	0.214933387872515\\
73	0.200299756607877\\
74	0.09078235406645\\
75	0.117546896663338\\
76	0.123826058326627\\
77	0.15488956141445\\
78	0.230113171675962\\
79	0.308161842293122\\
80	0.229735902517688\\
};
\addlegendentry{\footnotesize VBP-SLAM1~\cite{Erik_AOABPSLAM_ICC2019}}

\addplot [color=red, dashed, line width=1.0pt, mark options={dash, red}]
  table[row sep=crcr]{%
0	0.981469839530703\\
1	0.966045971889366\\
2	0.425171294994916\\
3	0.482534075600437\\
4	0.311300067849925\\
5	0.522593501501085\\
6	0.318516180979774\\
7	0.267558600717772\\
8	0.0748953006569883\\
9	0.0715502032889025\\
10	0.164169890566883\\
11	0.242647282295889\\
12	0.173050580796239\\
13	0.149300986987941\\
14	0.134951550456464\\
15	0.123069418434771\\
16	0.143158242133675\\
17	0.166050541840461\\
18	0.175741513593063\\
19	0.219069831079362\\
20	0.191979436131335\\
21	0.0834571894081313\\
22	0.0497999291753846\\
23	0.155464787615493\\
24	0.129149737708998\\
25	0.210959696618775\\
26	0.179485601106511\\
27	0.118329192806788\\
28	0.268344470354512\\
29	0.354506441639885\\
30	0.318267135931403\\
31	0.213029558540469\\
32	0.15964271793405\\
33	0.13820500199997\\
34	0.200402247483238\\
35	0.143797964401856\\
36	0.193305858337961\\
37	0.282754399744565\\
38	0.191307185840951\\
39	0.111560157267883\\
40	0.153250303274179\\
41	0.14397072043095\\
42	0.096815848996119\\
43	0.230175147570465\\
44	0.223438926448914\\
45	0.226467009532699\\
46	0.0527508436235256\\
47	0.20206313818777\\
48	0.162825432719704\\
49	0.0789412415560053\\
50	0.169751287527788\\
51	0.294706430365799\\
52	0.18876083749006\\
53	0.0910880004828219\\
54	0.0819750588379939\\
55	0.0672339098834129\\
56	0.0853938675581538\\
57	0.135899515355453\\
58	0.0884458437782099\\
59	0.136084051275522\\
60	0.155917041226006\\
61	0.0652056733610316\\
62	0.195577456296822\\
63	0.255213146299211\\
64	0.248725839201369\\
65	0.262224709311571\\
66	0.203995041407862\\
67	0.0691609439406263\\
68	0.0823164963004939\\
69	0.160816462433826\\
70	0.0575166130632097\\
71	0.0468561984621275\\
72	0.122295445995125\\
73	0.107589416996902\\
74	0.0680830007284918\\
75	0.0679427593601219\\
76	0.0793020270465203\\
77	0.0919151529954032\\
78	0.102994317388714\\
79	0.108607554398509\\
80	0.0463790333968483\\
};
\addlegendentry{\footnotesize SBP-PMB-SLAM}
\end{axis}

\begin{axis}[%
width=24mm,
height=15mm,
at={(48mm,23mm)},
scale only axis,
xmin=76,
xmax=80,
ymin=0.04,
ymax=0.31,
axis background/.style={fill=white},
xmajorgrids,
ymajorgrids
]
\addplot [color=blue, line width=1.2pt, mark options={solid, blue}, forget plot]
  table[row sep=crcr]{%
71	0.155390944953208\\
72	0.214933387872515\\
73	0.200299756607877\\
74	0.09078235406645\\
75	0.117546896663338\\
76	0.123826058326627\\
77	0.15488956141445\\
78	0.230113171675962\\
79	0.308161842293122\\
80	0.229735902517688\\
};
\addplot [color=red, dashed, line width=1.2pt, mark options={dash, red}, forget plot]
  table[row sep=crcr]{%
71	0.0468561984621275\\
72	0.122295445995125\\
73	0.107589416996902\\
74	0.0680830007284918\\
75	0.0679427593601219\\
76	0.0793020270465203\\
77	0.0919151529954032\\
78	0.102994317388714\\
79	0.108607554398509\\
80	0.0463790333968483\\
};
\end{axis}

\begin{axis}[%
width=74mm,
height=40mm,
at={(0mm,0mm)},
scale only axis,
xmin=0,
xmax=1,
ymin=0,
ymax=1,
axis line style={draw=none},
ticks=none,
axis x line*=bottom,
axis y line*=left
]

\draw [black, line width=1.0pt] (axis cs:0.98,0.2) ellipse [x radius=5, y radius=16];
\draw[-{Stealth}, color=black, line width=1.0pt] (axis cs:0.94,0.3) -- (axis cs:0.85,0.45);
\end{axis}

\node[rotate=0,fill=white] (BOC6) at (3.7cm,-.7cm){\small time $k$};
\node[rotate=90] at (-8mm,20mm){\small RMSEs of localization [m]};
\end{tikzpicture}
    \subcaption{}
    \label{Fig:InfLoc}
\end{subfigure}
\hspace{6mm}
\begin{subfigure}{0.45\textwidth}
    {
%
%
\definecolor{mycolor1}{rgb}{0.46667,0.67451,0.18824}%

\begin{tikzpicture}

\begin{axis}[%
width=74mm,
height=40mm,
at={(0mm,0mm)},
scale only axis,
xmin=0,
xmax=80,
ymin=240,
ymax=1800,
xticklabel style = {font=\small,yshift=0ex},
axis background/.style={fill=white},
axis background/.style={fill=white},
xmajorgrids,
ymajorgrids,
legend style={legend pos=south west, legend cell align=left, align=left, draw=white!15!black,style={row sep=-0.05cm}}
]

\addplot [color=blue, line width=1.0pt, mark options={solid, blue}]
  table[row sep=crcr]{%
0	1760\\
1	1760\\
2	1760\\
3	1760\\
4	1760\\
5	1702.86\\
6	1682.2866511152\\
7	1672.28079897387\\
8	1653.00374123126\\
9	1633.15858930612\\
10	1613.61367063487\\
11	1593.49848544072\\
12	1573.88355138378\\
13	1553.86669998388\\
14	1534.07595761948\\
15	1514.45757250804\\
16	1494.52131975116\\
17	1474.71746376537\\
18	1454.73814294823\\
19	1444.5896479068\\
20	1425.15612743605\\
21	1405.41481811181\\
22	1385.49285703521\\
23	1365.52322663459\\
24	1345.84260818152\\
25	1326.16868729404\\
26	1306.12555039382\\
27	1286.43440457881\\
28	1266.79765408124\\
29	1247.18432338323\\
30	1226.98350566752\\
31	1207.38904052048\\
32	1187.5326002446\\
33	1167.90791674596\\
34	1147.78294491223\\
35	1128.29697058073\\
36	1108.11356237125\\
37	1088.64805292983\\
38	1068.71039114156\\
39	1048.6113279889\\
40	1028.78012055309\\
41	1009.39222964741\\
42	989.287227886738\\
43	969.528723206857\\
44	949.822206081434\\
45	929.95292818398\\
46	910.056228418402\\
47	890.551531878688\\
48	870.62721447106\\
49	850.885150268838\\
50	830.859803126872\\
51	811.162826357165\\
52	791.237175667466\\
53	771.535263073964\\
54	751.47927018231\\
55	731.886828853743\\
56	711.835327932156\\
57	692.387037166804\\
58	672.208860126631\\
59	652.358722739029\\
60	633.058162229945\\
61	612.79321161197\\
62	592.877470477061\\
63	573.069121678142\\
64	553.168601370967\\
65	533.573510075093\\
66	514.03256394331\\
67	494.027142500152\\
68	474.319295254383\\
69	463.283524759805\\
70	444.538890388037\\
71	425.013409367948\\
72	404.90597955547\\
73	385.432473530191\\
74	365.632591175268\\
75	345.496194011727\\
76	325.918538002852\\
77	306.216646104442\\
78	286.174804404933\\
79	266.564126869158\\
80	246.421591647771\\
};
\addlegendentry{\footnotesize VBP-SLAM1~\cite{Erik_AOABPSLAM_ICC2019}}

\addplot [color=red, dashed, line width=1.0pt, mark options={dash, red}]
  table[row sep=crcr]{%
0	1760\\
1	1760\\
2	1760\\
3	1760\\
4	1760\\
5	1702.86\\
6	1682.3374673982\\
7	1672.18146445788\\
8	1652.81957251883\\
9	1632.92264289021\\
10	1613.29372324903\\
11	1593.13959155919\\
12	1573.44029003214\\
13	1553.43889742309\\
14	1533.64084014696\\
15	1513.96795264291\\
16	1493.96580832811\\
17	1474.13889790614\\
18	1454.13213700899\\
19	1443.93805267106\\
20	1424.4935512907\\
21	1404.71057586365\\
22	1384.76680417614\\
23	1364.75370496321\\
24	1345.04220993736\\
25	1325.3041302017\\
26	1305.24958039518\\
27	1285.52022353062\\
28	1265.83390413458\\
29	1246.18192317819\\
30	1226.01380310212\\
31	1206.40943927877\\
32	1186.56708893649\\
33	1166.9020057916\\
34	1146.73927619632\\
35	1127.24418964199\\
36	1107.03637643635\\
37	1087.51501506998\\
38	1067.50979942111\\
39	1047.39310333418\\
40	1027.47718969312\\
41	1008.06727219632\\
42	987.929180341962\\
43	968.092068986482\\
44	948.355230419941\\
45	928.436358130708\\
46	908.532591624317\\
47	888.939641383999\\
48	868.984411534112\\
49	849.189103932047\\
50	829.145299833944\\
51	809.401527926538\\
52	789.435030339699\\
53	769.668577961043\\
54	749.589863153746\\
55	729.967067131492\\
56	709.922734244983\\
57	690.452336876452\\
58	670.233257883064\\
59	650.365096627082\\
60	630.989160749924\\
61	610.704540386297\\
62	590.762089961701\\
63	570.926358069898\\
64	550.978082226819\\
65	531.288933264628\\
66	511.736126128092\\
67	491.703759012554\\
68	471.981059155577\\
69	460.908724875634\\
70	442.132126639741\\
71	422.5329526963\\
72	402.400470620362\\
73	382.891717558434\\
74	363.071436788275\\
75	342.8671722266\\
76	323.295473249516\\
77	303.512493829995\\
78	283.456248534176\\
79	263.824724617317\\
80	243.635357575542\\
};
\addlegendentry{\footnotesize SBP-PMB-SLAM}
\end{axis}

\begin{axis}[%
width=27mm,
height=15mm,
at={(45mm,23mm)},
scale only axis,
xmin=77.5,
xmax=78.5,
ymin=283,
ymax=287,
xtick={78},
xticklabels={$78$},
axis background/.style={fill=white},
xmajorgrids,
ymajorgrids
]

\addplot [color=blue, line width=1.2pt, mark options={solid, blue}, forget plot]
  table[row sep=crcr]{%
71	425.013409367948\\
72	404.90597955547\\
73	385.432473530191\\
74	365.632591175268\\
75	345.496194011727\\
76	325.918538002852\\
77	306.216646104442\\
78	286.174804404933\\
79	266.564126869158\\
80	246.421591647771\\
};

\addplot [color=red, dashed, line width=1.2pt, mark options={dash, red}, forget plot]
  table[row sep=crcr]{%
71	422.5329526963\\
72	402.400470620362\\
73	382.891717558434\\
74	363.071436788275\\
75	342.8671722266\\
76	323.295473249516\\
77	303.512493829995\\
78	283.456248534176\\
79	263.824724617317\\
80	243.635357575542\\
};
\end{axis}

\begin{axis}[%
width=74mm,
height=40mm,
at={(0mm,0mm)},
scale only axis,
xmin=0,
xmax=1,
ymin=0,
ymax=1,
axis line style={draw=none},
ticks=none,
axis x line*=bottom,
axis y line*=left
]

\draw [black, line width=1.0pt] (axis cs:0.96,0.05) ellipse [x radius=4, y radius=6];
\draw[-{Stealth}, color=black, line width=1.0pt] (axis cs:0.96,0.11) -- (axis cs:0.8,0.45);
\end{axis}

\node[rotate=0,fill=white] (BOC6) at (3.7cm,-.7cm){\small time $k$};
\node[rotate=90] at (-12mm,20mm){\small GOSPA errors of mapping [m]};
\end{tikzpicture}
    \subcaption{}
    \label{Fig:InfMap}
\end{subfigure}
\caption{Informative birth case.
Comparison of the proposed 
SBP-PMB-SLAM~(i.e., VBP-SLAM2)
and 
VBP-SLAM1~\cite{Erik_AOABPSLAM_ICC2019}:
(a) RMSEs of sensor localization and (b) GOSPA errors of landmark mapping. The performance of 
SBP-MB-SLAM
is identical to 
SBP-PMB-SLAM,
and the plots are omitted.
}
\label{Fig:Inf}
\par
\end{center}
\end{figure*}
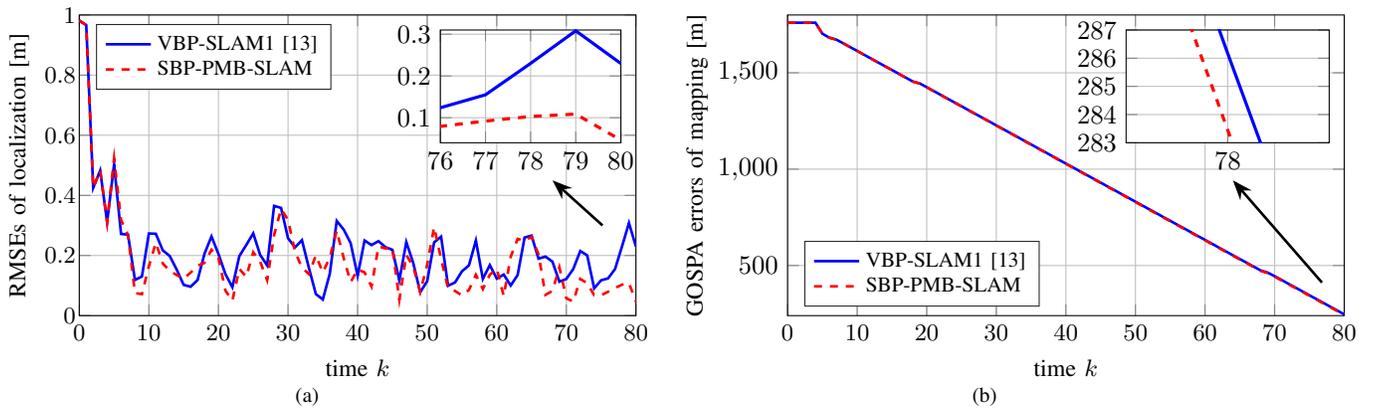

\subsection{Simulation Setup}
\subsubsection{Sensor and Landmarks}
    We consider a propagation environment of bistatic radio \ac{slam}, where a single \ac{bs} transmits the pilot signals, and \acp{sp}~(i.e., landmarks) are uniformly distributed, as shown in Fig.~\ref{Fig:Deployment}.
    A single sensor can receive two types of measurements: one from  \ac{bs}-sensor path; and others from \ac{bs}-\acp{sp}-sensor paths.
    We denote the locations of the \ac{bs} and \ac{sp} $j$ by $\mathbf{x}_\mathrm{BS}$ and $\mathbf{x}_\text{SP}^j$. The sensor state at time $k$ is denoted by $\mathbf{s}_k=[\mathbf{x}_{k,s}^\top,\dot{\mathbf{x}}_{k,s}^\top]^\top$, where 
    $\mathbf{x}_{k,s}= [x_{k,s},y_{k,s}]^\top$ denotes the location, and
    $\dot{\mathbf{x}}_{k,s}=[\dot{x}_{k,s},\dot{y}_{k,s}]^\top$ denotes the velocity.

\subsubsection{Dynamics}
    For the sensor dynamics during $K$ time steps, we adopt constant-velocity motion model~\cite[Sec.~6.3.2]{Yaakov_Tracking_Book2002} with the known transition density $f(\mathbf{s}_k|\mathbf{s}_{k-1})$, and the sensor state evolution follows 
\begin{align}
    \mathbf{s}_k = \mathbf{F}\mathbf{s}_{k-1} + \mathbf{B}\mathbf{q}_{k},
\end{align}
    $\mathbf{F}$ and $\mathbf{B}$ are defined as~\cite[Sec.~6.3.2]{Yaakov_Tracking_Book2002}
\begin{align}
    \mathbf{F} =
    \begin{bmatrix}
        \mathbf{I}_{2\times2} & \Delta t \mathbf{I}_{2\times2} \\
        \mathbf{0}_{2\times2} & \mathbf{I}_{2\times2}
    \end{bmatrix}, &&
    \mathbf{B} =
    \begin{bmatrix}
        0.5 \Delta t^2  \mathbf{I}_{2\times2}  \\
        \Delta t  \mathbf{I}_{2\times2}
    \end{bmatrix}.
\end{align}
    Here $\Delta t$ is the sampling time, and $\mathbf{q}_{k}$
    is the driving process that follows zero-mean Gaussian distribution with the covariance matrix $\sigma_w^2\mathbf{I}_{2\times2}$, where $\sigma_w$ is the standard deviation, and $\mathbf{I}_{2\times2}$ is the 2 by 2 identity matrix. 
    
    We set the parameters as follows: $K=80$ time steps, $\sigma_w=0.1~\text{m/s}^2$, and $\Delta t = 0.5$~s.
    We set the prior density of the sensor state to $f(\mathbf{s}_0) = \mathsf{N}(\mathbf{s}_0;\mathbf{s}_{0|0},\mathbf{P}_{0|0})$, where $\mathbf{s}_{0|0}$ is sampled from $\mathsf{N}(\bar{\mathbf{s}}_0,\mathbf{P}_{0|0})$ for each simulation run. Here $\bar{\mathbf{s}}_0 = [15,-420,0,20]^\top$ is the ground truth of the initial sensor state, with the units of m, m, m/s, and m/s, and $\mathbf{P}_{0|0} = \mathrm{blkdiag}[0.5 \mathbf{I}_{2 \times 2}, 0.005 \mathbf{I}_{2 \times 2}]$, and the units of the diagonal term of $\mathbf{P}$ are
    m$^2$, m$^2$, m$^2$/s$^2$, and m$^2$/s$^2$.


\subsubsection{Measurements}
    The sensor obtains the measurement set $\mathcal{Z}_k=\{\mathbf{z}_k^0,\mathbf{z}_k^1,\dots,\mathbf{z}_k^{J_k}\}$, where $\mathbf{z}_k^0$ and $\mathbf{z}_k^j$ for $j>0$ are the measurements corresponding to the \ac{bs} and different \acp{sp}, respectively.
    Let $\mathbf{t}_k^j = [({\mathbf{x}_\text{SP}^j})^\top,\mathbf{x}_{k,s}^\top]^\top$ be the augmented vector of \ac{sp} location $\mathbf{x}_\text{SP}^j$ and the sensor location $\mathbf{x}_{k,s}$. With the known \ac{bs} location, the measurements $\mathbf{z}_k^0$ and $\mathbf{z}_k^j$ for $j>0$
    are modeled as
\begin{align}
    \mathbf{z}_k^0 & = \mathbf{x}_{k,s} + \mathbf{r}_k, \\
    \mathbf{z}_k^j & = \mathbf{H} \mathbf{t}_k^j + \mathbf{r}_k.
\end{align}
    Here $\mathbf{H}= [\mathbf{I}_{2\times2}, -\mathbf{I}_{2\times2}]$, and $\mathbf{r}_k$ is the measurement noise that follows zero-mean Gaussian distribution with the covariance $\sigma_r^2 \mathbf{I}_{2\times2}$, where $\sigma_r$ is the standard deviation.
    We regard the false alarms and shortly visible \acp{sp} as clutter, modeled as $\mathbf{z}_k^\mathrm{C} \in \mathcal{Z}_k$.
    The number of clutter measurements, which are observed by the sensor, is modeled by the Poisson distribution with mean $\mu_\mathrm{c}$~\cite{Angel_Trajectory_TSP2020}, and 
    the clutter intensity is denoted by $c(\mathbf{z})$.

    
    The parameters are set as follows: $\mu_\mathrm{c}=1$, $c(\mathbf{z})=1.6 \cdot 10^{-4}$ in the area of interest, and $\sigma_r=0.707$~m.
    The number of SPs that are uniformly distributed in a network is denoted by $N_\mathrm{SP}$.
    The network size is $[0, 30]$~m $\times  [-400, 470]$~m, $N_\mathrm{SP}=176$, and the \ac{bs} location is $\mathbf{x}_\mathrm{BS} = [0,0]^\top$. 
    The  field-of-view of the sensor is determined by  
\begin{align}
\ensuremath{p_{\mathrm{D}}(\mathbf{s}_{k},\mathbf{x}_{k})} & =\begin{cases}
p_{\mathrm{D}} & \left\Vert \mathbf{x}_{k}-\mathbf{s}_{k}\right\Vert <r\\
0 & \mathrm{otherwise}
\end{cases}
\end{align}
where $r=20$ m and $p_\mathrm{D}=0.95$. 
    The landmarks are observable starting from $k=5$, and additional landmarks can be observable at each subsequent time step.
    For sensor localization in both scenarios, multipath is employed from $k=5$, whereas for $k<5$ the measurement for \ac{bs}-sensor path is exploited.\footnote{By the ellipsoidal gating method \cite{Panta_GatingPHD_TAES207}, the measurement for \ac{bs}-sensor path is determined as $\hat{\mathbf{z}}=\min_{\mathbf{z}_k^j \in \mathcal{Z}_k} (\mathbf{z}_k^j - \mathbf{x}_\mathrm{BS})^\top\mathbf{R}^{-1}(\mathbf{z}_k^j - \mathbf{x}_\mathrm{BS})$.}

\subsubsection{SLAM Filter}

    To investigate the influence of newly detected landmarks on sensor state updates, we consider $K$ time steps with two scenarios: 
one with an uninformative birth model, and the other with an informative birth model in modeling undetected landmarks. 
    In realistic environments, the uninformative birth model represents cases where we lack prior map knowledge, whereas under the informative birth model the assumption is that we have a previously available map.
    

    
    Undetected landmarks and the birth for both 
    SBP-PMB-SLAM and VBP-PMB1~\cite{Erik_AOABPSLAM_ICC2019}
    are modeled by the \ac{ppp}, implemented by the intensity functions.
    The intensities for undetected targets and the birth are 
    represented by 
    $\lambda(\mathbf{x}) = \sum_q \eta^q \mathsf{N}(\mathbf{x};\mathbf{x}^q,\mathbf{U}^q)$ and $\lambda^\mathrm{B}(\mathbf{x}) = \sum_q \eta^{\mathrm{B},q} \mathsf{N}(\mathbf{x};\mathbf{x}^{\mathrm{B},q},\mathbf{U}^{\mathrm{B},q})$,
    where $\mathsf{N}(\mathbf{x};\mathbf{x},\mathbf{U})$ is the Gaussian density, and $\eta$ is the Gaussian weight.
    For SBP-MB-SLAM, the birth is modeled by the \ac{mb} process. The \ac{mb} birth density is the form of~\eqref{eq:MBdensity}, where the $q$-th Bernoulli has an existence probability $r^{\text{B},q}$ and Gaussian density $\mathsf{N}(\mathbf{x};\mathbf{x}^{\text{B},q},\mathbf{U}^{\text{B},q})$.
    For the informative birth case, we set the number of births to the number of newly detected landmarks, $\mathbf{x}^{\text{B},q}\sim \mathsf{N}(\mathbf{x};\mathbf{x}_\text{SP}^{j(q)},\mathbf{U}^{\text{B},q})$, $\eta^{\text{B},q} = 1$, and the small Gaussian covariance $\mathbf{U}^{\text{B},q}=0.01 \mathbf{I}_{2 \times 2}$, where $j(q)$ is the \ac{sp} index corresponding to the $q$-th Gaussian.
    For the uninformative birth case, we set the number of Gaussians for births to the same as the number of measurements, birth weight $\eta^{\text{B},q} = 10^{-3}$, and the large Gaussian covariance $\mathbf{U}^{\text{B},q}=10^6\times \mathbf{I}_{2 \times 2}$.
    
    Using the Kalman filter~\cite{Simon2013}, we implement the messages and beliefs corresponding to landmarks and data associations. 
    The belief of the sensor state of \fbox{\footnotesize{27}} is intractable by the Kalman filter, and thus implemented by the particle filter~\cite{ParticleFilter_TSP2002,Henk_IRD_2007} with $N_\mathrm{P}$ samples.
    After the belief computation at each time step, we declare that a landmark is detected when $r_{k|k}^i> \Gamma_\mathrm{D}$. The previously detected landmarks with $r_{k|k}^i< \Gamma_\mathrm{Ber}$ for $i=1,\dots,I_{k-1}+J_{k}$ are removed, and the Gaussians in the \ac{ppp} with $\eta_{k|k}^q < \Gamma_\mathrm{Poi}$ are eliminated.
 
    
    The rest of the algorithm parameters are set as follows:
    $N_p=10^4$,
    $\Gamma_\mathrm{D}=0.4$;
    $\Gamma_\mathrm{Ber}=10^{-5}$;
    $\Gamma_\mathrm{Poi}=5\cdot 10^{-10}$; $p_\mathrm{S}=0.99$.
    The simulation results are averaged over 500 Monte Carlo trials.
    The performance of sensor localization and landmark mapping are evaluated by \ac{rmse} and \ac{gospa}~\cite[eq.~(1)]{RahmathullahGFS:2017} with parameters $\mathrm{G}_p=1$, $\mathrm{G}_c=2$, and $\mathrm{G}_\alpha=2$, respectively.



\subsection{Results and Discussions}
    
\subsubsection{Uninformative Birth}
    Fig.~\ref{Fig:UnInf} shows the \ac{slam} performance against time, under the scenario that the \ac{ppp} for undetected landmarks and birth in SBP-PMB-SLAM 
    and the \ac{mb} for birth in SBP-MB-SLAM
    are uninformative.
    During $K=80$, the \ac{slam} performance of the 
    SBP-PMB-SLAM
    is identical to that of the 
    VBP-BP-SLAM
    filter even though the messages \fbox{\footnotesize{23}} are employed for sensor belief computation.
    This happens because the messages \fbox{\footnotesize{23}} corresponding to newly detected landmarks or clutter are not as informative as the messages \fbox{\footnotesize{20}} corresponding to previously detected landmarks.
    The mapping performance of the 
    SBP-MB-SLAM
    filter is inferior to the other two filters. This is because the \ac{mb} birth model is limited in the number of Bernoulli sets, where each is available for modeling a single landmark, whereas the set that follows \ac{ppp} captures multiple landmarks.
    

\subsubsection{Informative Birth}
    Fig.~\ref{Fig:Inf} shows the \ac{slam} performance against time, under the scenario that the \ac{ppp} for undetected landmarks and birth in 
    SBP-PMB-SLAM 
    and MB birth in 
    SBP-MB-SLAM
    are informative.
    When using the informative PPP, we achieve performance improvement in 
    set-type BP PMB-SLAM 
    SBP-PMB-SLAM
    compared to that of 
    VBP-SLAM1.
    This is because new landmarks appear and the \ac{ppp} for undetected landmarks is informative. 
    The sensor localization gap is clearly visible, and the landmarks are determined, starting from $k=5$ since
    the landmarks begin to be observable from $k=5$.
    The gap of \acp{gospa} increases over time steps because there exist newly detected landmarks
    at every time step.
    We obtain that the performance of 
    SBP-MB-SLAM
    is identical to that of 
    SBP-PMB-SLAM,
    and thus the results are omitted.
    This is because the clutter part of the messages \fbox{\footnotesize{23}}\footnote{This message comprises the sum of two parts: clutter and newly detected landmark.} is much smaller than the newly detected landmark part, and the implementation of \ac{mb} birth for 
    SBP-MB-SLAM
    is equivalent to that of 
    SBP-PMB-SLAM,
    under the informative birth case.
    
    Table~\ref{tab:SLAMPerform}
    present the \acp{rmse} of sensor localization and \ac{gospa} errors of landmark mapping respectively, with the different setup: $\mu_c=1$ with $c(\mathbf{z})=1.6\cdot10^{-4}$, $\mu_c=5$ with $c(\mathbf{z})=8\cdot10^{-4}$, and $\mu_c=10$ with $c(\mathbf{z})=1.6\cdot10^{-3}$, for
    $N_\mathrm{SP}= 176$, $N_\mathrm{SP}= 352$.
    The results are averaged over all Monte Carlo simulation runs during the steady-state operation after $k>40$. 
    Table~\ref{tab:SLAMPerform}
    show the performance of both methods deteriorates progressively as the clutter Poisson mean $\mu_c$ increases~($\mu_c=1,\mu_c=5,\mu_c=10$). 
    The sensor localization accuracy with $N_\mathrm{D}=4$ is improved, compared to with $N_\mathrm{D}=2$. 
    This improvement is attributed to the fact that sensor localization accuracy increases as either mapping accuracy or the number of detected targets increases. 
    The performance gap between 
    SBP-PMB-SLAM and VBP-SLAM1
    arises from the usage of newly detected targets in sensor state updates.

\begin{table}[t]
    \centering
    \caption{Informative birth case: RMSEs of sensor localization and GOSPA errors of mapping with different $N_\mathrm{SP}$ and $\mu_c$.}
    \resizebox{1\columnwidth}{!} 
    {
    \begin{tabular}{|P{1cm} P{1cm} P{1cm}|P{2cm}|P{2cm}|}
    \hline
     \textbf{}
     & & & VBP-SLAM1~\cite{Erik_AOABPSLAM_ICC2019} & SBP-PMB-SLAM\\
    \hline
    \hline
    \multicolumn{1}{|l|}{\multirow{6}{*}{RMSEs [m]}} & \multicolumn{1}{l|}{\multirow{3}{*}{$N_\mathrm{SP}=176$}} & \multicolumn{1}{l|}{$\mu_c=1$} & \multicolumn{1}{c|}{0.1651} & \multicolumn{1}{c|}{0.1306}\\
    \cline{3-5}
    \multicolumn{1}{|l|}{}& \multicolumn{1}{l|}{} & \multicolumn{1}{l|}{$\mu_c=5$} & \multicolumn{1}{c|}{0.1881} & \multicolumn{1}{c|}{0.1334}\\
    \cline{3-5}
    \multicolumn{1}{|l|}{}& \multicolumn{1}{l|}{} & \multicolumn{1}{l|}{$\mu_c=10$} & \multicolumn{1}{c|}{0.2014} & \multicolumn{1}{c|}{0.1401}\\
    \cline{2-5}
    \multicolumn{1}{|l|}{}& \multicolumn{1}{l|}{\multirow{3}{*}{$N_\mathrm{SP}=352$}} & \multicolumn{1}{l|}{$\mu_c=1$} & \multicolumn{1}{c|}{0.1321} & \multicolumn{1}{c|}{0.1125}\\
    \cline{3-5}
    \multicolumn{1}{|l|}{}& \multicolumn{1}{l|}{} & \multicolumn{1}{l|}{$\mu_c=5$} & \multicolumn{1}{c|}{0.1361} & \multicolumn{1}{c|}{0.1137}\\
    \cline{3-5}
    \multicolumn{1}{|l|}{}& \multicolumn{1}{l|}{} & \multicolumn{1}{l|}{$\mu_c=10$} & \multicolumn{1}{c|}{0.1559} & \multicolumn{1}{c|}{0.1195}\\
    \cline{1-5}
    \multicolumn{1}{|l|}{\multirow{6}{*}{GOSPA Errors [m]}} & \multicolumn{1}{l|}{\multirow{3}{*}{$N_\mathrm{SP}=176$}} & \multicolumn{1}{l|}{$\mu_c=1$} & \multicolumn{1}{c|}{638.48} & \multicolumn{1}{c|}{636.42}\\
    \cline{3-5}
    \multicolumn{1}{|l|}{}& \multicolumn{1}{l|}{} & \multicolumn{1}{l|}{$\mu_c=5$} & \multicolumn{1}{c|}{979.74} & \multicolumn{1}{c|}{977.31}\\
    \cline{3-5}
    \multicolumn{1}{|l|}{}& \multicolumn{1}{l|}{} & \multicolumn{1}{l|}{$\mu_c=10$} & \multicolumn{1}{c|}{1162.34} & \multicolumn{1}{c|}{1160.52}\\
    \cline{2-5}
    \multicolumn{1}{|l|}{}& \multicolumn{1}{l|}{\multirow{3}{*}{$N_\mathrm{SP}=352$}} & \multicolumn{1}{l|}{$\mu_c=1$} & \multicolumn{1}{c|}{622.52} & \multicolumn{1}{c|}{620.99}\\
    \cline{3-5}
    \multicolumn{1}{|l|}{}& \multicolumn{1}{l|}{} & \multicolumn{1}{l|}{$\mu_c=5$} & \multicolumn{1}{c|}{794.65} & \multicolumn{1}{c|}{793.17}\\
    \cline{3-5}
    \multicolumn{1}{|l|}{}& \multicolumn{1}{l|}{} & \multicolumn{1}{l|}{$\mu_c=10$} & \multicolumn{1}{c|}{881.29} & \multicolumn{1}{c|}{880.04}\\
    \cline{2-5}
    \hline
    \end{tabular}
                    }
    \label{tab:SLAMPerform}
\end{table}

\section{Conclusions} \label{sec:Conclusions}
    
    In this paper, we have developed the general framework of set-type \ac{bp}, which can serve as a fundamental tool for computing either
    the marginal (or its approximate density) of an \ac{rfs}.
    From the framework, we derived the \ac{pmb} and \ac{mb} filters with the developed set-type \ac{bp}, applicable to the related problems of mapping, \ac{mtt}, \ac{slam}, and \ac{slat}.
    Under the densities that follow the Bernoulli process, we demonstrated that vector-type \ac{bp} is the special case of set-type \ac{bp}.
    To handle the unknown set cardinality in the factor graph, we developed the set-factor nodes for set partitioning, set merging, and shifting set auxiliary variables.
    We applied the proposed set-type \ac{bp} to \ac{pmb}-\ac{slam} filter and explored the relations between the set-type \ac{bp} \ac{pmb} and vector-type \ac{bp}-\ac{slam}~\cite{Erik_BPSLAM_TWC2019,Erik_AOABPSLAM_ICC2019} filters.
    Our results demonstrated that the proposed set-type \ac{bp}-\ac{slam} filter (and the equivalent vector type BP algorithm) outperforms 
    vector-type \ac{bp} from \cite{Erik_BPSLAM_TWC2019,Erik_AOABPSLAM_ICC2019}, under the informative \ac{ppp} for undetected landmarks, and equivalent for the uninformative \ac{ppp}. Vector-type BP can be readily modified, through various heuristics, to match the performance and operation of set-type BP.

    Other applications where set-type BP can be applied include smoothing (obtained by computing messages backward in time), extended targets (where each target may generate a set of measurements), cooperative processing (obtained by linking the factor graphs related to two different sensors), general target-generated measurements and arbitrary clutter, and positioning (by adding factors related to fixed anchors in the environment). 
    Applications
    of the set-type \ac{bp} framework to other families of \ac{rfs} filters,
    along with their \ac{bp} counterparts,
    deserve further study. 
    Finally, we hope that the proposed framework can find application in problems outside the sphere of mapping and tracking. 
    


\begin{appendices}

\section{Proof of Theorem~\ref{theo:DerSetBP}}
\label{app:Optimal_SetBP}
    We prove Theorem~\ref{theo:DerSetBP} in a similar fashion as in the vector-type \ac{bp}~\cite[Theorem~2]{Yedidia2005BetheFree}. 

\begin{proof}
    Using the set beliefs of \eqref{eq:setVbp} and \eqref{eq:setFbp}, we introduce the Bethe free energy~\cite{Yedidia2005BetheFree}: $F_\mathrm{Bethe} = U_\mathrm{Bethe}- H_\mathrm{Bethe}$, where $U_\mathrm{Bethe}$ is the Bethe average energy, given by\footnote{The set integral such as $\int \mathtt{b}(\mathcal{X}) \ln \mathtt{b}(\mathcal{X}) \delta \mathcal{X}$ and $\int \mathtt{b}(\mathcal{X}) \ln f(\mathcal{X}) \delta \mathcal{X}$ requires the use of the measure-theoretic integral~\cite{Hoang_PoissonCSD_TIT2015}, due to the units of the standard set integral and densities~\cite[Sec.~3.2.4]{Mahler_book2014}.
    In this sense, the integral is then 
    $\int \mathtt{b}(\mathcal{X}) \ln (K^{\lvert \mathcal{X} \rvert}\mathtt{b}(\mathcal{X})) \delta \mathcal{X}$ and $\int \mathtt{b}(\mathcal{X}) \ln (K^{\lvert \mathcal{X} \rvert}
    f(\mathcal{X})) \delta \mathcal{X}$, where $K$ is the unit of the hypervolume of the single state $\mathbf{x} \in \mathcal{X}$. For notational simplicity, we omit the unit $K^{\lvert \mathcal{X} \rvert}$ in the integration.}
\begin{align}
    U_\mathrm{Bethe} = - \sum_{a}\int  \mathtt{b}(\underline{\mathcal{X}}^a) \ln f_a(\underline{\mathcal{X}}^a) \delta \underline{\mathcal{X}}^a,
\end{align}
    and $H_\mathrm{Bethe}$ is the Bethe entropy, given by
\begin{align}
    H_\mathrm{Bethe} 
    =&
    \, -\sum_a \int 
     \mathtt{b}(\underline{\mathcal{X}}^a) \ln \mathtt{b}(\underline{\mathcal{X}}^a) \delta \underline{\mathcal{X}}^a \notag\\
     &+
     \sum_i(\lvert \mathcal{M}(i) \rvert-1)\int \mathtt{b}(\mathcal{X}^i) \ln \mathtt{b}(\mathcal{X}^i) \delta \mathcal{X}^i.
\end{align}
    The constraints of the Bethe free energy, the average energy, and the entropy are all functions of set-beliefs. They are introduced as follows.
    The normalization constraints are
    $\int \mathtt{b}(\mathcal{X}^i) \delta \mathcal{X}^i = 1$ for all set-variable $i$, $\int \mathtt{b}(\underline{\mathcal{X}}^a) \delta \underline{\mathcal{X}}^a = 1$ for all set-factor $a$ with $\lvert \mathcal{M}(i) \rvert \geq 2$. 
    The marginalization constraints are
    $\int \mathtt{b}(\underline{\mathcal{X}}^a) \delta \mathcal{X}^{\sim i} = \mathtt{b}(\mathcal{X}^i)$ such that $i \in \mathcal{N}(a)$.
    The inequality constraints are $0 \leq \mathtt{b}(\underline{\mathcal{X}}^a) \leq 1$ for all set-factor $a$ and $0 \leq \mathtt{b}(\mathcal{X}^i) \leq 1$ for all set-variable $i$.
    Due to the assumption of the interior stationary point, the inequality constraints will be inactive.
    Thus, we enforce the equality constraints with the Lagrange multipliers,  denoted by $\gamma_a$, $\gamma_i$, and $\lambda_{a,i}(\mathcal{X}^i)$, respectively, and the Lagrangian is formulated as
\begin{align}
    L =
    &
    - \sum_{a}\int  \mathtt{b}(\underline{\mathcal{X}}^a) \ln f_a(\underline{\mathcal{X}}^a) \delta \underline{\mathcal{X}}^a \\
    &
    +\sum_a \int 
     \mathtt{b}(\underline{\mathcal{X}}^a) \ln \mathtt{b}(\underline{\mathcal{X}}^a) \delta \underline{\mathcal{X}}^a \notag \\
    &
    - \sum_i(\lvert \mathcal{M}(i) \rvert-1)\int \mathtt{b}(\mathcal{X}^i) \ln \mathtt{b}(\mathcal{X}^i) \delta \mathcal{X}^i \notag \\
    &
    + \sum_a \gamma_a ( \int  \mathtt{b}(\underline{\mathcal{X}}^a)\delta \underline{\mathcal{X}}^a -1)
    + \sum_i \gamma_i ( \int  \mathtt{b}(\mathcal{X}^i) \delta \mathcal{X}^i -1) \notag \\
    &
    + \sum_i \sum_{a\in\mathcal{M}(i)}\int \lambda_{a,i}(\mathcal{X}^i) (\mathtt{b}(\mathcal{X}^i) - \int \mathtt{b}(\underline{\mathcal{X}}^a) \delta \mathcal{X}^{\sim i}) \delta \mathcal{X}^{i}. \notag
\end{align}

    By the derivatives of the Lagrangian with respect to $\mathtt{b}(\underline{\mathcal{X}}^a)$ and $\mathtt{b}(\mathcal{X}^i)$, we can obtain the interior stationary points as follows:
\begin{align}
    \hat{\mathtt{b}}(\mathcal{X}^i) &=\frac{1}{Z_i}\exp \Big (\frac{1}{\lvert \mathcal{M}(i) \rvert-1}\sum_{a\in \mathcal{M}(i) }\lambda_{a,i}(\mathcal{X}^i) \Big ),\label{eq:der_b_i}\\
    \hat{\mathtt{b}}(\underline{\mathcal{X}}^a) &= \frac{1}{Z_a}f_a(\underline{\mathcal{X}}^a)\exp \Big(\sum_{i\in \mathcal{N}(a)}\lambda_{a,i}(\mathcal{X}^i) \Big),\label{eq:der_b_a}
\end{align}
    where $Z_a$ and $Z_i$ are the normalization constants.
    Making the identification
\begin{align}
    \lambda_{a,i}(\mathcal{X}^i) = \ln n_{i \rightarrow a}(\mathcal{X}^i) = \ln \prod_{b \in \mathcal{M}(i)\setminus \{a\}} m_{b \rightarrow i} (\mathcal{X}^i),
    \label{eq:lambdaM}
\end{align}
    and substitute \eqref{eq:lambdaM} into \eqref{eq:der_b_i} and \eqref{eq:der_b_a}, then we recover the set-type \ac{bp} fixed points of \eqref{eq:setVbp} and \eqref{eq:setFbp} as follows:
\begin{align}
    \hat{\mathtt{b}}(\mathcal{X}^i) &\propto \prod_{a \in \mathcal{M}(i)} \mathtt{m}_{a \to i}(\mathcal{X}^i),\\
    \hat{\mathtt{b}}(\underline{\mathcal{X}}^a)  &\propto 
    f_a (\underline{\mathcal{X}}^a)
    \prod_{i \in \mathcal{N}(a)} \prod_{b \in \mathcal{M}(i) \setminus \{a\}} \mathtt{m}_{b \to i}(\mathcal{X}^{j}).
\end{align}

    To derive this theorem conversely, we introduce $\mathtt{m}_{a\rightarrow i}$
\begin{align}
    \mathtt{m}_{a\rightarrow i} 
    &= \exp \bigg(
    \frac{2- \lvert \mathcal{M}(i) \rvert}{\lvert \mathcal{M}(i) \rvert-1} \lambda_{a,i}(\mathcal{X}^i) 
    + \frac{1}{\lvert \mathcal{M}(i) \rvert-1}\lambda_{b,i}(\mathcal{X}^i)\bigg),\label{eq:m_ai}
\end{align}
    where $\lambda_{b,i}(\mathcal{X}^i)= \ln n_{i \rightarrow b}(\mathcal{X}^i)$, which can be obtained from \eqref{eq:lambdaM} and set-message update rules of~\eqref{eq:setM_fv} and \eqref{eq:setM_vf}.
    Substituting $\prod_{b\in \mathcal{M}(i)\setminus \{a\}} \mathtt{m}_{b \rightarrow i}$ of \eqref{eq:lambdaM} and $\mathtt{m}_{a \rightarrow i}$ of \eqref{eq:m_ai} into the set-type \ac{bp} fixed points of \eqref{eq:setVbp} and \eqref{eq:setFbp}, the reverse of this theorem can be shown.
    
    We omitted a single variable that is only connected to a single factor (i.e., $\lvert \mathcal{M}(i) \rvert = 1$), called a dead-end variable, in the Lagrangian since the dead-end variable does not contribute to the Bethe free energy and the beliefs. The beliefs at dead-end variables are not required but it can be easily computed from the belief $\int \mathtt{b}(\underline{\mathcal{X}}^a) \delta \mathcal{X}^{\sim i}$ as needed.

\end{proof}

\section{Proof of Corollary~\ref{coro:setBelief_nocycle}}
\label{app:setBelief_nocycle}
    We prove 
    Corollary~\ref{coro:setBelief_nocycle}
    by showing that the set-belief $\mathtt{b}(\mathcal{X}^i)$ obtained after running set-type \ac{bp} in a factor graph with no cycles corresponds to the marginal probability density of $\mathcal{X}^i$.
\begin{proof}
    Note that $\mathtt{m}_{a \rightarrow i}(\mathcal{X}^i) = f_a(\underline{\mathcal{X}}^a)$ if $\lvert \mathcal{N}(a) \rvert = 1$.
    By the chain rule with \eqref{eq:setM_fv} and \eqref{eq:setM_vf},
    the set-belief $\mathtt{b}(\mathcal{X}^i)$ of~\eqref{eq:setM_fv} is represented by the integration of the product of all set-factors $f_a(\underline{\mathcal{X}}^a)$.
    It corresponds to $\int f({\mathcal{X}}^1,\dots,{\mathcal{X}}^n) \delta \mathcal{X}^{\sim i}$ of~\eqref{eq:SetFactor}.
\end{proof}

\section{Proof of Proposition~\ref{prop:Poisson}}
\label{app:ProofPoi}
    We prove Proposition~\ref{prop:Poisson} with the partitioning and merging factor of Definition~\ref{def:PartitionFactor}.
\begin{proof}
    Using the partitioning and merging factor of Definition~\ref{def:PartitionFactor}, we can then partition the Poisson set $\mathcal{X}^j$ with the incoming Poisson message $\mathtt{n}_{j \rightarrow a}(\mathcal{X}^j)$ into $\lvert \mathcal{N}(a) \rvert-1$ Poisson sets $\mathcal{X}^i$ with the Poisson outgoing messages $\mathtt{m}_{a \rightarrow i}(\mathcal{X}^i)$ for $i \in \mathcal{N}(a) \setminus \{j\}$. 
    Suppose we have incoming messages $\mathtt{n}_{j \rightarrow a}(\mathcal{X}^j)=f^{\mathrm{PPP}}(\mathcal{X}^{j})$ that follows the \ac{ppp} density, $\mathtt{n}_{q \rightarrow a}(\mathcal{X}^{q})=1$, for $q \in \mathcal{N}(a) \setminus \{j,i\}$. 
    From~\eqref{eq:SMFactor_ForOut}, the partitioning messages from $f_a(\underline{\mathcal{X}}^a)$ to $\mathcal{X}^i$ with $i \in \mathcal{N}(a) \setminus \{j\}$ are
\begin{align}
    \mathtt{m}_{a \rightarrow i}(\mathcal{X}^i) 
    &= \int \mathtt{n}_{j \rightarrow a}(\uplus_{q \in \mathcal{N}(a)\setminus \{j\}} \mathcal{X}^{q})
    \delta \mathcal{X}^{\sim i} \label{eq:PartitionPPP_cup}\\
    & = \int \mathtt{n}_{j \rightarrow a}(\mathcal{X}^i \uplus \mathcal{X})
    \delta \mathcal{X}\\
    & \propto f^{\mathrm{PPP}}(\mathcal{X}^{i}),
\end{align}
    the same \ac{ppp} intensity function.
    It indicates that the Poisson set is partitioned into multiple Poisson sets with the same \ac{ppp} intensity function, 
    which is derived in Appendix~\ref{app:PPPPartition}.
    
    Conversely, the $\lvert \mathcal{N}(a) \rvert-1$ Poisson sets $\mathcal{X}^i$ with the incoming Poisson messages $\mathtt{n}_{i \rightarrow a}(\mathcal{X}^i)$ for $i \in \mathcal{N}(a) \setminus \{j\}$ are merged into a single Poisson set $\mathcal{X}^j$ with the Poisson outgoing message $\mathtt{m}_{a \rightarrow j}(\mathcal{X}^j)$.  
    Suppose we have incoming messages $\mathtt{n}_{i \rightarrow a}(\mathcal{X}^{i}) = f^{\mathrm{PPP}}(\mathcal{X}^{i})$, for $i \in \mathcal{N}(a) \setminus \{j\}$ that follow \ac{ppp} densities.
    The  message of~\eqref{eq:SMFactor_BackOut} is
\begin{align}
    \mathtt{m}_{a \rightarrow j}(\mathcal{X}^j) 
    & = 
    \sum_{\uplus_{i \in \mathcal{N}(a) \setminus \{j\}} \mathcal{W}^{i}=\mathcal{X}^j}
    \prod_{i \in \mathcal{N}(a) \setminus \{j\}} 
    f^\mathrm{PPP}(\mathcal{W}^{i}). 
\end{align}
    It indicates that the outgoing message is the convolution of all incoming PPP messages, representing the union of multiple Poisson sets.
\end{proof}

\section{Proof of Poisson Set Partition}
\label{app:PPPPartition}
    We find that $f^\mathrm{PPP}(\mathcal{X}) = e^{-\bar{\lambda}} \prod_{\mathbf{x} \in \mathcal{X}} \lambda(\mathbf{x})$ and $\bar{\lambda} = \int \lambda(\mathbf{x}) \mathrm{d}\mathbf{x}$ from~\eqref{eq:PoissonDensity}, and then~\eqref{eq:PartitionPPP_cup} is expressed as
\begin{align}
    &\mathtt{m}_{a \rightarrow i}(\mathcal{X}^i) \\
    &= \int 
    e^{-\bar{\lambda}}
    \prod_{q\in \mathcal{N}(a) \setminus \{j\} }
    \prod_{\mathbf{x}\in\mathcal{X}^{q}} \lambda({\mathbf{x}})
    \delta \mathcal{X}^{\sim i} \\
    & \propto 
    e^{-\bar{\lambda}}
    \prod_{\mathbf{x}\in\mathcal{X}^{i}} \lambda({\mathbf{x}})
    \prod_{q\in \mathcal{N}(a) \setminus \{j\} } 
    \int 
    e^{-\bar{\lambda}}
    \prod_{\mathbf{x}\in\mathcal{X}^{q}} \lambda({\mathbf{x}})
    \delta \mathcal{X}^{\sim (i,j)}\\
    &=
    e^{-\bar{\lambda}}
    \prod_{\mathbf{x}\in\mathcal{X}^{i}} \lambda({\mathbf{x}})=f^\mathrm{PPP}(\mathcal{X}^i).
\end{align}
    Finally, we find that $ \mathtt{m}_{a \rightarrow i}(\mathcal{X}^i) \propto f^\mathrm{PPP}(\mathcal{X}^i)$.

\section{Joint Update Density}
\label{SM:JointUp}

We prove the joint update density of~\eqref{eq:factorizedDensity} with introducing $\mathcal{X}_k^1\uplus \cdots \uplus \mathcal{X}_k^{I_{k-1}} = \overline{\mathcal{X}}_k^{I_{k-1}}$,  $\mathcal{Z}_k^1\uplus \cdots \uplus \mathcal{Z}_k^{I_{k-1}} = \overline{\mathcal{Z}}_k^{I_{k-1}}$, and $\mathcal{Y}_k^1\uplus \cdots \uplus \mathcal{Y}_k^{J_k} = \overline{\mathcal{Y}}_k^{J_k}$. 
We start with the prior at time step $k$ (without auxiliary variables)
such that
\begin{align}
    f_\mathtt{p}(\mathbf{s}_{k},{\mathcal{X}}_{k}) & =f_\mathtt{p}(\mathbf{s}_{k} )f_\mathtt{p}({\mathcal{X}}_{k})\\
    & =f_\mathtt{p}(\mathbf{s}_{k} )
    \sum_{{\mathcal{P}}_k^\mathrm{U}\uplus \overline{\mathcal{X}}_k^{I_{k-1}} = \mathcal{X}_k}
    f_\mathtt{p}^\mathrm{U} ({\mathcal{P}}_k^\mathrm{U}) \prod_{i=1}^{I_{k-1}} f_{\mathtt{p}}^{i} ({\mathcal{X}}_k^i ),
\end{align}
where the prior assumes that the sensor state and the set of targets
are independent. The set of measurements received at time step $k$ is $\mathcal{Z}_{k}=\{ z_{k}^{1},...,z_{k}^{J_{k}}\}$.
\subsection{Likelihood}

For any sets $\mathcal{P}_k^\mathrm{U},\mathcal{X}_k^{1},...,\mathcal{X}_k^{I_{k-1}}$ such that $\lvert \mathcal{X}_k^{i} \rvert \leq 1$
for $i=1,...,I_{k-1}$, we introduce the likelihood functions~\cite[Eq.~(25)]{Angel_PMBM_TAES2018}
\begin{align}
    &l_{o}(\mathcal{Z}_k|{\mathcal{P}}_k^\mathrm{U},{\mathcal{X}}_k^{1},...,{\mathcal{X}}_k^{I_{k-1}},\mathbf{s}_k) \nonumber\\
    &
    = \sum_{\mathcal{Z}_k^\mathrm{U}\uplus \overline{\mathcal{Z}}_k^{I_{k-1}}=\mathcal{Z}_k}
    l(\mathcal{Z}_k^\mathrm{U}|\mathcal{P}_k^\mathrm{U},\mathbf{s}_k) 
    \prod_{i=1}^{I_{k-1}} t(\mathcal{Z}_k^i|\mathcal{X}_k^i,\mathbf{s}_k ),\label{eq:likelihood_partitioning_conjugate_prior}
\end{align}
where $\mathcal{Z}_k^\mathrm{U}$ represents a measurement set including measurement elements that are generated from both targets in $\mathcal{X}_k^\mathrm{U}$ and clutter, and $t(\mathcal{Z}_k^i|\mathcal{X}_k^i)$ is the likelihood for a set with zero or one measurement element without clutter, given by
\begin{align}
    &t(\mathcal{Z}_k^{i}|\mathcal{X}_k^{i},\mathbf{s}_k) \notag\\
    &=
    \begin{cases}
        p_\mathrm{D}(\mathbf{x}_k,\mathbf{s}_k) l(\mathbf{z}_k|\mathbf{x}_k,\mathbf{s}_k), & \mathcal{Z}_k^{i}=\{ \mathbf{z}_k\} ,\mathcal{X}_k^{i}=\{ \mathbf{x}_k\} \\
        1-p_\mathrm{D}(\mathbf{x}_k,\mathbf{s}_k), & \mathcal{Z}_k^{i}=\emptyset,\mathcal{X}_k^{i}=\{ \mathbf{x}_k\} \\
        1, & \mathcal{Z}_k^{i}=\emptyset,\mathcal{X}_k^{i}=\emptyset \\
        0, & \mathrm{otherwise}.
    \end{cases}\label{eq:t_z_x}
\end{align}
In addition, we know that, for $\mathcal{P}_k^\mathrm{U}=\mathcal{X}_k^\mathrm{U}\uplus \mathcal{Y}_k^{1}\uplus\dots\uplus \mathcal{Y}_k^{J_{k}}$,
the multi-target likelihood meets 
\begin{align}
l(\mathcal{Z}_k|\mathcal{P}_k^\mathrm{U},\mathbf{s}_k) & =l_{o}(\mathcal{Z}_k|\mathcal{X}_k^\mathrm{U},\mathcal{Y}_k^{1},\dots,\mathcal{Y}_k^{J_{k}},\mathbf{s}_k).
\label{eq:likelihood_equivalence_proof}
\end{align}

\subsection{Posterior}
    From~\cite[Eq.~(34)]{Angel_PMBM_TAES2018}, the posterior density is then
\begin{align}
&f_\mathtt{u}(\mathbf{s}_k,\mathcal{X}_k) \propto l(\mathcal{Z}_k|\mathcal{X}_k,\mathbf{s}_k)f_\mathtt{p}(\mathbf{s}_k,\mathcal{X}_k)\\
& =f_\mathtt{p}(\mathbf{s}_k)
    \sum_{\mathcal{Z}_k^\mathrm{U}\uplus \overline{\mathcal{Z}}_k^{I_{k-1}}=\mathcal{Z}_k}
    \sum_{\mathcal{P}_k^\mathrm{U}\uplus \overline{\mathcal{X}}_k^{I_{k-1}}=\mathcal{X}_k}
    l(\mathcal{Z}_k^\mathrm{U}|\mathcal{P}_k^\mathrm{U},\mathbf{s}_k) f_\mathtt{p}^\mathrm{U}(\mathcal{P}_k^\mathrm{U}) \notag \\
& ~~~ \times \prod_{i=1}^{I_{k-1}}t(\mathcal{Z}_k^{i}|\mathcal{X}_k^{i},\mathbf{s}_k)f_\mathtt{p}^{i}(\mathcal{X}_k^{i})\\
& =f_\mathtt{p}(\mathbf{s}_k)
    \sum_{\mathcal{Z}_k^\mathrm{U}\uplus \overline{\mathcal{Z}}_k^{I_{k-1}}=\mathcal{Z}_k}
    \sum_{\mathcal{P}_k^\mathrm{U}\uplus \overline{\mathcal{X}}_k^{I_{k-1}}=\mathcal{X}_k}
    \sum_{\mathcal{X}_k^\mathrm{U}\uplus \overline{\mathcal{Y}}_k^{J_{k}}=\mathcal{P}_k^\mathrm{U}}
    \notag\\
& ~~~\times f_\mathtt{p}^\mathrm{U}(\mathcal{X}_k^\mathrm{U}) [1-p_\mathrm{D}(\cdot,\mathbf{s}_k)]^{\mathcal{X}_k^\mathrm{U}}
    \notag\\
    &~~~\times \prod_{j=1}^{J_k}\big[\chi_{\mathcal{Z}_k^\mathrm{U}}(\mathbf{z}_k^{j})\tilde{l}(\mathbf{z}_k^{j}|\mathcal{Y}_k^{j},\mathbf{s}_k)f_\mathtt{p}^\mathrm{U}(\mathcal{Y}_k^{j}) \notag\\
&~~~+(1-\chi_{\mathcal{Z}_k^\mathrm{U}}(\mathbf{z}_k^{j}))\delta_{\emptyset}(\mathcal{Y}_k^{j})\big] \prod_{i=1}^{I_{k-1}}t(\mathcal{Z}_k^{i}|\mathcal{X}_k^{i},\mathbf{s}_k)f_\mathtt{p}^{i}(\mathcal{X}_k^{i}),
\end{align}
    where $\chi_{\mathcal{Z}_k^\mathrm{U}}(\mathbf{z}_k^{j})$ is defined to be 1 if $\mathbf{z}_k^{j} \in \mathcal{Z}_k^\mathrm{U}$, and to be 0 otherwise.
Combining the last two convolution sums into one, we have
\begin{align}
&f_\mathtt{u}(\mathbf{s}_k,\mathcal{X}_k) \notag\\
& \propto f_\mathtt{p}(\mathbf{s}_k)
    \sum_{\mathcal{Z}_k^\mathrm{U}\uplus \overline{\mathcal{Z}}_k^{I_{k-1}}=\mathcal{Z}_k}
    \sum_{\mathcal{X}_k^\mathrm{U}\uplus  
    \overline{\mathcal{Y}}_k^{J_{k}}
    \uplus \overline{\mathcal{X}}_k^{I_{k-1}}=\mathcal{X}_k}
    f_\mathtt{p}^\mathrm{U}(\mathcal{X}_k^\mathrm{U}) \notag\\
& ~~~\times[1-p_\mathrm{D}(\cdot,\mathbf{s}_k)]^{\mathcal{X}_k^\mathrm{U}}
    \prod_{j=1}^{J_k}\big[\chi_{\mathcal{Z}_k^\mathrm{U}}(\mathbf{z}_k^{j})\tilde{l}(\mathbf{z}_k^{j}|\mathcal{Y}_k^{j},\mathbf{s}_k)f_\mathtt{p}^\mathrm{U}(\mathcal{Y}_k^{j})\notag\\
&~~~+(1-\chi_{\mathcal{Z}_k^\mathrm{U}}(\mathbf{z}_k^{j}))\delta_{\emptyset}(\mathcal{Y}_k^{j})\big] \prod_{i=1}^{I_{k-1}}t(\mathcal{Z}_k^{i}|\mathcal{X}_k^{i},\mathbf{s}_k)f_\mathtt{p}^{i}(\mathcal{X}_k^{i}).
\end{align}

We can now add auxiliary variables for the previous targets $\mathcal{X}_k^{1},\dots,\mathcal{X}_k^{I_{k-1}}$
and also for the newly detected targets $\mathcal{Y}_k^{1},\dots,\mathcal{Y}_k^{J_k}$.
The auxiliary variables for the newly detected targets will start
from $I_{k-1}+1$. Then, we have
\begin{align}
\tilde{f}_\mathtt{u}(\mathbf{s}_k,\mathcal{\tilde{X}}_{k}) 
\propto & f_\mathtt{p}(\mathbf{s})
    \sum_{\mathcal{Z}_k^\mathrm{U}\uplus \overline{\mathcal{Z}}_k^{I_{k-1}}=Z_{k}}
    \tilde{f}_\mathtt{p}^\mathrm{U}(\tilde{\mathcal{X}}_k^\mathrm{U})[1-p_\mathrm{D}(\cdot,\mathbf{s}_k)]^{\tilde{\mathcal{X}}_k^\mathrm{U}} \notag\\
& \times
    \prod_{j=1}^{J_k}\big[\chi_{\mathcal{Z}_k^\mathrm{U}}(\mathbf{z}_k^{j})\tilde{l}(\mathbf{z}_k^{j}|\tilde{\mathcal{Y}}_k^{j},\mathbf{s}_k)\tilde{f}_\mathtt{p}^\mathrm{U}(\tilde{\mathcal{Y}}_k^{j})\notag\\
    &+(1-\chi_{\mathcal{Z}_k^\mathrm{U}}(\mathbf{z}_k^{j}))\delta_{\emptyset}(\tilde{\mathcal{Y}}_k^{j})\big]\notag\\ 
    & \times \prod_{i=1}^{I_{k-1}}t(\mathcal{Z}_k^{i}|\tilde{\mathcal{X}}_k^{i},\mathbf{s}_k)\tilde{f}_\mathtt{p}^{i}(\tilde{\mathcal{X}}_k^{i}).
    \label{eq:App_Joint_3}
\end{align}
    By jointly considering the data associations $\mathbf{c}_k$ and $\mathbf{d}_k$ introduced in~\eqref{eq:factorized3}, we can introduce the measurement sets as follows: 
\begin{align}
    \mathcal{Z}_k^i =& 
    \begin{cases}
        \{\mathbf{z}_k^{c_k^i}\} & c_k^i >0\\
        \emptyset & c_k^i =0
    \end{cases},\\
    \mathcal{Z}_k^\mathrm{U} =&
    \{\mathbf{z}_k^j: d_k^j = 0\}.
\end{align}
    With the association function $\psi(\mathbf{c}_{k}^i,\mathbf{d}_{k}^j)$ introduced in~\eqref{eq:factorized3}, we can rewrite \eqref{eq:App_Joint_3} as 
\begin{align}
    \tilde{f}_\mathtt{u}(\mathbf{s}_k,\mathcal{\tilde{X}}_{k}) 
    \propto & f_\mathtt{p}(\mathbf{s})\sum_{\mathbf{c}_k,\mathbf{d}_k} \tilde{f}_\mathtt{p}^\mathrm{U}(\tilde{\mathcal{X}}_k^\mathrm{U})[1-p_\mathrm{D}(\cdot,\mathbf{s}_k)]^{\tilde{\mathcal{X}}_k^\mathrm{U}} \label{eq:App_Joint_4}\\
    & \times
    \prod_{j=1}^{J_k}\big[\chi_{\mathcal{Z}_k^\mathrm{U}}(\mathbf{z}_k^{j})\tilde{l}(\mathbf{z}_k^{j}|\tilde{\mathcal{Y}}_k^{j},\mathbf{s}_k,d_k^j)\tilde{f}_\mathtt{p}^\mathrm{U}(\tilde{\mathcal{Y}}_k^{j})\notag\\
    &+(1-\chi_{\mathcal{Z}_k^\mathrm{U}}(\mathbf{z}_k^{j}))\delta_{\emptyset}(\tilde{\mathcal{Y}}_k^{j})\big]\notag\\ 
    & \times \prod_{i=1}^{I_{k-1}}t(\mathcal{Z}_k^{i}|\tilde{\mathcal{X}}_k^{i},\mathbf{s}_k,c_k^i)\tilde{f}_\mathtt{p}^{i}(\tilde{\mathcal{X}}_k^{i}),\nonumber 
\end{align}
    where $\tilde{l}(\mathbf{z}_k^{j}|\tilde{\mathcal{Y}}_k^{j},\mathbf{s}_k,d_k^j)$ and $t(\mathcal{Z}_k^{i}|\tilde{\mathcal{X}}_k^{i},\mathbf{s}_k,c_k^i)$ were defined in~\eqref{eq:likelihoodPoi} and~\eqref{eq:likelihoodBer}, respectively.
    We now make association variables $\mathbf{c}_k$ and $\mathbf{d}_k$ explicit in the posterior to define the density.
    We can then define the joint update density as
\begin{align}
    \tilde{f}_\mathtt{u}(\mathbf{s}_k,\tilde{\mathcal{X}}_k,\mathbf{c}_k,\mathbf{d}_k)  \propto & f_\mathtt{p}(\mathbf{s}_k)\tilde{f}_\mathtt{p}^\mathrm{U}(\tilde{\mathcal{X}}_k^\mathrm{U})[1-p_\mathrm{D}(\cdot,\mathbf{s}_k)]^{\tilde{\mathcal{X}}_k^\mathrm{U}}\label{eq:Factor_graph_update} \\
    & \times
    \prod_{j=1}^{J_k}\tilde{l}(\mathbf{z}_k^{j}|\tilde{\mathcal{Y}}_k^{j},\mathbf{s}_k,d_k^j)\tilde{f}_\mathtt{p}^\mathrm{U}(\tilde{\mathcal{Y}}_k^{j})\notag\\
    & \times\prod_{i=1}^{I_{k-1}}t(\mathcal{Z}_k^{i}|\tilde{\mathcal{X}}_k^{i},\mathbf{s}_k,c_k^i)\tilde{f}_\mathtt{p}^{i}(\tilde{\mathcal{X}}_k^{i})\psi(c_{k}^{i},d_{k}^{j}),\nonumber 
\end{align}
\bibliographystyle{IEEEtran}
\bibliography{bibliography}

\begin{thebibliography}{10}
\providecommand{\url}[1]{#1}
\csname url@samestyle\endcsname
\providecommand{\newblock}{\relax}
\providecommand{\bibinfo}[2]{#2}
\providecommand{\BIBentrySTDinterwordspacing}{\spaceskip=0pt\relax}
\providecommand{\BIBentryALTinterwordstretchfactor}{4}
\providecommand{\BIBentryALTinterwordspacing}{\spaceskip=\fontdimen2\font plus
\BIBentryALTinterwordstretchfactor\fontdimen3\font minus
  \fontdimen4\font\relax}
\providecommand{\BIBforeignlanguage}[2]{{%
\expandafter\ifx\csname l@#1\endcsname\relax
\typeout{** WARNING: IEEEtran.bst: No hyphenation pattern has been}%
\typeout{** loaded for the language `#1'. Using the pattern for}%
\typeout{** the default language instead.}%
\else
\language=\csname l@#1\endcsname
\fi
#2}}
\providecommand{\BIBdecl}{\relax}
\BIBdecl

\bibitem{Frank_SPA_TIT2001}
F.~R. Kschischang, B.~J. Frey, and H.-A. Loeliger, ``Factor graphs and the
  sum-product algorithm,'' \emph{IEEE Trans. Inf. Theory}, vol.~47, no.~2, pp.
  498--519, Feb. 2001.

\bibitem{Hans_SPA_SPM2004}
H.-A. Loeliger, ``An introduction to factor graphs,'' \emph{IEEE Signal
  Process. Mag.}, vol.~21, no.~1, pp. 28--41, Jan. 2004.

\bibitem{Yedidia2005BetheFree}
J.~S. Yedidia, W.~T. Freeman, and Y.~Weiss, ``Constructing free-energy
  approximations and generalized belief propagation algorithms,'' \emph{IEEE
  Trans. Inf. Theory}, vol.~51, no.~7, pp. 2282--2312, Jul. 2005.

\bibitem{Henk_IRD_2007}
H.~Wymeersch, \emph{{Iterative Receiver Design}}.\hskip 1em plus 0.5em minus
  0.4em\relax Cambridge University Press, 2007.

\bibitem{Ihler_CL_2005}
A.~Ihler, J.~Fisher, R.~Moses, and A.~Willsky, ``Nonparametric belief
  propagation for sensor network self-calibration,'' \emph{IEEE J. Sel. Areas
  Commun.}, vol.~23, no.~4, pp. 809--819, Apr. 2005.

\bibitem{Henk_CL_Proc2009}
H.~Wymeersch, J.~Lien, and M.~Z. Win, ``Cooperative localization in wireless
  networks,'' \emph{Proc. IEEE}, vol.~97, no.~2, pp. 427--450, Feb. 2009.

\bibitem{Callum_BPMapping_IROS2022}
C.~Rhodes, C.~Liu, and W.-H. Chen, ``Scalable probabilistic gas distribution
  mapping using gaussian belief propagation,'' in \emph{2022 IEEE/RSJ Int.
  Conf. Intell. Robots and Syst. (IROS)}, Kyoto, Japan, Oct. 2022, pp.
  9459--9466.

\bibitem{Florian_MTT_TSP2017}
F.~Meyer, P.~Braca, P.~Willett, and F.~Hlawatsch, ``A scalable algorithm for
  tracking an unknown number of targets using multiple sensors,'' \emph{IEEE
  Trans. Signal Process.}, vol.~65, no.~13, pp. 3478--3493, Jul. 2017.

\bibitem{Florian_BP-MTT_Proc2018}
F.~Meyer, T.~Kropfreiter, J.~L. Williams, R.~Lau, F.~Hlawatsch, P.~Braca, and
  M.~Z. Win, ``Message passing algorithms for scalable multitarget tracking,''
  \emph{Proc. IEEE}, vol. 106, no.~2, pp. 221--259, Feb. 2018.

\bibitem{HenkGlobecom2018}
H.~Wymeersch, N.~Garcia, H.~Kim, G.~Seco-Granados, S.~Kim, F.~Wen, and
  M.~Fr{\"o}hle, ``{5G mmWave downlink vehicular positioning},'' in \emph{Proc.
  IEEE Global Commun. Conf. (GLOBECOM)}, Abu Dhabi, UAE, Dec. 2018, pp.
  206--212.

\bibitem{HyowonAsilomar2018}
H.~Kim, H.~Wymeersch, N.~Garcia, G.~Seco-Granados, and S.~Kim, ``{5G mmWave
  vehicular tracking},'' in \emph{Proc. IEEE 52nd Asilomar Conf. Signals,
  Syst., Comput.}, Pacific Grove, CA, USA, Oct. 2018, pp. 541--547.

\bibitem{Erik_BPSLAM_TWC2019}
E.~Leitinger, F.~Meyer, F.~Hlawatsch, K.~Witrisal, F.~Tufvesson, and M.~Z. Win,
  ``A belief propagation algorithm for multipath-based {SLAM},'' \emph{IEEE
  Trans. Wireless Commun.}, vol.~18, no.~12, pp. 5613--5629, Sep. 2019.

\bibitem{Erik_AOABPSLAM_ICC2019}
E.~Leitinger, S.~Grebien, and K.~Witrisal, ``{Multipath-based SLAM exploiting
  AoA and amplitude information},'' in \emph{Proc. IEEE Int. Conf. Commun.
  Workshops (ICC Workshops)}, Shanghai, China, May 2019.

\bibitem{Erik_BPSLAM_2022}
E.~Leitinger, A.~Venus, B.~Teague, and F.~Meyer, ``Data fusion for
  multipath-based slam: Combining information from multiple propagation
  paths,'' \emph{IEEE Trans. Signal Process.}, vol.~71, Sep. 2023.

\bibitem{Florian_NCO_TSIP2015}
F.~Meyer, O.~Hlinka, H.~Wymeersch, E.~Riegler, and F.~Hlawatsch, ``Distributed
  localization and tracking of mobile networks including noncooperative
  objects,'' \emph{IEEE Trans. Signal Inf. Process. Netw.}, vol.~2, no.~1, pp.
  57--71, Mar. 2015.

\bibitem{FastSLAM}
A.~Stentz, D.~Fox, and M.~Montemerlo, ``{FastSLAM: A factored solution to the
  simultaneous localization and mapping problem},'' in \emph{Proc. Nat. Conf.
  Artf. Intell.}, 2002, pp. 593--598.

\bibitem{GraphSLAM}
G.~Grisetti, R.~K{\"u}mmerle, C.~Stachniss, and W.~Burgard, ``{A tutorial on
  graph-based SLAM},'' \emph{IEEE Intell. Transp. Syst. Mag.}, vol.~2, no.~4,
  pp. 31--43, 2010.

\bibitem{Durrant2006SLAM1}
H.~Durrant-Whyte and T.~Bailey, ``Simultaneous localization and mapping: {P}art
  {I},'' \emph{IEEE Robot. Autom. Mag.}, vol.~13, no.~2, pp. 99--110, Jun.
  2006.

\bibitem{Durrant2006SLAM2}
T.~Bailey and H.~Durrant-Whyte, ``Simultaneous localization and mapping
  {(SLAM)}: {P}art {II},'' \emph{IEEE Robot. Autom. Mag.}, vol.~13, no.~3, pp.
  108--117, Sep. 2006.

\bibitem{Randall_EKFSLAM}
R.~C. Smith and P.~Cheeseman, ``On the representation and estimation of spatial
  uncertainty,'' \emph{Int. J. Robot. Res.}, vol.~5, no.~4, pp. 56--68, 1986.

\bibitem{mahler_book_2007}
R.~Mahler, \emph{Statistical Multisource-Multitarget Information Fusion}.\hskip
  1em plus 0.5em minus 0.4em\relax Norwood, MA, USA: Artech House, 2007.

\bibitem{Mahler_book2014}
------, \emph{Advances Statistical Multisource-Multitarget Information
  Fusion}.\hskip 1em plus 0.5em minus 0.4em\relax Artech House, 2014.

\bibitem{mahler_AES_2003_PHD}
------, ``Multitarget {B}ayes filtering via first-order multi target moments,''
  \emph{IEEE Trans. Aerosp. Electron. Syst.}, vol.~39, no.~4, pp. 1152--1178,
  Oct. 2003.

\bibitem{Hyowon_TWC2020}
H.~Kim, K.~Granstr\"{o}m, L.~Gao, G.~Battistelli, S.~Kim, and H.~Wymeersch,
  ``{5G mmWave cooperative positioning and mapping using multi-model PHD},''
  \emph{IEEE Trans. Wireless Commun.}, vol.~19, no.~6, pp. 3782--3795, Mar.
  2020.

\bibitem{Angle_CPHD_TSP2015}
A.~F. Garcia-Fernandez and B.-N. Vo, ``{Derivation of the PHD and CPHD filters
  based on direct Kullback-Leibler divergence minimization},'' \emph{IEEE
  Trans. Signal Process.}, vol.~63, no.~21, pp. 5812--5820, Nov. 2015.

\bibitem{Jason_PMB_TAES2015}
J.~L. Williams, ``{Marginal multi-Bernoulli filters: RFS derivation of MHT,
  JIPDA, and association-based MeMBer},'' \emph{IEEE Trans. Aerosp. Electron.
  Syst.}, vol.~51, no.~3, pp. 1664--1687, Jul. 2015.

\bibitem{Angel_PMBM_TAES2018}
{\'{A}}.~F. Garc{\'{i}}a-Fern{\'{a}}ndez, J.~L. Williams, K.~Granstr{\"{o}}m,
  and L.~Svensson, ``Poisson multi-{Bernoulli} mixture filter: {Direct}
  derivation and implementation,'' \emph{IEEE Trans. Aerosp. Electron. Syst.},
  vol.~54, no.~4, pp. 1883--1901, Aug. 2018.

\bibitem{Hyowon_MPMB_TVT2022}
H.~Kim, K.~Granstrom, L.~Svensson, S.~Kim, and H.~Wymeersch, ``{PMBM-based SLAM
  filters in 5G mmWave vehicular networks},'' \emph{IEEE Trans. Veh. Technol.},
  vol.~71, no.~8, pp. 8646--8661, Aug. 2022.

\bibitem{Markus_PMB_TIV2019}
M.~{Fr{\"o}hle}, C.~{Lindberg}, K.~{Granström}, and H.~{Wymeersch},
  ``Multisensor {P}oisson multi-{B}ernoulli filter for joint target–sensor
  state tracking,'' \emph{IEEE Trans. Int. Veh.}, vol.~4, no.~4, pp. 609--621,
  Dec. 2019.

\bibitem{Diluka_GLMB-SLAM_ICCAIS2018}
D.~Moratuwage, M.~Adams, and F.~Inostroza, ``$\delta$-generalised labelled
  multi-{B}ernoulli simultaneous localisation and mapping,'' in \emph{Int.
  Conf. Control, Autom. Inf. Sci. (ICCAIS)}, Pyeong Chang, South Korea, Oct.
  2018, pp. 175--182.

\bibitem{Diluka_GLMB-SLAM_Sensor2019}
------, ``$\delta$-generalized labeled multi-{B}ernoulli simultaneous
  localization and mapping with an optimal kernel-based particle filtering
  approach,'' \emph{Sensors}, vol.~19, no.~10, May 2019.

\bibitem{Deusch2015}
H.~Deusch, ``{Random finite set-based localization and SLAM for highly
  automated vehicles},'' Ph.D. dissertation, driveU/Institute of Measurement,
  Control and Microtechnology, Universit{\"a}t Ulm, Germany, 2015.

\bibitem{DeuschRD:2015}
H.~{Deusch}, S.~{Reuter}, and K.~{Dietmayer}, ``The labeled multi-{Bernoulli
  SLAM} filter,'' \emph{IEEE Signal Process. Lett.}, vol.~22, no.~10, pp.
  1561--1565, Oct. 2015.

\bibitem{vo2013LMB}
B.-T. Vo and B.-N. Vo, ``Labeled random finite sets and multi-object conjugate
  priors,'' \emph{IEEE Trans. Signal Process.}, vol.~61, no.~13, pp.
  3460--3475, Jul. 2013.

\bibitem{Yu_EK-PMBM_JSAC2021}
Y.~Ge, O.~Kaltiokallio, H.~Kim, F.~Jiang, J.~Talvitie, M.~Valkama, L.~Svensson,
  S.~Kim, and H.~Wymeersch, ``A computationally efficient {EK-PMBM} filter for
  bistatic {mmWave} radio {SLAM},'' \emph{IEEE J. Sel. Areas Commun.}, Jul.
  2022.

\bibitem{Paul_TrackIniPHD_2011}
P.~Horridge and S.~Maskell, ``Using a probabilistic hypothesis density filter
  to confirm tracks in a multi-target environment.'' in \emph{Proc. Workshop on
  Sensor Data Fusion}, Berlin, Germany, Oct. 2011, pp. 1103--1110.

\bibitem{Jason_TrackIniPMB_2012}
J.~L. Williams, ``Hybrid {P}oisson and multi-{B}ernoulli filters,'' in
  \emph{Proc. Int. Conf. Inf. Fusion}, Singapore, Jul. 2012, pp. 1103--1110.

\bibitem{Stefano_Undetected_TAES2014}
S.~Coraluppi and C.~A. Carthel, ``If a tree falls in the woods, it does make a
  sound: multiple-hypothesis tracking with undetected target births,''
  \emph{IEEE Trans. Aerosp. Electron. Syst.}, vol.~50, no.~3, pp. 2379--2388,
  Jul. 2014.

\bibitem{Yuxuan_BPTrajectory_2022}
Y.~Xia, {\'A}.~F. Garc{\'\i}a-Fern{\'a}ndez, F.~Meyer, J.~L. Williams,
  K.~Granstr{\"o}m, and L.~Svensson, ``Trajectory {PMB} filters for extended
  object tracking using belief propagation,'' \emph{IEEE Trans. Aerosp.
  Electron. Syst.}, 2023.

\bibitem{murthy1968algorithm}
K.~G. Murthy, ``An algorithm for ranking all the assignments in order of
  increasing costs,'' \emph{Oper. Res.}, vol.~16, no.~3, pp. 682--687, 1968.

\bibitem{armstrong2013basic}
M.~A. Armstrong, \emph{Basic topology}.\hskip 1em plus 0.5em minus 0.4em\relax
  Springer Science \& Business Media, 2013.

\bibitem{Angel_Trajectory_TSP2020}
{\'A}.~F. Garc{\'\i}a-Fern{\'a}ndez, L.~Svensson, J.~L. Williams, Y.~Xia, and
  K.~Granstr{\"o}m, ``{Trajectory Poisson multi-Bernoulli filters},''
  \emph{IEEE Trans. Signal Process.}, vol.~68, pp. 4933--4945, Aug. 2020.

\bibitem{Angel_MTTT_TAES2020}
{\'A}.~F. Garc{\'\i}a-Fern{\'a}ndez, L.~Svensson, and M.~R. Morelande,
  ``Multiple target tracking based on sets of trajectories,'' \emph{IEEE Trans.
  Aerosp. and Electron. Syst.}, vol.~56, no.~3, pp. 1685--1707, Jun. 2019.

\bibitem{Angel_MBM_Fusion2019}
{\'A}.~F. Garc{\'\i}a-Fern{\'a}ndez, Y.~Xia, K.~Granstr{\"o}m, L.~Svensson, and
  J.~L. Williams, ``{Gaussian implementation of the multi-Bernoulli mixture
  filter},'' in \emph{22th Int. Conf. Inf. Fusion (FUSION)}, Ottawa, Canada,
  Jul. 2019, pp. 1--8.

\bibitem{10130623}
{\'A}.~F. Garc{\'\i}a-Fern{\'a}ndez, Y.~Xia, and L.~Svensson, ``Poisson
  multi-{B}ernoulli mixture filter with general target-generated measurements
  and arbitrary clutter,'' \emph{IEEE Trans. Signal Process.}, vol.~71, pp.
  1895--1906, 2023.

\bibitem{Yaakov_Tracking_Book2002}
Y.~Bar-Shalom, X.~R. Li, and T.~Kirubarajan, \emph{Estimation with Applications
  to Tracking and Navigation: Theory Algorithms and Software}.\hskip 1em plus
  0.5em minus 0.4em\relax Hoboken, NJ, USA: Wiley, 2002.

\bibitem{Panta_GatingPHD_TAES207}
K.~Panta, B.-N. Vo, and S.~Singh, ``Novel data association schemes for the
  probability hypothesis density filter,'' \emph{IEEE Trans. Aerosp. Electron.
  Syst.}, vol.~43, no.~2, pp. 556--570, Apr. 2007.

\bibitem{Simon2013}
S.~S{\"a}rkk{\"a}, \emph{{Bayesian Filtering and Smoothing}}.\hskip 1em plus
  0.5em minus 0.4em\relax {Cambridge University Press}, 2013.

\bibitem{ParticleFilter_TSP2002}
M.~S. Arulampalam, S.~Maskell, N.~Gordon, and T.~Clapp, ``A tutorial on
  particle filters for online nonlinear/non-{G}aussian bayesian tracking,''
  \emph{IEEE Trans. Signal Process.}, vol.~50, no.~2, pp. 174--188, Feb. 2002.

\bibitem{RahmathullahGFS:2017}
A.~S. Rahmathullah, A.~F. Garc\'{i}a~Fern\'{a}ndez, and L.~Svensson,
  ``Generalized optimal sub-pattern assignment metric,'' in \emph{Proc. 20th
  Int. Conf. Inf. Fusion (FUSION)}, Xian, China, Jul. 2017, pp. 1--8.

\bibitem{Hoang_PoissonCSD_TIT2015}
H.~G. Hoang, B.-N. Vo, B.-T. Vo, and R.~Mahler, ``The {C}auchy--{S}chwarz
  divergence for {P}oisson point processes,'' \emph{IEEE Trans. Inf. Theory},
  vol.~61, no.~8, pp. 4475--4485, Aug. 2015.

\end{thebibliography}

\begin{IEEEbiography}
[{\includegraphics[width=1in,height=1.25in,clip,keepaspectratio]{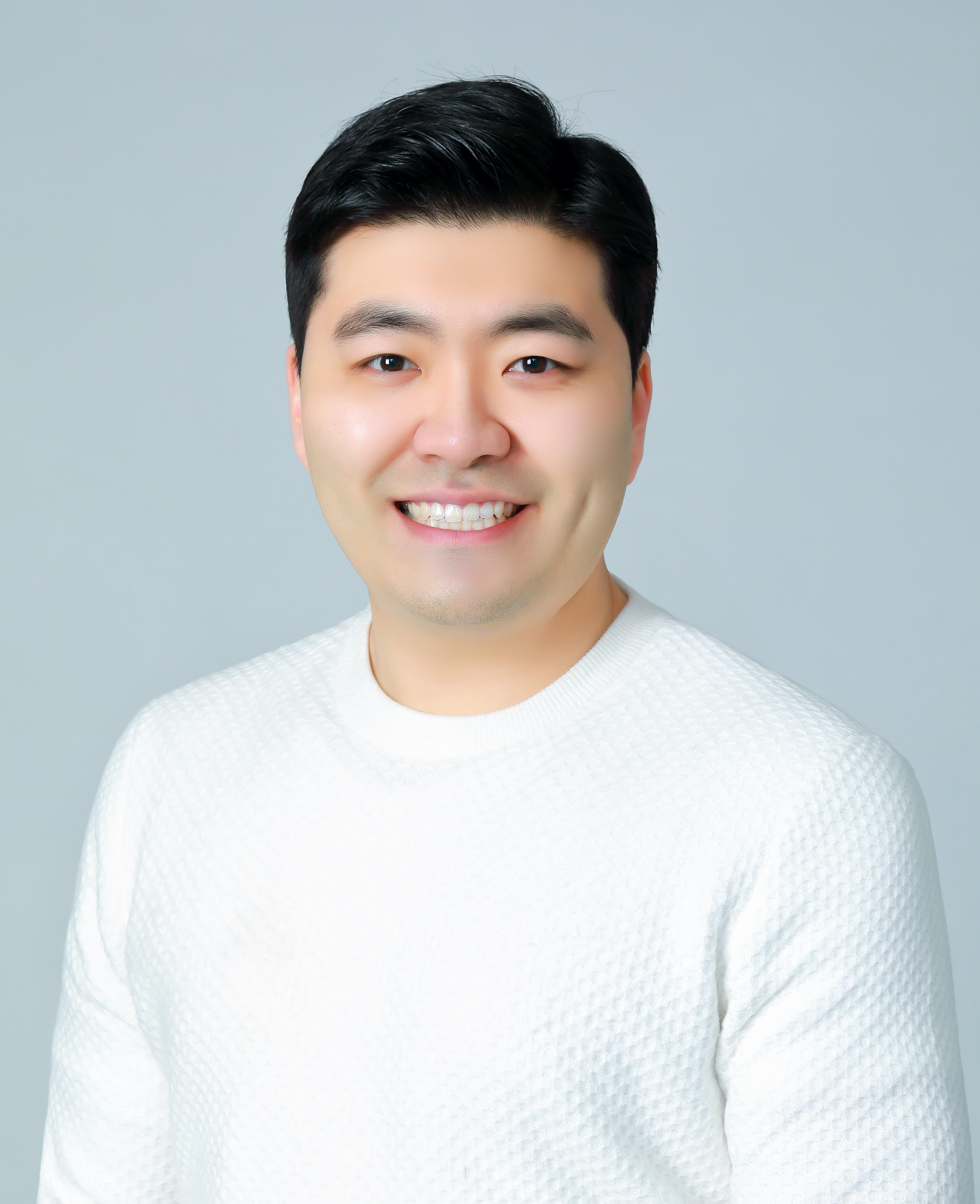}}]{Hyowon Kim}
received the Ph.D. degree from the Department of Electronic Engineering, Hanyang University, Seoul, Korea in 2021. 
He is currently an Assistant Professor in the Department of Electronics Engineering at Chungnam National University, Daejeon, Korea.
He was a Marie Skłodowska-Curie Fellow/Postdoctoral Researcher in the Department of Electrical Engineering at Chalmers University of Technology, Sweden, from 2021 to 2023. He was a Visiting Researcher with the Department of Electrical Engineering, Chalmers University of Technology, Sweden, from 2019 to 2020. His main research interests include wireless communications and integrated sensing/localization/communications, in 5G and Beyond 5G communication systems.
\end{IEEEbiography}


\begin{IEEEbiography}
[{\includegraphics[width=1in,height=1.25in,clip,keepaspectratio]{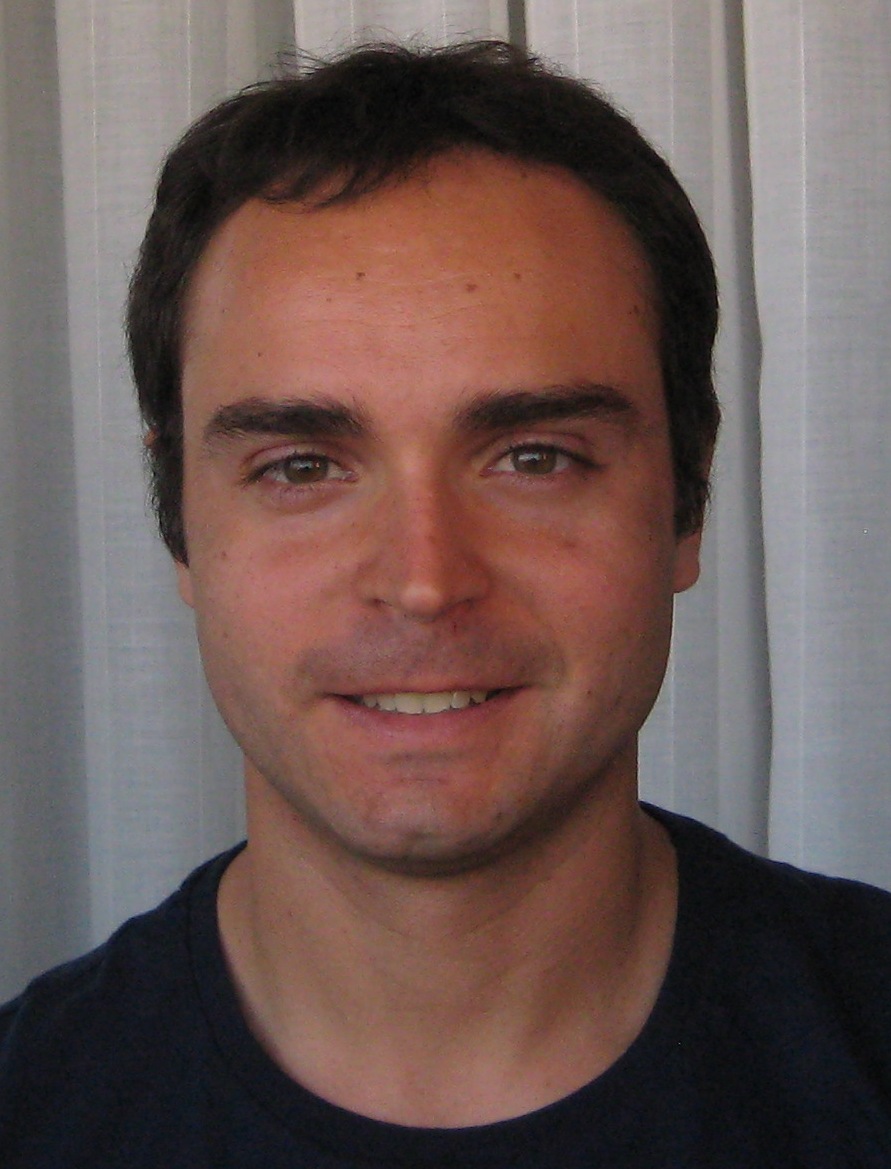}}]{{\'{A}}ngel F. Garc{\'{i}}a-Fern{\'{a}}ndez} received the telecommunication engineering degree and the Ph.D. degree from Universidad Politécnica de Madrid, Madrid, Spain, in 2007 and 2011, respectively. 
He is currently a Senior Lecturer in the Department of Electrical Engineering and Electronics at the University of Liverpool, Liverpool, UK. He previously held postdoctoral positions at Universidad Politécnica de Madrid, Chalmers University of Technology, Gothenburg, Sweden, Curtin University, Perth, Australia, and Aalto University, Espoo, Finland. His main research activities and interests are in the area of Bayesian inference, with emphasis on dynamic systems and multiple target tracking.
\end{IEEEbiography}


\begin{IEEEbiography}
[{\includegraphics[width=1in,height=1.25in,clip,keepaspectratio]{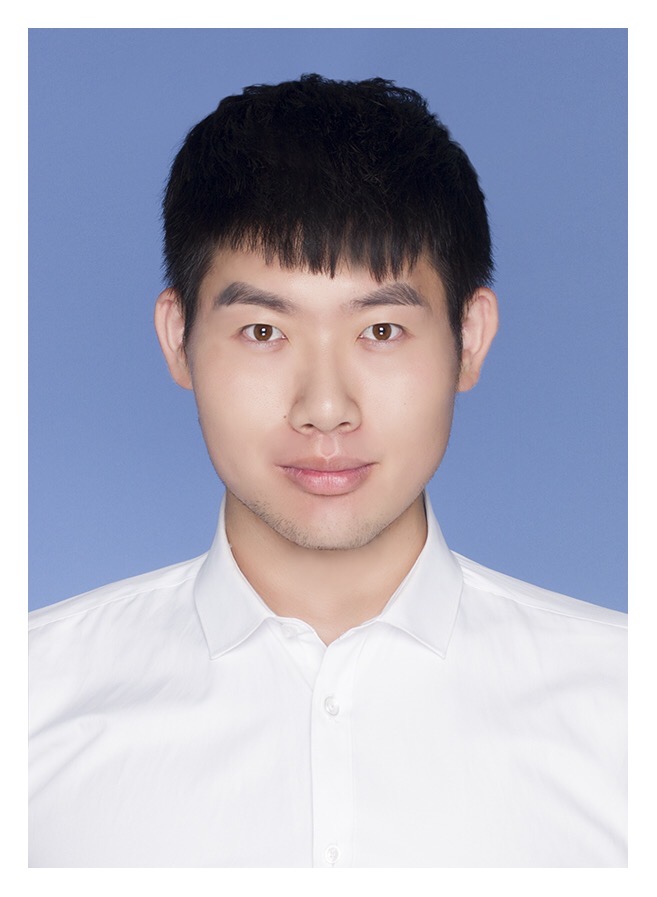}}]{Yu Ge}
(S'20) received his B.E. degree from Zhejiang University, Hangzhou, China, in 2017, and M.Sc. degree from the KTH Royal Institute of Technology, Stockholm, Sweden, in 2019. He is currently a Ph.D. candidate in the Department of Electrical and Engineering at Chalmers University of Technology, Sweden. His research interests include integrated communication and sensing, wireless positioning systems, simultaneous localization and mapping, and multi-object tracking, particularly in 5G and Beyond 5G scenarios.
\end{IEEEbiography}


\begin{IEEEbiography}
[{\includegraphics[width=1in,height=1.25in,clip,keepaspectratio]{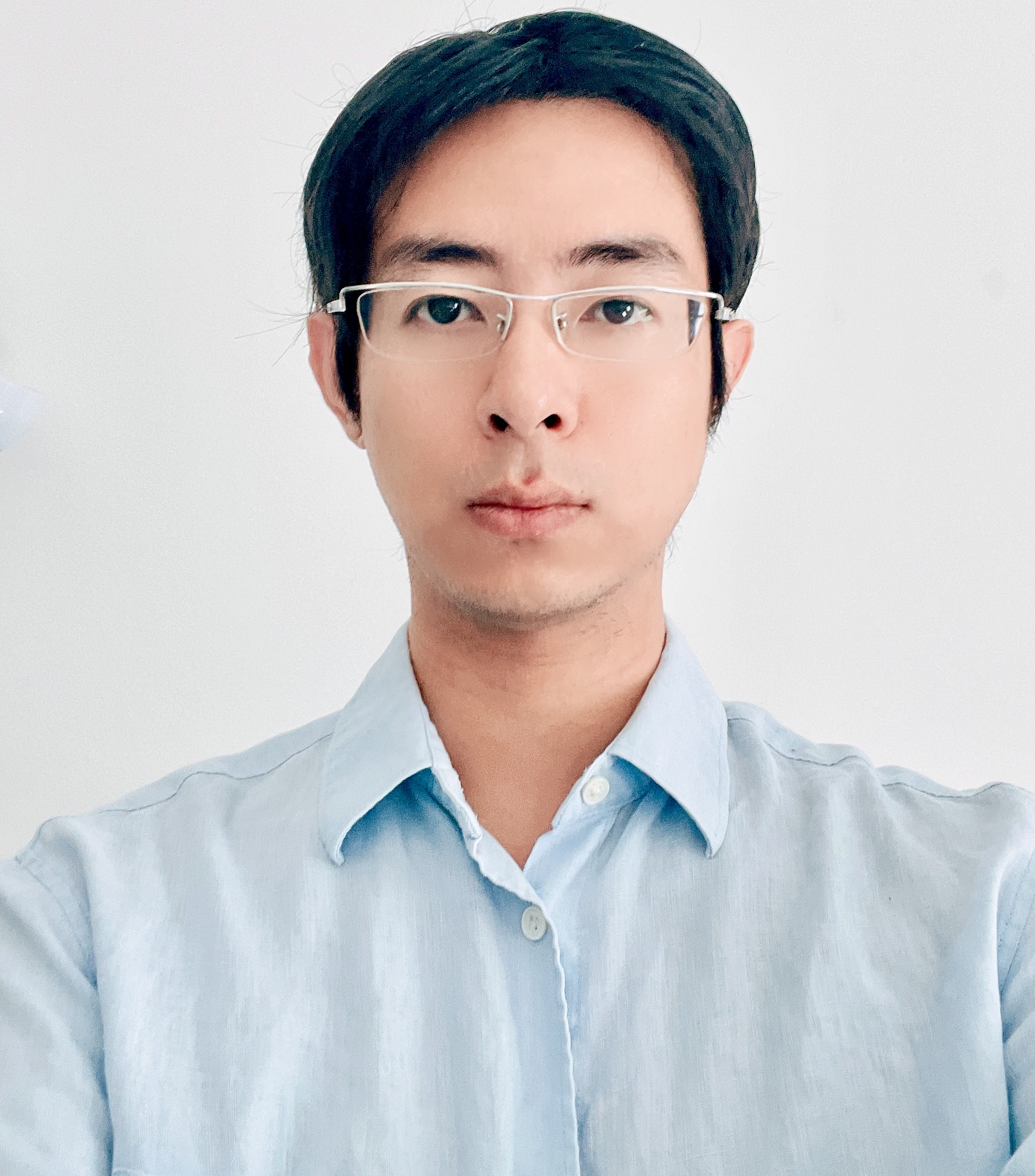}}]{Yuxuan Xia} received his M.Sc. degree in communication engineering and Ph.D. degree in signal processing both from Chalmers University of Technology, Gothenburg, Sweden, in 2017 and 2022, respectively. After obtaining his Ph.D. degree, he stayed at the same research group as a postdoctoral researcher for a year. He is currently an industrial postdoctoral researcher with Zenseact AB, Gothenburg, Sweden and also affiliated with the Division of Automatic Control, Linköping University, Linköping, Sweden. His main research interests include sensor fusion, multi-object tracking and SLAM, especially for automotive applications. 
\end{IEEEbiography}

\begin{IEEEbiography}
[{\includegraphics[width=1in,height=1.25in,clip,keepaspectratio]{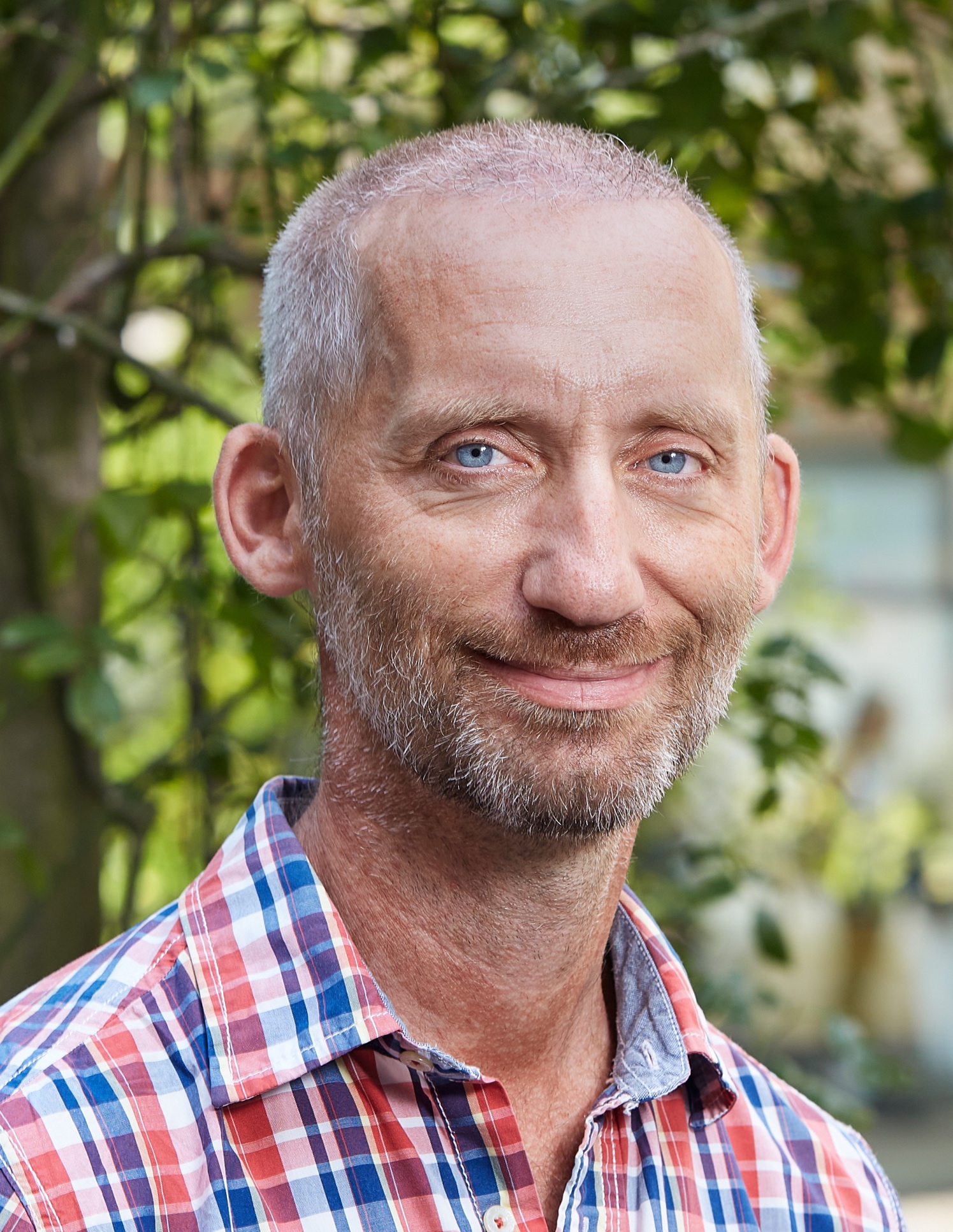}}]{Lennart Svensson} (M'09-SM'10)
is a Professor of Signal Processing with the Chalmers University of Technology. His main research interests include machine learning and Bayesian inference in general, and nonlinear filtering, deep learning, and tracking in particular. He has organized a massive open online course on multiple object tracking, available on edX and YouTube, and received paper awards at the International Conference on Information Fusion in 2009, 2010, 2017, and 2019.
\end{IEEEbiography}

\begin{IEEEbiography}
[{\includegraphics[width=1in,height=1.25in,clip,keepaspectratio]{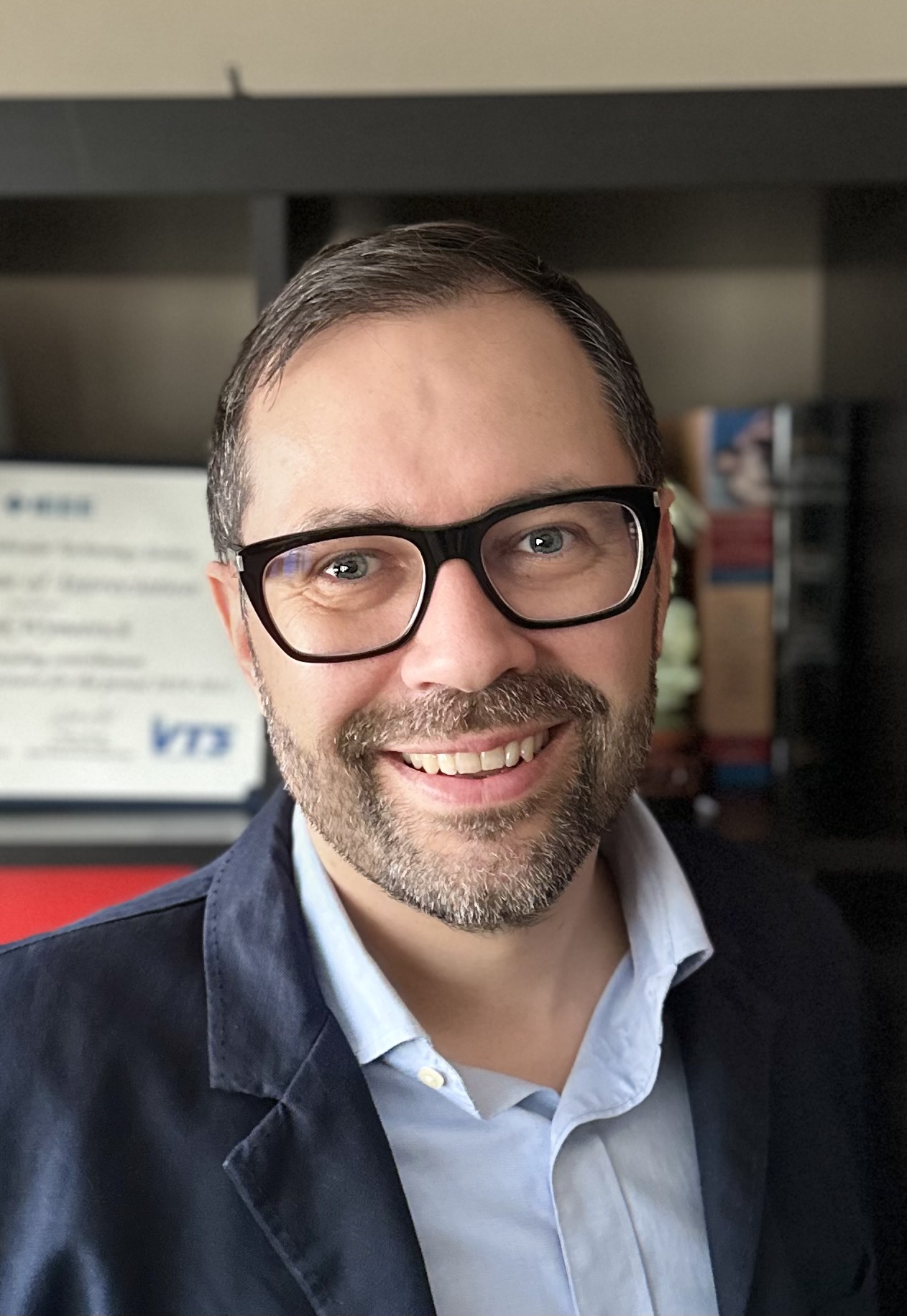}}]{Henk Wymeersch}
(S'01, M'05, SM'19, F'24) obtained the Ph.D. degree in Electrical Engineering/Applied Sciences in 2005 from Ghent University, Belgium. He is currently a Professor of Communication Systems with the Department of Electrical Engineering at Chalmers University of Technology, Sweden. Prior to joining Chalmers, he was a postdoctoral researcher from 2005 until 2009 with the Laboratory for Information and Decision Systems at the Massachusetts Institute of Technology. Prof. Wymeersch served as Associate Editor for IEEE Communication Letters (2009-2013), IEEE Transactions on Wireless Communications (since 2013), and IEEE Transactions on Communications (2016-2018) and is currently Senior Member of the IEEE Signal Processing Magazine Editorial Board.  During 2019-2021, he was an IEEE Distinguished Lecturer with the Vehicular Technology Society.  His current research interests include the convergence of communication and sensing, in a 5G and Beyond 5G context. 
\end{IEEEbiography}

\newpage
\pagestyle{empty}
\beginsupplement
\makeatletter

\noindent \large{\textbf{Set-Type Belief Propagation with Applications to Poisson Multi-Bernoulli SLAM: Supplementary Material}}
\normalsize

\section{Set-Type BP Messages and Beliefs}
\label{SM:Messages}
    The messages and beliefs on the factor graph (Fig.~\ref{Fig:FG}) are computed as follows.

\subsection{Prediction}
    The prediction step is cascaded to the updated step of the previous time step, where
    we have a sensor state vector and $I_{k-1} +1$ sets: 1 for undetected targets that have never been detected; $I_{k-1}$ for detected targets that have been previously detected.
    
\subsubsection{Sensor} 
    We compute the message of the predicted density for the sensor state. The belief of the sensor variable $\mathbf{s}_{k-1}$ is denoted by $\mathtt{b}(\mathbf{s}_{k-1})$, obtained from message \fbox{\footnotesize{27}} at the previous time step.
\begin{enumerate}
\item[\fbox{\footnotesize{1}}]
    A message from the the sensor variable $\mathbf{s}_{k-1}$ to linked factor $A$, i.e., $f^A(\mathbf{s}_{k},\mathbf{s}_{k-1})=f(\mathbf{s}_{k}|\mathbf{s}_{k-1})$, is denoted by $\mathtt{n}_{\mathsf{s}_{k-1} \to A}(\mathbf{s}_{k-1})$, and $\mathtt{n}_{\mathsf{s}_{k-1} \to A}(\mathbf{s}_{k-1})=\mathtt{b}(\mathbf{s}_{k-1})$.
\item[\fbox{\footnotesize{2}}]
    A message from factor $A$
    to the variable $\mathbf{s}_{k}$ is denoted by $\mathtt{m}_{A \to \mathsf{s}_k}(\mathbf{s}_{k})$, given by
\begin{align}
    \mathtt{m}_{A \to \mathsf{s}_k}(\mathbf{s}_{k}) = \int \mathtt{n}_{\mathsf{s}_{k-1} \to A}(\mathbf{s}_{k-1}) f(\mathbf{s}_{k}|\mathbf{s}_{k-1}) \mathrm{d}\mathbf{s}_{k-1}.
\end{align}
\end{enumerate}

\subsubsection{Undetected Targets}
    The undetected targets will either remain undetected again or be detected for the first time.
    Thus, we compute the messages of the predicted densities for the undetected targets and newly detected targets.
    A belief of undetected target set ${\mathcal{X}}_{k-1}^\mathrm{U}$ is denoted by $\mathtt{b}({\mathcal{X}}_{k-1}^\mathrm{U})$, obtained from message \fbox{\footnotesize{22}} at the previous time step. 
    
\begin{enumerate}
\item[\fbox{\footnotesize{3}}]
    A message from the set-variable ${\mathcal{X}}_{k-1}^\mathrm{U}$ to the linked factor $B$, i.e., $f^B({\mathcal{X}}_{k}^\mathrm{S},{\mathcal{X}}_{k-1}^\mathrm{U}) = f({\mathcal{X}}_{k}^\mathrm{S}|{\mathcal{X}}_{k-1}^\mathrm{U})$, is denoted by $\mathtt{n}_{\mathsf{X}_{k-1}^\mathrm{U} \to B}({\mathcal{X}}_{k-1}^\mathrm{U})=\mathtt{b}({\mathcal{X}}_{k-1}^\mathrm{U})$ that follows the \ac{ppp} distribution.
\item[\fbox{\footnotesize{4}}]    
    A message from the factor $B$ to ${\mathcal{X}}_{k}^\mathrm{S}$ is denoted by $\mathtt{m}_{B \to{\mathsf{X}}_k^\mathrm{S} }({\mathcal{X}}_k^\mathrm{S})$, given by
\begin{align}
    \mathtt{m}_{B \to{\mathsf{X}}_k^\mathrm{S} }
    &({\mathcal{X}}_k^\mathrm{S})
    = \int \mathtt{n}_{\mathsf{X}_{k-1}^\mathrm{U} \to B}({\mathcal{X}}_{k-1}^\mathrm{U}) 
    f({\mathcal{X}}_{k}^\mathrm{S}|{\mathcal{X}}_{k-1}^\mathrm{U})  \delta {\mathcal{X}}_{k-1}^\mathrm{U},
\end{align}
    which follows the \ac{ppp} distribution.
\item[\fbox{\footnotesize{5}}]
    Messages from ${\mathcal{X}}_k^\mathrm{S}$
    and ${\mathcal{X}}_{k}^\mathrm{B}$ to 
    the linked factor $D$, 
    i.e., $f^D({\mathcal{X}}_{k}^\mathrm{S},{\mathcal{X}}_{k}^\mathrm{B},{\mathcal{P}}_{k}^\mathrm{U}) =
    \delta_{{\mathcal{X}}_{k}^\mathrm{S} \uplus {\mathcal{X}}_{k}^\mathrm{B}}({\mathcal{P}}_{k}^\mathrm{U})$,
    are denoted by $\mathtt{n}_{\mathsf{X}_{k}^\mathrm{S} \to D}({\mathcal{X}}_{k}^\mathrm{S})$, respectively, and $\mathtt{n}_{\mathsf{X}_{k}^\mathrm{B} \to D}({\mathcal{X}}_{k}^\mathrm{B})$, where $\mathtt{n}_{\mathsf{X}_{k}^\mathrm{B} \to D}({\mathcal{X}}_{k}^\mathrm{B})=f^C({\mathcal{X}}_{k}^\mathrm{B})
    =f^\mathrm{Poi}({\mathcal{X}}_{k}^\mathrm{B})$ 
    follows the \ac{ppp} distribution. 
\item[\fbox{\footnotesize{6}}]
    A message from the 
    factor $D$
    to ${\mathcal{P}}_{k}^\mathrm{U}$ is denoted by $\mathtt{m}_{D \to{\mathsf{P}}_k^\mathrm{U} }({\mathcal{P}}_k^\mathrm{U})$, given by
\begin{align}
    \mathtt{m}_{D \to{\mathsf{P}}_k^\mathrm{U} }
    ({\mathcal{P}}_k^\mathrm{U})
    =& \iint \mathtt{n}_{\mathsf{P}_{k}^\mathrm{S} \to D}({\mathcal{X}}_{k}^\mathrm{S}) 
    \mathtt{n}_{\mathsf{X}_{k}^\mathrm{B} \to D}({\mathcal{X}}_{k}^\mathrm{B})\notag \\
    &\times
    \delta_{{\mathcal{X}}_{k}^\mathrm{S} \uplus {\mathcal{X}}_{k}^\mathrm{B}}({\mathcal{P}}_{k}^\mathrm{U}) 
    \delta {\mathcal{X}}_{k}^\mathrm{S}
    \delta {\mathcal{X}}_{k}^\mathrm{B},
\end{align}
    which follows the \ac{ppp} distribution.
\end{enumerate}

\subsubsection{Previously Detected Targets} 
    We compute the messages of the predicted densities for the previously detected targets. Beliefs of previously detected target set-variables $\mathcal{X}_{k-1}^i$ are denoted by $\mathtt{b}(\mathcal{X}_{k-1}^i)$, for $i=\{1,\dots,I_{k-1}\}$, obtained from message \fbox{\footnotesize{20}} at the previous time step.
\begin{enumerate}
\item[\fbox{\footnotesize{7}}]
    Messages from the set-variable $\mathcal{X}_{k-1}^i$ to the the linked factors $E^i$, i.e., $f_i^E(\mathcal{X}_{k}^i,\mathcal{X}_{k-1}^i) =f(\mathcal{X}_{k}^i|\mathcal{X}_{k-1}^i)$, are denoted by $\mathtt{n}_{\mathsf{X}_{k-1}^i \to E^i}(\mathcal{X}_{k-1}^i)$, and $\mathtt{n}_{\mathsf{X}_{k-1}^i \to E^i}(\mathcal{X}_{k-1}^i)=\mathtt{b}(\mathcal{X}_{k-1}^i)$ for $i \in \mathcal{I}_{k-1}$.
\item[\fbox{\footnotesize{8}}]
    Messages from factors $E^i$
    to the linked detected set-variables $\mathcal{X}_{k}^i$ are denoted by $\mathtt{m}_{E^i \to \mathsf{X}_{k}^i}(\mathcal{X}_{k}^i)$, given by
\begin{align}
    \mathtt{m}_{E^i \to \mathsf{X}_{k}^i}(\mathcal{X}_{k}^i) = \int \mathtt{n}_{\mathsf{X}_{k-1}^i \to E^i}(\mathcal{X}_{k-1}^i) f(\mathcal{X}_{k}^i|\mathcal{X}_{k-1}^i) \delta \mathcal{X}_{k-1}^i
\end{align}
\end{enumerate}

\subsection{Update}
    The update step is cascaded to the prediction step from which the messages \fbox{\footnotesize{2}}, \fbox{\footnotesize{6}}, and \fbox{\footnotesize{8}} are obtained.
\subsubsection{Separation of Undetected Targets}
    The set of undetected targets is partitioned into $1$ set of targets that remains undetected and $J_k$ sets of newly detected targets.

\begin{enumerate}
\item[\fbox{\footnotesize{9}}]
    A message from the set-variable ${\mathcal{P}}_k^\mathrm{U}$ to factor $F$, i.e., $f^F({\mathcal{P}}_{k}^\mathrm{U},{\mathcal{X}}_{k}^\mathrm{U},  {\mathcal{P}}_k^1, \dots, {\mathcal{P}}_k^{J_k})=\delta_{ {\mathcal{X}}_{k}^\mathrm{U} \uplus {\mathcal{P}}_k }({\mathcal{P}}_k^\mathrm{U})$, is denoted by $\mathtt{n}_{{\mathsf{P}}_k^\mathrm{U} \to F}({\mathcal{P}}_k^\mathrm{U})$, where 
    $\mathtt{n}_{{\mathsf{P}}_k^\mathrm{U} \to F}({\mathcal{P}}_k^\mathrm{U})= \mathtt{m}_{D \to {\mathsf{P}}_k^\mathrm{U}}({\mathcal{P}}_k^\mathrm{U})$.
    Messages from the sets $\mathcal{X}_k^\mathrm{U}$, ${\mathcal{P}}_k^1,\dots,{\mathcal{P}}_k^{J_k}$ to factor $F$
    are $\mathtt{n}_{\mathsf{X}_k^\mathrm{U} \to F}(\mathcal{X}_k^\mathrm{U})=1, \mathtt{n}_{{\mathsf{P}}_k^j \to F}({\mathcal{P}}_k^j)=1$, for $j\in \{1,\dots,J_k\}$.

\item[\fbox{\footnotesize{10}}]
    A message from factor $F$ to $\mathcal{X}_k^\mathrm{U}$ is denoted by $\mathtt{m}_{F \to \mathsf{X}_k^\mathrm{U}}(\mathcal{X}_k^\mathrm{U})$. 
    With the Proposition~\ref{prop:Poisson}, $\mathtt{m}_{F \to \mathsf{X}_k^\mathrm{U}}(\mathcal{X}_k^\mathrm{U})$ is given by
    
\begin{align}
    \mathtt{m}_{F \to \mathsf{X}_k^\mathrm{U}}(\mathcal{X}_k^\mathrm{U}) & = 
    \int \mathtt{n}_{{\mathsf{P}}_k^\mathrm{U} \to F}({\mathcal{P}}_k^\mathrm{U})
    \delta_{ {\mathcal{X}}_{k}^\mathrm{U} \uplus {\mathcal{P}}_k }({\mathcal{P}}_k^\mathrm{U})
    \delta \mathcal{X}_k^{\sim \mathrm{U}}\\
    & \propto f^\mathrm{Poi}(\mathcal{X}_k^\mathrm{U}),
\end{align}
    proportional to the \ac{ppp} distribution.
    In similarly, messages from factor $F$ to  ${\mathcal{P}}_k^j$, for $j \in \{1, \dots, J_k\}$, are denoted by $\mathtt{m}_{F \to {\mathsf{P}}_k^j}({\mathcal{P}}_k^j)$, and with Proposition~\ref{prop:Poisson}, $\mathtt{m}_{F \to {\mathsf{P}}_k^j}({\mathcal{P}}_k^j)$ for $j \in \{1, \dots, J_k\}$ are given by
\begin{align}
    \mathtt{m}_{F \to {\mathsf{P}}_k^j}({\mathcal{P}}_k^j) 
    & = \int \mathtt{n}_{{\mathsf{P}}_k^\mathrm{U} \to F }({\mathcal{P}}_k^\mathrm{U}) 
    \delta_{ {\mathcal{X}}_{k}^\mathrm{U} \uplus {\mathcal{P}}_k }({\mathcal{P}}_k^\mathrm{U})
    \delta {\mathcal{P}}_k^{\sim j},\\
    & \propto f^\mathrm{Poi}({\mathcal{P}}_k^{j})
\end{align}
    also proportional to the \ac{ppp} distribution.
\item[\fbox{\footnotesize{11}}]
    Messages from the set-variable ${\mathcal{P}}_k^j$ to linked factor $G^j$, i.e., $f_j^G({\mathcal{P}}_k^j, {\mathcal{Y}}_k^{j}) =
    \delta_{\mathtt{h}_{-(I_{k-1}+j)}({\mathcal{Y}}_k^j)}({\mathcal{P}}_k^j)$, are denoted by $\mathtt{n}_{{\mathsf{P}}_k^j \to G^j}({\mathcal{P}}_k^j) = \mathtt{m}_{F \to {\mathsf{P}}_k^j}({\mathcal{P}}_k^j)$, for $j=\{1,\dots,J_k\}$.
\item[\fbox{\footnotesize{12}}]
    Messages from factor $G^j$
    to the linked set-variable ${\mathcal{Y}}_k^{j}$ are denoted by $\mathtt{m}_{G^j \to \mathsf{Y}_k^{j}}({\mathcal{Y}}_k^j)$.
    With the Definition~\ref{def:ConvertFactor}, the messages $\mathtt{m}_{G^j \to \mathsf{Y}_k^{j}}({\mathcal{Y}}_k^j)$ are given by
\begin{align}
    \mathtt{m}_{G^j \to \mathsf{Y}_k^{j}}({\mathcal{Y}}_k^j) 
    &= 
    \int  \mathtt{n}_{F \to {\mathsf{P}}_k^j}({\mathcal{P}}_k^j) \delta_{\mathtt{h}_{-(I_{k-1}+j)}({\mathcal{Y}}_k^j)}({\mathcal{P}}_k^j),
\end{align}
    which follows the \ac{ppp} distribution with auxiliary variables $u=I_{k-1}+j$.
\end{enumerate}

\subsubsection{Data Association}
    We compute the messages of marginal probabilities for $c_k^i$ and $d_k^j$ by running loopy \ac{bp}~\cite{Jason_PMB_TAES2015} on the factor graph with cycle. Initial association probabilities are determined by the predicted messages and their linked likelihood factors.
\begin{enumerate}
\item[\fbox{\footnotesize{13}}]
    Messages from the sensor state variable $\mathbf{s}_k$ to linked factors $I^j$, i.e., $f_j^I(\mathbf{s}_k,\mathcal{Y}_k^j,d_k^j) = \tilde{l}(\mathbf{z}_k^j|\mathbf{s}_k,\mathcal{Y}_k^j,d_k^j)$ for $j \in \mathcal{J}_{k}$, and $J^i$, i.e., $f_i^J(\mathbf{s}_k,\mathcal{X}_k^i,c_k^i) = t(\mathcal{Z}_k^i|\mathbf{s}_k,\mathcal{X}_k^i,c_k^i)$ for $i
    \in \mathcal{I}_{k-1}$, are respectively denoted by $\mathtt{n}_{\mathsf{s}_{k} \to I^j}(\mathbf{s}_{k})$, and $\mathtt{n}_{\mathsf{s}_{k} \to J^i}(\mathbf{s}_{k})$, given by $\mathtt{n}_{\mathsf{s}_{k} \to I^j}(\mathbf{s}_{k}) = \mathtt{n}_{\mathsf{s}_{k} \to J^i}(\mathbf{s}_{k}) = \mathtt{m}_{A \to \mathsf{s}_{k}}(\mathbf{s}_{k})$.
    Messages from the set-variables $\mathcal{Y}_k^j$ to factor $I^j$ 
    are denoted by $\mathtt{n}_{\mathsf{Y}_k^j \to I^j}(\mathcal{Y}_k^j)$, and $\mathtt{n}_{\mathsf{Y}_k^j \to I^j}(\mathcal{Y}_k^j)=\mathtt{m}_{G^j \to \mathsf{Y}_k^j}(\mathcal{Y}_k^j)$, for $j \in \{1,\dots,J_k\}$.
    Messages from the set-variable $\mathcal{X}_k^i$ to linked factors $J^i$
    are denoted by $\mathtt{n}_{\mathsf{X}_k^i \to J^i}(\mathcal{X}_k^i)$, and $\mathtt{n}_{\mathsf{X}_k^i \to J^i}(\mathcal{X}_k^i)=\mathtt{m}_{E^i \to \mathsf{X}_k^i}(\mathcal{X}_k^i)$, for $i
    \in \mathcal{I}_{k-1}$.

\item[\fbox{\footnotesize{14}}]
    Messages from the linked factors $I^j$
    to the the linked variables $d_{k}^j$ are denoted by $\mathtt{m}_{I^j \to \mathsf{d}_{k}^j}(d_{k}^j)$.
    The messages $\mathtt{m}_{I^j \to \mathsf{d}_{k}^j}(d_{k}^j)$, for $j
    \in \mathcal{J}_{k}$, are given by
\begin{align}
    &\mathtt{m}_{I^j \to \mathsf{d}_{k}^j}(d_{k}^j) \notag
    \\
    &= \iint \mathtt{n}_{\mathsf{s}_{k} \to F^j}(\mathbf{s}_{k}) \mathtt{n}_{\mathsf{Y}_k^j \to I^j}(\mathcal{Y}_k^j)
    \tilde{l}(\mathbf{z}_k^j|\mathbf{s}_k,\mathcal{Y}_k^j,d_k^j) \mathrm{d}\mathbf{s}_{k} \delta \mathcal{Y}_{k}^j.
\end{align}
    Messages from the linked factors $J^i$
    to the linked variables $c_{k}^i$ are denoted by 
    $\mathtt{m}_{J^i \to \mathsf{c}_{k}^i}(c_{k}^i)$. The messages $\mathtt{m}_{J^i \to \mathsf{c}_{k}^i}(c_{k}^i)$, for $i
    \in \mathcal{I}_{k-1}$,
    are given by
\begin{align}
    &\mathtt{m}_{J^i \to \mathsf{c}_{k}^i}(c_{k}^i) \notag \\
    &= \iint \mathtt{n}_{\mathsf{s}_{k} \to J^i}(\mathbf{s}_{k}) \mathtt{n}_{\mathsf{X}_k^i \to J^i}(\mathcal{X}_k^i) t(\mathcal{Z}_k^i|\mathbf{s}_k,\mathcal{X}_k^i,c_k^i) \mathrm{d}\mathbf{s}_{k} \delta \mathcal{X}_{k}^i.
\end{align}
\item[\fbox{\footnotesize{15}}]
    During $L$ iteration, loopy \ac{bp}~\cite{Jason_PMB_TAES2015} between target-oriented data association variables $c_{k}^i$ and measurement-oriented data association variables $d_{k}^j$ with the factors $K^{i,j}$, i.e., $f_{i,j}^K(c_{k}^i,d_{k}^j) = \psi(c_{k}^i,d_{k}^j)$, is performed. 
    Messages from the 
    variables $c_{k}^i$ to $d_{k}^j$ and from the variables $d_{k}^j$ to $c_{k}^i$ at the $l$-th iteration are respectively denoted by $\mathtt{m}_{\mathsf{d}_{k}^j \to \mathsf{c}_{k}^i}^{(l)}(c_{k}^i)$ and $\mathtt{m}_{\mathsf{c}_{k}^i \to \mathsf{d}_{k}^j}^{(l)}(d_{k}^j)$, given by
\begin{align}
    &\mathtt{m}_{\mathsf{d}_{k}^j \to \mathsf{c}_{k}^i}^{(l)}(c_{k}^i) \notag \\
    &= \sum_{j=0}^{J_k} \mathtt{m}_{J^i \to \mathsf{c}_{k}^i}(c_{k}^i) \psi(c_k^i,d_k^j)\prod_{i'\in \mathcal{I}_{k-1}\setminus\{i\}} \mathtt{m}_{\mathsf{c}_{k}^{i'} \to \mathsf{d}_{k}^j}^{(l-1)}(d_{k}^j)\\
    &\mathtt{m}_{\mathsf{c}_{k}^i \to \mathsf{d}_{k}^j}^{(l)}(d_{k}^j) \notag \\
    &= \sum_{i=0}^{I_{k-1}} \mathtt{m}_{I^j \to \mathsf{d}_{k}^j}(d_{k}^j) \psi(c_k^i,d_k^j)\prod_{j'\in \mathcal{J}_k\setminus\{j\} }\mathtt{m}_{\mathsf{d}_{k}^{j'} \to \mathsf{c}_{k}^i}^{(l-1)}(c_{k}^i).
\end{align}
    Their initial messages are given by
    $\mathtt{m}_{\mathsf{d}_{k}^j \to \mathsf{c}_{k}^i}^{(0)}(c_{k}^i) = \sum_{j=0}^{J_k} \mathtt{m}_{J^i \to \mathsf{c}_{k}^i}(c_{k}^i) \psi(c_k^i,d_k^j)$ and $\mathtt{m}_{\mathsf{c}_{k}^i \to \mathsf{d}_{k}^j}^{(0)}(d_{k}^j) = \sum_{i=0}^{I_{k-1}} \mathtt{m}_{I^j \to \mathsf{d}_{k}^j}(d_{k}^j) \psi(c_k^i,d_k^j)$.
\item[\fbox{\footnotesize{16}}]
    Messages from the variables $c_{k}^i$ to factor $J^i$
    are denoted by $\mathtt{n}_{c_{k}^i \to J^i}(c_{k}^i)$.
    The messages $\mathtt{n}_{c_{k}^i \to J^i}(c_{k}^i)$, for $i
    \in \mathcal{I}_{k-1}$, are given by
    
\begin{align}
    \mathtt{n}_{c_{k}^i \to J^i}(c_{k}^i) = 
    \prod_{j \in \mathcal{J}_{k}}
    \mathtt{m}_{\mathsf{d}_{k}^j \to \mathsf{c}_{k}^i}^{(L)}(c_{k}^i).
\end{align}
    Messages from the variables $d_{k}^j$ to linked factors $I^j$ are denoted by $\mathtt{n}_{\mathsf{d}_{k}^j \to I^j}(d_{k}^j)$.
    The messages $\mathtt{n}_{\mathsf{d}_{k}^j \to I^j}(d_{k}^j)$, for $j
    \in \mathcal{J}_{k}$, are given by
\begin{align}
    \mathtt{n}_{\mathsf{d}_{k}^j \to I^j}(d_{k}^j) = 
    \prod_{i \in \mathcal{I}_{k-1}}
    \mathtt{m}_{\mathsf{c}_{k}^i \to \mathsf{d}_{k}^j }^{(L)}(d_{k}^j).
\end{align}
\end{enumerate}

\subsubsection{Belief Computation}
    We compute beliefs of previously detected targets, newly detected targets, undetected targets, and sensor state. The total number of beliefs is 2+$I_{k-1}+J_k$: 1 for the sensor state vector; 1 for undetected targets; and $I_{k-1}+J_k$ for detected targets.

    \paragraph{Newly Detected Targets}
    We compute $J_{k}$ beliefs of the set-variables $\mathcal{X}_k^j$ for $j \in \mathcal{J}_k$. Each set-variable $\mathcal{X}_k^j$ represents a target, newly detected for the first time or clutter, obtained as follows.
\begin{enumerate}
\item[\fbox{\footnotesize{17}}]
    Messages from factors $I^j$ 
    to the linked set-variables $\mathcal{Y}_k^j$ are denoted by $\mathtt{m}_{I^j \to \mathsf{Y}_k^j}(\mathcal{Y}_k^j)$.
    The messages $\mathtt{m}_{I^j \to \mathsf{Y}_k^j}(\mathcal{Y}_k^j)$, for $j \in \mathcal{J}_{k}$, are
    given by
\begin{align}
    \mathtt{m}_{I^j \to \mathsf{Y}_k^j}(\mathcal{Y}_k^j) 
    = & \int 
    \sum_{i=0}^{I_{k-1}} \mathtt{n}_{\mathsf{d}_{k}^j \to I^j}(d_{k}^j=i)
    \mathtt{n}_{\mathsf{s}_{k} \to I^j}(\mathbf{s}_{k}) \notag \\
    & \times 
    \tilde{l}(\mathbf{z}_k^j|\mathbf{s}_k,\mathcal{Y}_k^j,d_k^j=i) \mathrm{d}\mathbf{s}_{k}.
\end{align}
\item[\fbox{\footnotesize{18}}]
    Beliefs of the set-variables $\mathcal{Y}_k^j$ are denoted by $\mathtt{b}(\mathcal{Y}_k^j)$, for $j \in \mathcal{J}_{k}$.
    The beliefs $\mathtt{b}(\mathcal{Y}_k^j)$ are obtained by
\begin{align}
    \mathtt{b}(\mathcal{Y}_k^j) \propto \mathtt{m}_{G^j \to \mathcal{Y}_k^j}(\mathcal{Y}_k^j)\mathtt{m}_{I^j \to \mathcal{Y}_k^j}(\mathcal{Y}_k^j),
\end{align}
    which follow the Bernoulli distribution. 
\end{enumerate}

\paragraph{Previously Detected targets}
    We compute $I_{k-1}$ beliefs of the set-variables $\mathcal{X}_k^i$ for $i \in \mathcal{I}_{k-1}$. Each set-variable $\mathcal{X}_k^i$ represents the target that had been previously detected, obtained as follows. 
\begin{enumerate}
\item[\fbox{\footnotesize{19}}]
    Messages from factors $J^i$ to the set-variables 
    $\mathcal{X}_k^i$ are denoted by $\mathtt{m}_{J^i \to \mathsf{X}_k^i}(\mathcal{X}_k^i)$.
    The messages $\mathtt{m}_{J^i \to \mathsf{X}_k^i}(\mathcal{X}_k^i)$, for $i\in \mathcal{I}_{k-1}$, are given by
\begin{align}
     \mathtt{m}_{J^i \to \mathsf{X}_k^i}(\mathcal{X}_k^i) = & \int \sum_{j=0}^{J_k} \mathtt{n}_{\mathsf{c}_{k}^i \to J^i}(c_{k}^i=j) \mathtt{n}_{\mathsf{s}_{k} \to J^i}(\mathbf{s}_{k}) \notag \\
     &\times t(\mathcal{Z}_k^i|\mathbf{s}_k,\mathcal{X}_k^i,c_k^i=j)
    \mathrm{d}\mathbf{s}_{k}.
\end{align}
\item[\fbox{\footnotesize{20}}]
    Beliefs of the set-variables are denoted by $\mathcal{X}_k^i$, for $i\in \mathcal{I}_{k-1}$.
    The beliefs $\mathtt{b}(\mathcal{X}_k^i)$ are obtained by
\begin{align}
    \mathtt{b}(\mathcal{X}_k^i) \propto \mathtt{m}_{E^i \to \mathsf{X}_k^i}(\mathcal{X}_k^i)\mathtt{m}_{J^i \to \mathsf{X}_k^i}(\mathcal{X}_k^i),
\end{align}
    which follows the Bernoulli distribution.
\end{enumerate}

\paragraph{Undetected Targets}
    We compute 1 belief of the set-variable $\mathcal{X}_k^\mathrm{U}$ representing the targets that have never been detected and thus remain undetected again, obtained as follows.
\begin{enumerate}
\item[\fbox{\footnotesize{21}}]
    A message from the sensor state $\mathbf{s}_k$ to factor $H$, i.e., $f^H(\mathbf{s}_k,\mathcal{X}_k^\mathrm{U})=[1-\mathsf{p}_\mathrm{D}(\mathbf{s}_k,\mathcal{X}_k^\mathrm{U})]^{\mathcal{X}_k^\mathrm{U}}$, is denoted by $\mathtt{n}_{\mathsf{s}_k \to I}(\mathbf{s}_k)$, given by $\mathtt{n}_{\mathsf{s}_k \to H}(\mathbf{s}_k)=\mathtt{n}_{A \to \mathsf{s}_k}(\mathbf{s}_k)$.
    A message from factor $H$ 
    to the linked set-variable $\mathcal{X}_k^\mathrm{U}$ is denoted by $\mathtt{m}_{H \to \mathsf{X}_k^\mathrm{U}}(\mathcal{X}_k^\mathrm{U})$,
    given by
\begin{align}
    \mathtt{m}_{H \to \mathsf{X}_k^\mathrm{U}}(\mathcal{X}_k^\mathrm{U})
    = & \int \mathtt{n}_{\mathsf{s}_k \to H}(\mathbf{s}_k)
    [1-\mathsf{p}_\mathrm{D}(\mathbf{s}_k,\mathcal{X}_k^\mathrm{U})]^{\mathcal{X}_k^\mathrm{U}} \mathrm{d} \mathbf{s}_k.
\end{align}
\item[\fbox{\footnotesize{22}}]
    The belief of the set-variables $\mathcal{X}_k^\mathrm{U}$ is denoted by $\mathtt{b}(\mathcal{Y}_k^j)$, computed by
\begin{align}
    \mathtt{b}(\mathcal{X}_k^\mathrm{U}) \propto \mathtt{m}_{F \to \mathsf{X}_k^\mathrm{U}}(\mathcal{X}_k^\mathrm{U})
    \mathtt{m}_{H \to \mathsf{X}_k^\mathrm{U}}(\mathcal{X}_k^\mathrm{U}),
\end{align}
    which follows the Bernoulli distribution. 
\end{enumerate}

\paragraph{Sensor}
    We compute 1 belief of the sensor state $\mathbf{s}_k$ using the messages from the predicted sensor state, previously detected targets, and newly detected targets.
\begin{enumerate}
\item[\fbox{\footnotesize{23}}]
    Messages from factor $I^j$ 
    to the linked vector variable $\mathbf{s}_k$ are denoted by $\mathtt{m}_{I^j \to \mathsf{s}_k}(\mathbf{s}_k)$.
    The messages $\mathtt{m}_{I^j \to \mathsf{s}_k}(\mathbf{s}_k)$, for $j\in \mathcal{J}_{k}$, are given by
\begin{align}
    \mathtt{m}_{I^j \to \mathsf{s}_k}(\mathbf{s}_k)
    = & \int 
    \sum_{i=0}^{I_{k-1}} \mathtt{n}_{\mathsf{d}_{k}^j \to I^j}(d_{k}^j=i)
    \mathtt{n}_{\mathsf{Y}_k^j \to I^j}(\mathcal{Y}_k^j) \notag\\ 
    &\times \tilde{l}(\mathbf{z}_k^j|\mathbf{s}_k,\mathcal{Y}_k^j,d_k^j=i) \delta \mathcal{Y}_{k}^j.
\end{align}
\item[\fbox{\footnotesize{24}}]
    Messages from factors $J^i$
    to the linked vector variable $\mathbf{s}_k$ are denoted by $\mathtt{m}_{J^i \to \mathsf{s}_k}(\mathbf{s}_k)$.
    The messages $\mathtt{m}_{J^i \to \mathsf{s}_k}(\mathbf{s}_k)$, for $i\in \mathcal{I}_{k-1}$, are given by
\begin{align}
     \mathtt{m}_{J^i \to \mathsf{s}_k}(\mathbf{s}_k)=& \int \sum_{j=0}^{J_k} \mathtt{n}_{\mathsf{c}_{k}^i \to J^i}(c_{k}^i=j) \mathtt{n}_{\mathsf{X}_k^i \to J^i}(\mathcal{X}_k^i)\notag \\
     & \times t(\mathcal{Z}_k^i|\mathbf{s}_k,\mathcal{X}_k^i,c_k^i=j)
     \delta \mathcal{X}_{k}^i.
\end{align}
    
\item[\fbox{\footnotesize{25}}]
    A message from $\mathcal{X}_k^\mathrm{U}$ to factor $H$ is denoted by $\mathtt{n}_{\mathsf{X}_k^\mathrm{U} \to H}(\mathcal{X}_k^\mathrm{U})$, and $\mathtt{n}_{\mathsf{X}_k^\mathrm{U} \to H}(\mathcal{X}_k^\mathrm{U})=\mathtt{m}_{F \to \mathsf{X}_k^\mathrm{U}}(\mathcal{X}_k^\mathrm{U})$.
\item[\fbox{\footnotesize{26}}]
    A message from factor $H$ to the sensor state $\mathbf{s}_k$ is denoted by $\mathtt{m}_{H \to \mathsf{s}_k}(\mathbf{s}_k)$, computed by
\begin{align}
    \mathtt{m}_{H \to \mathsf{s}_k}(\mathbf{s}_k) 
    = \int \mathtt{n}_{\mathsf{X}_k^\mathrm{U} \to H}(\mathcal{X}_k^\mathrm{U}) [1-\mathsf{p}_\mathrm{D}(\mathbf{s}_k,\mathcal{X}_k^\mathrm{U})]^{ \mathcal{X}_k^\mathrm{U}}
    \delta \mathcal{X}_k^\mathrm{U}
\end{align}
\item[\fbox{\footnotesize{27}}]
    A belief of the sensor variable $\mathbf{s}_k$ is denoted by $\mathtt{b}(\mathbf{s}_k)$, computed by
\begin{align}
    \mathtt{b}(\mathbf{s}_k) \propto
    & \,
    \mathtt{m}_{A \to \mathsf{s}_k}(\mathbf{s}_k) 
    \mathtt{m}_{H \to \mathsf{s}_k}(\mathbf{s}_k)
    \notag
    \\
    & \times \prod_{j\in \mathcal{J}_{k}}
    \mathtt{m}_{I^j \to \mathsf{s}_k}(\mathbf{s}_k) 
    \prod_{i \in \mathcal{I}_{k-1}}
    \mathtt{m}_{J^i \to \mathsf{s}_k}(\mathbf{s}_k).
\end{align}
\end{enumerate}

    Finally, we obtain the  marginal posterior densities, $f_\mathtt{u}(\mathbf{s}_k)=\mathtt{b}(\mathbf{s}_k)$, $f_\mathtt{u}^\mathrm{U}({\mathcal{X}}_{k}^\mathrm{U})=\mathtt{b}(\mathcal{X}_k^\mathrm{U})$, $f_\mathtt{u}^i(\mathcal{X}_{k}^i)=\mathtt{b}(\mathcal{X}_k^i)$ for $i \in \mathcal{I}_{k-1}$, $f_\mathtt{u}^i(\mathcal{X}_{k}^{I_{k-1}+j})=\mathtt{b}(\mathcal{X}_k^j)$ for $j \in \mathcal{J}_k$.
\newpage

\end{appendices}
\end{document}